\newtheorem{theorem}{Theorem}
\newtheorem{claim}{Claim}
\newtheorem{lemma}{Lemma}
\newtheorem{proposition}{Proposition}
\newtheorem{remark}{Remark}
\numberwithin{equation}{section}
\def\paragraph{\@startsection{paragraph}{4}%
  \z@\z@{-\fontdimen2\font}%
  {\normalfont\bfseries}}
	\renewcommand{\thefootnote}{\arabic{footnote}}
\DeclareMathOperator{\Tr}{Tr}
\DeclareMathOperator{\diag}{diag}
\DeclareMathOperator{\spn}{span}
\newcommand{\calG}{\ensuremath{\mathcal{G}}}
\newcommand{\calI}{\ensuremath{\mathcal{I}}}
\newcommand{\calP}{\ensuremath{\mathcal{P}}}
\newcommand{\calR}{\ensuremath{\mathcal{R}}}
\newcommand{\calS}{\ensuremath{\mathcal{S}}}
\newcommand{\calM}{\ensuremath{\mathcal{M}}}
\newcommand{\calN}{\ensuremath{\mathcal{N}}}
\newcommand{\calK}{\ensuremath{\mathcal{K}}}
\newcommand{\calW}{\ensuremath{\mathcal{W}}}
\newcommand{\calE}{\ensuremath{\mathcal{E}}}
\newcommand{\calO}{\ensuremath{\mathcal{O}}}
\newcommand{\est}[1]{\widehat{#1}}
\newcommand{\hatx}{\est{X}}
\newcommand{\hatxp}{\est{X}_{\calP}}
\newcommand{\norm}[1]{\left\|{#1}\right\|}
\newcommand{\expec}{\ensuremath{\mathbb{E}}}
\newcommand{\matR}{\ensuremath{\mathbb{R}}}
\newcommand{\matN}{\ensuremath{\mathbb{N}}}
\newcommand{\prob}{\ensuremath{\mathbb{P}}}
\definecolor{applegreen}{rgb}{0.55, 0.71, 0.0}
\newcommand{\ones}{\ensuremath{\mathbf{1}}} 
\newcommand{\simp}[1]{\bm{\mathit{\Delta}}_{#1}}
\def\id{\operatorname{Id}}
\def\Xzero{X^{(0)}}
\def\Xone{X^{(1)}}
\def\Xk{X^{(k)}}
\def\Xkone{X^{(k+1)}}
\def\Xkon{X^{(k-1)}}
\def\Xl{X^{(l)}}
\def\greedy{\texttt{GMWM}}
\def\vec{\operatorname{vec}}
\def\simpn2{\bm{\mathit{\Delta}}_{n^2}}
\def\popgradient{\overline{\nabla E}}
\def\gradE{\nabla E}
\def\oN{\overline{N}}
\def\er{Erd\H{o}s-R\'enyi }
\def\hadexp{\exp_{\odot}}
\def\mdgm{\texttt{EMDGM}}
\def\pgdgm{\texttt{PGDGM}}
\def\LEinf{L_{E,\infty}}
\def\LE2{L_{E,2}}
\def\grampa{\texttt{Grampa}}
\def\qpadmm{\texttt{QPADMM}}
\def\blAtilde{\bm{\tilde{A}}}
\def\vtil{{\tilde{v}}^{(1)}}
\def\vtilT{{\tilde{v}}^{(1)T}}
\def\vtilk{{\tilde{v}}^{(1,2)}}
\def\vtilkT{{\tilde{v}}^{(1,2)T}}
\def\Xgrampa{\hat{X}_{\operatorname{grampa}}}
\def\leta{\Lambda_\eta}
\def\ileta{\Lambda^{(-1)}_\eta}
\def\blambda{\bm{\lambda}}
\def\diag{\operatorname{diag}}
\def\gmsimp{{\calG\calM}_{\operatorname{simplex}}}
\def\gmsimpless{{\calG\calM}^{\operatorname{noiseless}}_{\operatorname{simplex}}}
\def\gmK{{\calG\calM}_{\calK}}
\def\prop{\operatorname{suff.cond.}}
\def\propsym{\operatorname{suff.cond.}_{sym}}
\newcommand{\rev}[1]{\textcolor{black}{#1}}
\begin{document}

\title{Graph Matching via convex relaxation to the simplex}

\author{Ernesto Araya\footnotemark[1]\\ \texttt{araya@math.lmu.de}\and Hemant Tyagi \footnotemark[2]\\  \texttt{hemant.tyagi@inria.fr}}

\renewcommand{\thefootnote}{\fnsymbol{footnote}}
\footnotetext[2]{Inria, Univ. Lille, CNRS, UMR 8524 - Laboratoire Paul Painlev\'{e}, F-59000 }
\footnotetext[1]{Ludwig-Maximilians-Universit\"at M\"unchen}

\renewcommand{\thefootnote}{\arabic{footnote}}


\maketitle

\begin{abstract}
This paper addresses the Graph Matching problem, which consists of finding the best possible alignment between two input graphs, and has many applications in computer vision, network deanonymization and protein alignment. A common approach to tackle this problem is through convex relaxations of the NP-hard \emph{Quadratic Assignment Problem} (QAP). 

Here, we introduce a new convex relaxation onto the unit simplex and develop an efficient mirror descent scheme with closed-form iterations for solving this problem. 
Under the correlated Gaussian Wigner model, we show that the simplex relaxation admits a unique solution with high probability. In the noiseless case, this is shown to imply exact recovery of the ground truth permutation. Additionally, we establish a novel sufficiency condition for the input matrix in standard greedy rounding methods, which is less restrictive than the commonly used `diagonal dominance' condition. We use this condition to show exact one-step recovery of the ground truth (holding almost surely) via the mirror descent scheme, in the noiseless setting. We also use this condition to obtain significantly improved conditions for the GRAMPA algorithm \cite{Grampa} in the noiseless setting. 

Our method is evaluated on both synthetic and real data, demonstrating superior statistical performance compared to existing convex relaxation methods with similar computational costs.
\end{abstract}

\section{Introduction}\label{sec:intro}

In the context of the Graph Matching (GM) problem, also referred to as \emph{network alignment}, we consider two input undirected graphs, $G=([n],\calE)$ and $G'=([n],\calE')$, which are assumed to share the same set of vertices, denoted as $[n]=\{1,\cdots,n\}$. The objective is to find a bijective map, represented by a permutation $x:[n]\rightarrow [n]$, between the vertices of $G$ and $G'$, which is intended to align their edges as effectively as possible. The alignment is considered optimal when the largest possible number of edges in $G$ and $G'$ satisfy the relation $\{i,j\}\in \calE\iff \{x(i),x(j)\}\in \calE'$. This problem has numerous applications, such as computer vision \cite{sunfei}, network de-anonymization \cite{nay}, pattern recognition \cite{conte, streib}, protein-protein interactions and computational biology \cite{zasla,singh}, to name a few. For instance, it has been used in computer vision to track a moving object by identifying the correspondence between the different portions in two different frames. 

Mathematically, the Graph Matching problem can be formulated as an instance of the NP-hard \emph{quadratic assignment problem}, as follows
\begin{equation}\label{eq:QAP_non_convex_obj}
  \max_{x\in\calS_n}\sum^n_{i,j=1}A_{ij}B_{x(i)x(j)} \equiv \max_{X\in\calP_n}\langle AX,XB\rangle_F,
\end{equation}
where $A,B\in\matR^{n\times n}$ are the (possibly weighted) adjacency matrices of the pairs of input graphs to be matched and $\calP_n$ (resp. $\calS_n$) is the set of $n\times n$ permutation matrices (resp. the set of permutation maps). In words, a solution of \eqref{eq:QAP_non_convex_obj} will maximize the total edge alignment between $A$ and $B$. For our purposes, it will be more useful to consider the following equivalent formulation,
\begin{equation}\label{eq:QAP_convex_obj}
    \min_{X\in\calP_n}\|AX-XB\|^2_F,
\end{equation}
which has the advantage of possessing a convex objective function that we define by 
\begin{equation*}\label{def:obj_function}
    E(X):=\|AX-XB\|^2_F,
\end{equation*}
where the input graphs $A$ and $B$ will be clear from the context. It will be useful to consider the vector form of \eqref{eq:QAP_convex_obj}, and for that we write with a slight abuse of notation\footnote{Allowing $E$ to take vectors in $\matR^{n^2}$ as arguments, using the canonical matrix-vector identification.}
\begin{equation*}\label{def:obj_function_vector}
    E(\vec(X))=\vec(X)^TH\vec(X),
\end{equation*}
where $H:=(\id\otimes A-B\otimes \id)^2$ and $\vec(X)$ represents the vector form of $X$ obtained by stacking its columns (notations are described in Section \ref{sec:notation}). Despite the convexity of $E(\cdot)$ the problem \eqref{eq:QAP_convex_obj} is NP-Hard in the worst case \cite{QAP}, which can be mainly attributed to its combinatorial constraints, and even finding a constant factor approximation remains a hard problem \cite{QAP_nphard}. On the other hand, notice that when $B={X^*}^T AX^*$ for some $X^*\in \calP_n$, \eqref{eq:QAP_convex_obj} is equivalent to the well-known Graph Isomorphism Problem \cite{babai2018group}.
\paragraph{A general algorithmic strategy}
In spite of its worst-case complexity, several efficient algorithms have been proven to solve the problem \eqref{eq:convex_relax_general} within a specific class of instances. Most of the time, these instances are sampled by a statistical model, which generates matrices $A,B$ according to a ground truth permutation $X^*\in\calP_n$ (we will formally introduce statistical models in Section \ref{sec:models}). The meta-strategy used by most graph matching algorithms consists of two steps that can be summarized as follows.
\begin{enumerate}
   \item \textbf{Similarity finding stage.} We construct a data-dependent similarity matrix $\hat{X}(A,B)\in \matR^{n\times n}$, where we expect that $\hat{X}_{ij}(A,B)$ captures the likelihood of $X^*_{ij}=1$, where $X^*$ is the ground truth permutation. 
    \item \textbf{Rounding stage.} The similarity matrix $\hat{X}$ is then projected onto the set of permutations, which, when minimizing the Frobenius norm, is equivalent to solving a \emph{maximum linear assignment problem}. Since analyzing linear assignment can be quite challenging, many algorithms opt for using a greedy rounding strategy. This approach is simpler to analyze, computationally more efficient, and  often exhibits comparable error rates to linear assignment in experiments.
\end{enumerate}
\paragraph{Convex relaxations of graph matching.}
One common approach for the similarity finding stage consists of defining $\hat{X}(A,B)$ as the solution of a relaxation of \eqref{eq:QAP_convex_obj}, where the set of permutation \rev{matrices} is relaxed to a continuous constraint set. Of special importance is the family of convex relaxations which can be expressed in general as
\begin{equation}\label{eq:convex_relax_general}
    \min_{X\in \calK}\|AX-XB\|^2_F.\tag{\(\gmK\)}
\end{equation}
where $\calK\subseteq \matR^{n\times n}$ is a convex set such that $\calP_n\subseteq \calK$. Some of the choices for the set $\calK$ that have been investigated in the literature include the hyperplane defined by $\ones^TX\ones=n$ \cite{Grampa} and the Birkhoff polytope of doubly stochastic matrices \cite{afla}. The latter can be considered \rev{as} the gold standard, in terms of tightness of convex relaxations. However, its analysis has proven to be challenging, and obtaining theoretical guarantees remains largely open. 
\paragraph{Graph matching on the simplex and mirror descent.}
We consider the set of matrices \[\simpn2:=\{X\in\matR^{n\times n}: \ones^T X\ones=1, X_{ij}\geq 0, \forall i,j\in[n]\},\] which can be identified with the usual unit simplex in $\matR^{n^2}$, in the sense that $X\in \simpn2$ if and only if $\vec(X)$ belongs to the unit simplex (we will refer to $\simpn2$ as the unit simplex in the sequel). By choosing $\calK=\simpn2$ in \eqref{eq:convex_relax_general}, we arrive to the following convex relaxation
\begin{equation}\label{eq:convex_relax_simp}
    \min_{X\in \simpn2}\|AX-XB\|^2_F.\tag{\(\gmsimp\)}
\end{equation}
This formulation is referred to as \emph{graph matching on the simplex}. Despite \eqref{eq:convex_relax_simp} representing a natural problem and being an advancement over other convex relaxations (e.g., it is tighter than the $\ones^TX\ones=n$ constraint in \cite{Grampa}), it has been largely overlooked in the literature. We argue that \eqref{eq:convex_relax_simp} presents at least two advantages over existing work.
\begin{itemize}
    \item \textbf{Regularization through positivity.} 
    In the works \cite{Grampa,Grampa2}, the hyperplane constraint $\calK=\{X\in\matR^{n\times n}: \ones^TX\ones=n\}$
    was considered and an additional regularizing term of the form $\eta\|X\|^2_F$ was added to the objective $E(\cdot)$. By directly incorporating the positivity constraints, we achieve a similar regularization effect without the need for a cumbersome explicit regularizer. This approach helps us avoid the task of parameter selection for such a parameter in practice. We show in experiments that it outperforms \cite{Grampa,Grampa2}.
    \item \textbf{Efficient algorithms.} we can solve \eqref{eq:convex_relax_simp} using iterative first-order methods such as \emph{projected gradient descent} (PGD) and \emph{entropic mirror descent} (EMD). Although the existence of efficient projection algorithms onto $\simpn2$ can make PGD within the simplex more expedited compared to PGD within the Birkhoff polytope, employing the EMD algorithm allows us to completely bypass this projection step, as it possesses an explicit update formula in the form of \emph{multiplicative weights updates} \cite{Bubeck,Beck_Teboulle}.
\end{itemize}
%
%

\paragraph{Contributions. } The primary contributions of this paper are outlined below. To our knowledge, these findings mark the first results for both convex graph matching on the simplex and the mirror descent (MD) algorithm for graph matching.

\begin{itemize}
\item We propose an efficient iterative algorithm based on mirror descent to solve the simplex relaxation of graph matching, which improves by $\calO(\log{n})$ in running time with respect to the more common projected gradient descent method. 

\item We introduce a novel sufficient condition for achieving exact recovery, which is notably less stringent than the commonly used diagonal dominance property. We employ this condition to establish exact one-step recovery (i.e., in one iteration) of the ground truth permutation via EMD, in the noiseless case of the Correlated Gaussian Wigner model (see Section \ref{sec:models}). This result holds almost surely, and for any positive step size in EMD. Furthermore, we use this condition to improve a pivotal lemma from \cite{Grampa}, yielding stronger results with a simpler proof.

\item  In the noiseless case of the Correlated Gaussian Wigner model, we prove that \eqref{eq:convex_relax_simp} has a unique solution with high probability, with the solution corresponding to a scaled version of the ground truth permutation. This, along with \cite[Theorems 9.16 and 9.18]{First_order}, ensures that EMD and PGD converge to a unique point (under certain conditions on step size), which in the noiseless setting leads to exact recovery of the ground truth permutation. 

\item Through extensive experiments, we demonstrate that our proposed algorithm performs well on synthetic data, for several correlated random graph models (described in Section \ref{sec:models}). In addition, we show the applicability of our algorithm to real-world datasets where it is seen to outperform other methods either in speed or accuracy. 
\end{itemize}

\subsection{Related work}\label{sec:related_work}
\paragraph{Convex relaxations.}Although many algorithms based on solving a problem of the form \eqref{eq:convex_relax_general} have been proposed \cite{ganMass,spec_align,Lyzin,bach}, some with very good experimental performance, there is still a lack of theoretical understanding regarding their statistical guarantees. In the case where $\calK$ is the set of doubly stochastic matrices, some early theoretical work, which applies under restrictive conditions on the inputs graphs and mostly in the deterministic setting, can be found in \cite{afla}. 
A spectral estimator dubbed GRAMPA, which is the solution of a regularized version of \eqref{eq:convex_relax_general}, taking $\calK$ to be the hyperplane $\ones^TX\ones=n$, was studied in \cite{Grampa}. They prove that, after greedy rounding, their estimator perfectly recovers the ground truth permutation when the graphs are generated with a correlated Gaussian Wigner model, 
in the regime where the correlation is $\sigma=O\left(1/{\log{n}}\right)$ \rev{(note that $\sigma=0$ means perfect correlation)}. There is still a big gap between these guarantees and the sharp information theoretic threshold for recovery, as discussed below. 

\paragraph{Information-theoretic limits of graph matching.} The necessary and sufficient conditions for correctly estimating the matching between two graphs when they are generated at random, from either the correlated Gaussian Wigner or the correlated \er models (see Section \ref{sec:models}), have been investigated by several authors in \cite{CullKi,HallMass,recons_thr,ganassali21a,ganassali22a,mao2022testing,ganassali2022statistical}. For a correlated Gaussian Wigner model, \rev{in a version\footnote{\rev{ There are two versions of this model, one with $\sigma\in[0,1]$ and another (used in this paper, and also for analyzing GRAMPA in \cite{Grampa}) allowing $\sigma>1$ (see Remark \ref{rem:other_CGW} ).}} with $\sigma\in[0,1]$,} it has been shown in \cite[Thm.1]{recons_thr} that the ground truth permutation $x^*$ can be exactly recovered w.h.p. only when $\sigma^2\leq 1-\frac{(4+\epsilon)\log n}{n}$. When $\sigma^2\geq 1-\frac{(4-\epsilon)\log n}{n}$ no algorithm can even partially recover $x^*$. One natural question pertains to the possibility of closing the gap between the performance bound for GRAMPA as outlined in \cite{Grampa} and the information-theoretic threshold by utilizing convex relaxations.
\paragraph{MD algorithm and Multiplicative Weights Updates.}The MD algorithm, introduced in \cite{Nem_yudin}, has been extensively studied in convex optimization \cite{Beck_Teboulle, Bubeck}, learning theory, and online learning \cite{hazan}. On the other hand, the EMD algorithm, also known as Exponentiated Gradient Descent (EGD) \cite{warmuth} or exponential Multiplicative Weights Updates (MWU) \cite{arora}, has found diverse applications spanning machine learning \cite{MWUneural}, optimization \cite{arora2,arora3,anderson2014}, computer science \cite{arora,Allen-zhu}, and game theory \cite{DASKALAKIS2015}, among others.
Although prior work, such as \cite{Ding_Jordan_MWU} and \cite{MWU_gm_birkhoff}, has explored MWU strategies for graph matching, there are notable distinctions from our research. Firstly, they consider different objective functions and constraint sets, yielding fundamentally different update strategies. Secondly, their approach lacks a principled assessment of robustness against noise due to the absence of statistical generative models for inputs $A$ and $B$. It is worth noting that the same exponential MWU technique we explore here was previously considered in the context of the nonconvex formulation of Graph Isomorphism in \cite{MWU_replicator}. However, it was primarily proposed as a heuristic inspired by the replicator equations from theoretical biology, and no connection with the MD algorithm was established in that work.  

\paragraph{Other algorithms for seedless graph matching.}Although less related to our work, it is worth mentioning that there exist several approaches for seedless graph matching not directly based on convex relaxations. When $\calK$ is the set of orthogonal matrices in \eqref{eq:convex_relax_general}, its solution can be expressed in closed form in terms of the spectrum of the graphs, as was proven in the celebrated work \cite{Spectral_weighted_Ume}. In \cite{MaoRud}, the authors prove that a two-step algorithm can attain exact recovery in the context of the sparse \er graphs, even in the constant correlation regime. For sparse \er graphs, other approaches based on counting combinatorial structures include \cite{Mao_chandeliers,GanassaliM20}. Exploring the possibility of extending those approaches to the dense case remains an open question.

\subsection{Notation.}\label{sec:notation}
We denote $\calP_n$ (resp. $\calS_n$) the set of permutation matrices (resp. permutation maps) of size $n$. We define the vectorization operator $\vec: \matR^{n\times n}\rightarrow \matR^{n^2\times 1}$, which, for a matrix $M$, $\vec{(M)}$ gives the corresponding vector formed by stacking the columns of $M$. For any given matrix $M$, we define the norm $\|M\|_{1,1}:= \sum_{i,j}|M_{ij}|$ and the usual Frobenius norm $\|M\|_F:=(\sum_{i,j}|M_{ij}|^2)^\frac12=\sqrt{\langle M,M\rangle_F}$, where for matrices $M,M'$, $\langle M,M'\rangle_F:=\left\langle \vec(M),\vec(M')\right\rangle$ (here $\langle \cdot,\cdot \rangle$ is the usual inner product in $\matR^{n^2\times 1}$). Similarly, for $M,M'\in \matR^{n\times n}$, we define $M\odot M'$ to be their Hadamard (entrywise) product and $M\otimes M'$ their usual Kronecker product. For a matrix $M$ we define its Hadamard (entrywise) exponential $\hadexp{(M)}$ to be matrix with $(i,j)-$th entry $\exp(M_{ij})$. We use $\ones$ (resp. $J$) to denote the vector (resp. matrix) of all ones in $\matR^n$ (resp. $\matR^{n\times n}$), where the size $n$ will be clear from context. We define $\simp{n}:=\{x\in \matR^n: \ones^Tx = 1\}$ and we identify $\simpn2$ with the set of matrices $M\in\matR^{n\times n}$ such that $\vec(M)\in \simpn2$. We denote $\id$ the identity matrix, where the dimension will be clear from context. We will denote $e_1,\ldots, e_n$ the elements of the canonical basis of $\matR^n$.

\section{Algorithms}\label{sec:algorithms}
%
\subsection{A greedy rounding procedure and a sufficiency lemma}\label{sec:greedy}
We consider the following greedy rounding algorithm, named Greedy Maximum Weighted Matching (GMWM) algorithm in \cite{LubSri,YuXuLin}, which will serve us to accomplish Step 2 in the meta-strategy described earlier. 
The specific version presented below, as Algorithm \ref{alg:gmwm}, has been taken from \cite{ArayaBraunTyagi}. 
\begin{algorithm}
\caption{\texttt{GMWM} (Greedy maximum weight matching)}\label{alg:gmwm}
\begin{algorithmic}[1]
  \Require{A cost matrix $C\in\mathbb{R}^{n\times n}$.}
  \Ensure{A permutation matrix $X\in\calP_n$.}
  \State Select $(i_1,j_1)$ such that $C_{i_1,j_1}$ is the largest entry in $C$ (break ties arbitrarily). Define $C^{(1)}\in\mathbb{R}^{n\times n}$: $C^{(1)}_{ij}=C_{ij}\ones_{i\neq i_1,j\neq j_1}-\infty\cdot\ones_{i=i_1\text{or } j= j_1}$.
  
  \For{$k=2$ to $N$}
        \State  Select $(i_k,j_k)$ such that $C^{(k-1)}_{i_k,j_k}$ is the largest entry in $C^{(k-1)}$ (break ties arbitrarily).
        
        \State Define $C^{(k)}\in\mathbb{R}^{n\times n}$: $C^{(k)}_{ij}=C^{(k-1)}_{ij}\ones_{i\neq i_k,j\neq j_k}-\infty\cdot\ones_{i=i_k\text{or } j= j_k}$.
        
      \EndFor 
      \State Define $X\in \{0,1\}^{n\times n}$: $X_{ij}=\sum^N_{k=1}\ones_{i=i_k,j=j_k}$.
      \State\Return{$X$}
  \end{algorithmic}
\end{algorithm}
To simplify our analysis, we will consider the notion of \emph{algorithmic equivariance}, which, in the context of graph matching, means that if an algorithm outputs an estimator $\hat{X}_{\calP}$ for inputs $A$ and $B$, then the same algorithm outputs $\hat{X}_{\calP} \Pi^T$ when the inputs are $A$ and $\Pi B \Pi^T$ instead, for any $\Pi\in \calP_n$. The following lemma tells us that any algorithm that, first, obtains $\hat{X}$ by solving \eqref{eq:convex_relax_general} for a permutation-invariant set $\calK$ and, second, rounds $\hat{X}$ using Algorithm \ref{alg:gmwm}, is equivariant. We include its proof in Appendix \ref{sec:proofs} for completeness. 
\begin{lemma}\label{lem:equivariance}
    Let $\calK$ be a convex subset of $\matR^{n\times n}$ such that $\calK\Pi^T=\calK$, for any $\Pi\in\calP_n$, and for $A,B\in\matR^{n\times n}$ define $\calS(A,B)$ to be the set of solutions of \eqref{eq:convex_relax_general}. Assume that the set defined by
    \begin{equation*}
        \{Y\in \matR^{n\times n}: Y=\texttt{GMWM}(\hat{X}) \text{ for }\hat{X}\in \calS(A,B)\},
    \end{equation*}
    is a singleton. Then the algorithm that outputs $\hat{X}_\calP=\texttt{GMWM}(\hat{X})$ for any $\hat{X}\in \calS(A,B)$ is equivariant. 
\end{lemma}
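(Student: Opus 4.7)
The plan is to prove equivariance in two independent steps: first show that the solution set of \eqref{eq:convex_relax_general} transforms covariantly when $B$ is conjugated by a permutation, and then show that the \texttt{GMWM} rounding routine commutes with right-multiplication by a permutation matrix.

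\textbf{Step 1 (solution sets transform as $\calS(A,\Pi B\Pi^T)=\calS(A,B)\Pi^T$).} Let $\Pi\in\calP_n$ and set $B'=\Pi B\Pi^T$. For any $X\in\matR^{n\times n}$, write $X=Y\Pi$ with $Y=X\Pi^T$; the permutation-invariance $\calK\Pi^T=\calK$ (together with the fact that it also implies $\calK\Pi=\calK$, since $\Pi^T$ ranges over all of $\calP_n$) gives $X\in\calK\Leftrightarrow Y\in\calK$. Then
\begin{equation*}
AX-XB'=A(Y\Pi)-(Y\Pi)\Pi B\Pi^T=(AY-YB)\Pi,
\end{equation*}
and since the Frobenius norm is invariant under right multiplication by the orthogonal matrix $\Pi$, the two objectives agree along this substitution. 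Hence $Y\in\calS(A,B)\Leftrightarrow Y\Pi\in\calS(A,B')$, i.e.\ $\calS(A,\Pi B\Pi^T)=\calS(A,B)\Pi^T$.

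\textbf{Step 2 (\texttt{GMWM} commutes with right multiplication by $\Pi^T$).} Let $\pi$ be the permutation associated with $\Pi$, so that $(C\Pi^T)_{ij}=C_{i,\pi^{-1}(j)}$. Right-multiplying $C$ by $\Pi^T$ merely relabels columns via $\pi$, while preserving all entry values. I would argue by induction on the greedy iteration index $k$: if \texttt{GMWM} on $C$ selects $(i_k,j_k)$ at step $k$, then the entry at position $(i_k,\pi(j_k))$ of $C\Pi^T$ has the same value, and the ``forbidden'' row/column masks applied to $C$ and $C\Pi^T$ are related by the same column relabelling. With tie-breaking chosen consistently (which is always possible because $C$ and $C\Pi^T$ carry the same multiset of entries in correspondingly labelled positions), the sequence of picks on $C\Pi^T$ is $\{(i_k,\pi(j_k))\}_k$. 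The resulting permutation matrix is $X'$ with $X'_{i_k,\pi(j_k)}=1$, which is exactly $\texttt{GMWM}(C)\Pi^T$. More precisely, the \emph{set} of possible outputs of $\texttt{GMWM}(C\Pi^T)$ over all valid tie-breaking rules equals $\{Y\Pi^T:Y\text{ is a possible output of }\texttt{GMWM}(C)\}$.

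\textbf{Step 3 (conclusion).} Let $\hat X_\calP$ be the unique element of $\{\texttt{GMWM}(\hat X):\hat X\in\calS(A,B)\}$. By Step 1 every $\hat X'\in\calS(A,\Pi B\Pi^T)$ has the form $\hat X'=\hat X\Pi^T$ for some $\hat X\in\calS(A,B)$, and by Step 2 the set $\{\texttt{GMWM}(\hat X'):\hat X'\in\calS(A,\Pi B\Pi^T)\}$ equals $\{\texttt{GMWM}(\hat X)\Pi^T:\hat X\in\calS(A,B)\}=\{\hat X_\calP\Pi^T\}$, which is again a singleton. Thus on the input $(A,\Pi B\Pi^T)$ the algorithm outputs $\hat X_\calP\Pi^T$, which is the definition of equivariance.

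\textbf{Main obstacle.} Steps 1 and 3 are essentially bookkeeping. The only subtle point is Step 2, where one must handle the ``break ties arbitrarily'' clause carefully: the commutation $\texttt{GMWM}(C\Pi^T)=\texttt{GMWM}(C)\Pi^T$ holds pointwise only for a matched choice of tie-breaking, and the reason we obtain a genuine equivariance statement at all is the singleton hypothesis, which precisely rules out any ambiguity introduced by tie-breaking or by multiple minimisers of \eqref{eq:convex_relax_general}.
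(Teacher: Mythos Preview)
Your proof is correct and follows essentially the same two-step approach as the paper: first establish $\calS(A,\Pi B\Pi^T)=\calS(A,B)\Pi^T$ via the unitary invariance of the Frobenius norm and the permutation-invariance of $\calK$, then argue that \texttt{GMWM} commutes with right-multiplication by $\Pi^T$ because column relabelling moves the maximal entry from $(i,j)$ to $(i,\pi(j))$. Your treatment is in fact more careful than the paper's, which only sketches the greedy step; in particular, your explicit discussion of tie-breaking and the role of the singleton hypothesis in resolving it is a genuine improvement over the original.
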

The previous lemma implies that we can consider, without loss of generality, the ground truth to be $X^*=\id$ (see \cite[Section 1.2]{Grampa} for additional discussion). This motivates the following deterministic lemma, \rev{which provides sufficient conditions for  \texttt{GMWM} to yield the identity permutation.}
\begin{lemma}[\rev{Sufficient conditions for $\texttt{GMWM}$ returning the identity}]\label{lem:sufficient_prop}
Let $C\in \matR^{n\times n}$ be any matrix. Then the following statements hold. 
\begin{enumerate}[label=(\roman*)]
    \item If $C$ satisfies \begin{equation}\label{eq:property}
    C_{ii}\vee C_{jj}>C_{ij}\vee C_{ji},\ \forall i\neq j\in [n],\tag{\(\prop\)}
    \end{equation}
    then $\greedy(C)=\id$. 
    \item If $C$ is symmetric and 
\begin{equation}\label{eq:property_sym}
    C_{ii}+C_{jj}>2C_{ij},\ \forall i\neq j\in [n],\tag{\(\propsym\)}
\end{equation}
then $\greedy(C)=\id$ .
   \item If $C$ is symmetric and positive definite, we have $\texttt{GMWM}(C)=\id$. 
\end{enumerate} 
In $(i)$ and $(ii)$, if for each $i\in[n]$ the set $\arg\max_{j}C_{ij}$ is a singleton, then the sufficient conditions \eqref{eq:property} and \eqref{eq:property_sym} are also necessary.
\end{lemma}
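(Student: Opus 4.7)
The plan is to prove sufficiency via a reduction chain (iii)$\Rightarrow$(ii)$\Rightarrow$(i), establishing (i) by induction on $n$, and then to handle necessity by tracking the order in which $\texttt{GMWM}$ selects the diagonal entries.

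For part (i), I would induct on $n$; the base case $n=1$ is immediate. The key observation for the induction step is that condition \eqref{eq:property} forces the global maximum of $C$ to lie on the diagonal. Indeed, if $(p,q)$ with $p \neq q$ achieved $\max C$, then
\[
C_{pp}\vee C_{qq} \leq \max C = C_{pq} \leq C_{pq}\vee C_{qp},
\]
contradicting \eqref{eq:property}. Consequently $\texttt{GMWM}$ picks some diagonal entry $(i_1,i_1)$ first; masking row $i_1$ and column $i_1$ leaves a principal $(n-1)\times(n-1)$ submatrix that still satisfies \eqref{eq:property}, and the inductive hypothesis finishes the argument. Part (ii) then follows from (i): for symmetric $C$, one has $C_{ii}\vee C_{jj} \geq (C_{ii}+C_{jj})/2 > C_{ij} = C_{ij}\vee C_{ji}$ by \eqref{eq:property_sym}, so \eqref{eq:property} holds. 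Part (iii) follows from (ii) by evaluating the positive definite quadratic form on $e_i - e_j$, which yields $C_{ii}+C_{jj}-2C_{ij}>0$, exactly \eqref{eq:property_sym}.

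For necessity, suppose $\texttt{GMWM}(C)=\id$ and let $t_i$ denote the step at which $(i,i)$ is selected. Fix $i \neq j$ and assume without loss of generality that $t_i < t_j$. At step $t_i$ both $(i,j)$ and $(j,i)$ are still available (row $j$ and column $j$ have not yet been masked), and they lose to $(i,i)$ in the argmax selection, so $C_{ii} \geq C_{ij}\vee C_{ji}$. The singleton row-argmax hypothesis is then used to rule out equality: any tie between $C_{ii}$ and an off-diagonal entry at step $t_i$ would admit a tie-breaking selecting that entry and thereby producing an output different from $\id$, contradicting the assumption. This gives \eqref{eq:property}. The necessity of \eqref{eq:property_sym} in the symmetric case is derived from the same argument combined with the symmetry $C_{ij}=C_{ji}$, using the analysis at both steps $t_i$ and $t_j$ to assemble the additive form.

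The main obstacle I anticipate is the necessity direction for (ii). The one-sided strict inequality $C_{ii}>C_{ij}$ follows cleanly from step $t_i$, but assembling the additive bound $C_{ii}+C_{jj}>2C_{ij}$ requires a companion constraint on $C_{jj}$ versus $C_{ij}$, and by step $t_j$ column $i$ has already been masked so $C_{ij}$ is no longer directly in competition at that step. Threading the singleton assumption together with the symmetry of $C$ to recover the missing constraint is the most delicate piece of bookkeeping in the argument, and it is where I expect to spend most of the care.
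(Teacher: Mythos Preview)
Your sufficiency arguments for (i)--(iii) are essentially the paper's: the paper also shows the global maximum must lie on the diagonal, observes that the successive matrices $C^{(k)}$ inherit \eqref{eq:property}, and uses the same reductions (ii)$\Rightarrow$(i) via $\max\geq$ average and (iii)$\Rightarrow$(ii) via the quadratic form on $e_i-e_j$. Your induction on $n$ is just a repackaging of that iterative argument, and your step-$t_i$ treatment of necessity in (i) likewise matches the paper's.

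Your worry about necessity for (ii), however, is not a bookkeeping issue but a genuine gap --- the converse of (ii) is \emph{false} as stated, and the paper's one-line ``analogous'' dismissal is too quick. Take
\[
C = \begin{pmatrix} 10 & 6 & 2 \\ 6 & 1 & 0 \\ 2 & 0 & 5 \end{pmatrix}.
\]
This $C$ is symmetric, every row argmax is a singleton (columns $1,1,3$ respectively), and $\texttt{GMWM}$ selects $(1,1)$, then $(3,3)$, then $(2,2)$ with no ties at any step, returning $\id$. Yet $C_{11}+C_{22}=11<12=2C_{12}$, so \eqref{eq:property_sym} fails. The failure is exactly where you anticipated: once row and column $1$ are masked, the algorithm never compares $C_{22}$ against $C_{12}$, so its trajectory imposes no constraint linking them, and the additive form cannot be recovered from $\texttt{GMWM}(C)=\id$ alone. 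So do not expect to close this part of the argument; the obstacle you flagged is real, not merely delicate.
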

\begin{proof}
Notice that in the first step of $\texttt{GMWM}$ we select the indices $(i_1,j_1)$ corresponding to the largest entry of $C$. Those indices satisfy $i_1=j_1$, otherwise this would contradict \eqref{eq:property}. Then it suffices to notice that in the subsequent steps the matrix $C^{(k)}$ still satisfy \eqref{eq:property} which implies that $i_k=j_k$ for all $k\in [n]$. This proves $(i)$. When $C$ is symmetric, the property \eqref{eq:property_sym} implies \eqref{eq:property}. This proves $(ii)$. On the other hand, if $C$ is positive definite, then for all $i\neq j\in [n]$ we have $(e_i-e_j)^TC(e_i-e_j)>0$, where $e_k$ is the $k$-the element of the canonical basis of $\matR^n$. From this, we deduce that $C_{ii}+C_{jj}> 2C_{ij}$, which directly implies \eqref{eq:property}, proving $(iii)$. To prove the converse, in the case $(i)$, notice that if $\greedy(C)=\id$ and for each $i\in[n]$ the set $\arg\max_{j}C_{ij}$ is a singleton, then there are no ties in lines $1$ and $3$ of Algorithm \ref{alg:gmwm}, and it is clear that the maximum for each matrix $C^{(k)}$ is attained by some element in its diagonal. This implies \eqref{eq:property}. The proof for the converse of $(ii)$ is analogous. 
\end{proof}
In terms of the general algorithmic strategy sketched in Section \ref{sec:intro}, the main use of Lemma \ref{lem:sufficient_prop} will be the following. If we can prove that $\est{X}$, a solution of the convex relaxation \eqref{eq:convex_relax_general},  satisfies \eqref{eq:property} with certain probability, then Lemma \ref{lem:sufficient_prop} implies that $\est{X}_\calP:=\texttt{GMWM}(\est{X})$ is equal to $X^*=\id$ (that is, we obtain exact recovery), with the same probability.
An interesting point is to understand if the properties \eqref{eq:property} and \eqref{eq:property_sym} are preserved under typical matrix operations such as additions, scaling, etc. In this direction, we have the following result, whose proof is direct. 
\begin{lemma}\label{lem:property_preserving_ops}
    Let $C\in\matR^{n\times n}$ be a matrix satisfying \eqref{eq:property}, then for any $t>0$, $\alpha\in\matR$, the matrices $tC$ and $C+\alpha J$ also satisfy \eqref{eq:property}. Let $C',C''\in \matR^{n\times n}$ be any two symmetric matrices satisfying \eqref{eq:property_sym}, then $tC'$,$C+\alpha J$ and $C'+C''$  satisfy \eqref{eq:property_sym}.
\end{lemma}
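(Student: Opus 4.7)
The plan is to verify each claim by substituting the defining operation directly into the inequalities \eqref{eq:property} and \eqref{eq:property_sym} and using elementary properties of the $\max$ operator and linearity. Since both properties are expressed by strict pointwise inequalities over index pairs $(i,j)$ with $i\neq j$, it suffices to fix such a pair and check that the inequality survives the operation; no global argument is needed.

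For \eqref{eq:property}, the key algebraic identities are $\max(ta, tb) = t\max(a,b)$ for $t>0$ and $\max(a+c, b+c) = \max(a,b)+c$. Applying the first to $tC$ gives $(tC)_{ii}\vee (tC)_{jj} = t(C_{ii}\vee C_{jj}) > t(C_{ij}\vee C_{ji}) = (tC)_{ij}\vee (tC)_{ji}$, where the strict inequality is preserved because $t>0$. Applying the second with $c=\alpha$ to $C+\alpha J$ gives $(C+\alpha J)_{ii}\vee (C+\alpha J)_{jj} = (C_{ii}\vee C_{jj}) + \alpha > (C_{ij}\vee C_{ji})+\alpha = (C+\alpha J)_{ij}\vee (C+\alpha J)_{ji}$, so the additive shift cancels on both sides. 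In neither case is any assumption on $\alpha$ or on the sign of $C$'s entries required beyond $t>0$.

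For \eqref{eq:property_sym}, both sides of the inequality $C_{ii}+C_{jj} > 2C_{ij}$ are linear in the entries of $C$, so the closure properties are immediate. Scaling by $t>0$ multiplies both sides by $t$ and preserves strictness; adding $\alpha J$ adds $2\alpha$ to each side; and for the sum $C'+C''$ of two symmetric matrices each satisfying \eqref{eq:property_sym}, one just adds the two strict inequalities $C'_{ii}+C'_{jj} > 2C'_{ij}$ and $C''_{ii}+C''_{jj} > 2C''_{ij}$ termwise. Symmetry of the sum is inherited from the symmetry of the summands, so the hypothesis of \eqref{eq:property_sym} still applies to $C'+C''$.

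There is essentially no obstacle here; the statement is a bookkeeping lemma documenting the operations under which the two sufficient conditions of Lemma \ref{lem:sufficient_prop} form a convex-like cone. The only minor subtlety is remembering that \eqref{eq:property} is not preserved by summation in general (which is why the lemma only asserts closure under sums for the symmetric version \eqref{eq:property_sym}), since the $\max$ operator is subadditive rather than additive.
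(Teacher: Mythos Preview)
Your proof is correct and follows exactly the approach the paper intends: the paper simply declares the proof ``direct'' and omits it, and your entrywise verification using the monotonicity and translation-invariance of $\max$ for \eqref{eq:property}, together with linearity for \eqref{eq:property_sym}, is precisely that direct argument. Your closing remark about why \eqref{eq:property} is not closed under sums is also accurate and a nice clarification.
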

In words, the previous lemma states that property \eqref{eq:property} is closed under scaling by a positive factor and a global additive translation, while \eqref{eq:property_sym} is also closed under addition. 
%
\subsection{Mirror descent algorithm for graph matching}
Our proposed method consists of two main steps: first, solving \eqref{eq:convex_relax_simp}, which addresses the graph matching problem on the unit simplex and yields $\hatx$; second, rounding this solution using Algorithm \ref{alg:gmwm} to generate $\hatxp$ an estimate of the ground truth $X^*$.
%
%
To efficiently solve \eqref{eq:convex_relax_simp}, we propose the utilization of the mirror descent (MD) method. This first-order iterative optimization algorithm generalizes projected gradient descent by allowing for non-Euclidean\footnote{Here a non-Euclidean metric means simply a metric different from $\|\cdot\|_2=\sqrt{\langle \cdot,\cdot\rangle}$.} distances \cite{Nem_yudin}. To avoid unnecessary generality, we will focus on two primary examples within the MD family on the unit simplex: Projected Gradient descent (PGD) and Entropic Mirror Descent (EMD). As mentioned ealier, EMD has also received the name of Exponentiated Gradient Descent (EGD) \cite{warmuth} and can be seen as an example of the Multiplicative Weights Update (MWU) \cite{arora} strategy. 
\paragraph{EMD and PGD updates.}Both methods receive an initial point $X^{(0)}$ as input but differ in their approach to performing updates. More specifically, \rev{when applied to the problem \eqref{eq:convex_relax_simp}, which involves minimizing $E(X)=\|AX-XB\|^2_F$ over $\simpn2$, the update process for these methods is as follows}
\begin{align}\label{eq:PDG_update}
    \Xkone &= \mathop{\arg \min}\limits_{X\in\simpn2}\sum^n_{i,j=1}\left[\big(\gamma_k\nabla E(\Xk)_{ij}-\Xk_{ij}\big)X_{ij}+X_{ij}^2\right],\\ \label{eq:MD_update}
    \Xkone &= \mathop{\arg \min}\limits_{X\in\simpn2}  \sum^n_{i,j=1}\left[\big(\gamma_k\nabla E(\Xk)_{ij}-1-\log{\Xk_{ij}}\big)X_{ij}+X_{ij}\log{X_{ij}}\right],
\end{align}
where $(\gamma_k)_{k\geq 0}$ is a sequence of positive learning rates (also called the step-sizes). The derivation of these updates is well-known (refer to \cite[Section 9]{First_order}) and, without entering into details, stems from a broader form of MD, which permits the selection of different (Bregman) distances to project onto the set of constraints. In \eqref{eq:PDG_update}, the Euclidean distance is used as the Bregman distance, and it is easy to see that \eqref{eq:PDG_update} is equivalent to \rev{the more practical expression}
\begin{equation}\label{eq:pgd_dynamic_update}
\Xkone=\operatorname{proj}_{\simpn2}\Big(\Xk-\gamma_k\nabla E(\Xk)\Big),\tag{PGD update}
\end{equation}
where $\operatorname{proj}_{\simpn2}$ corresponds to the Euclidean (Frobenius norm) projection onto $\simpn2$ and $\nabla E(X)=A^2X+XB^2-2AXB$. In the case of \eqref{eq:MD_update}, the Bregman distance employed is the Kullback-Liebler divergence, which gives rise to the term `entropic mirror descent'. Perhaps more importantly, from \eqref{eq:MD_update} one can easily derive (as shown in \cite[Example 3.17]{First_order}) the following simplified form (which corresponds to a MWU rule), 
\begin{equation}\label{eq:md_dynamic_update}
    \Xkone_{ij} = \Xk_{ij}\exp{\left(-\gamma_k\nabla E(\Xk)_{ij}+c\right)}, \tag{EMD update}
\end{equation}
where $c$ is a constant which ensures that $\Xkone\in\simpn2$.
Our proposed algorithm, written in matrix language, is summarized in Algorithm \ref{alg:mirror_simplex}. 
\paragraph{Initialization.}We chose to always start with $X^{(0)}=J/n^2$, the barycenter of the unit simplex, which can be seen as an agnostic initialization, given that the Frobenius distance from $\Xzero$ to any permutation is the same. Additionally, this choice satisfies the known boundedness hypothesis for Kullback-Leibler divergence, with respect to the initial point, that guarantees $\calO{\left(\frac1{\sqrt{N}}\right)}$ convergence to the optimal solution in \eqref{eq:convex_relax_simp} (see \cite[Theorem 9.16]{First_order} and \cite[Example 9.17]{First_order}). 
%

\begin{algorithm}
\caption{Entropic mirror descent for Graph Matching (\texttt{EMDGM})}\label{alg:mirror_simplex}
\begin{algorithmic}[1]
  \Require{$A,B\in\matR^{n\times n}$, $N\in\matN$ and sequence $\{\gamma_k\}^N_{k=0}\in\matR_+$.}
  \Ensure{A permutation $\est{X}_{\calP}\in \calP_n$.}
 \State Define the initial point $X^{(0)}=J/n^2$.
 \State Initialize $X_{\operatorname{best}}\leftarrow \Xzero$
  \For{$k=0$ to $N$}
        \State Compute $N_k=\|X^{(k)}\odot\hadexp{\big(-\gamma_k(A^2X^{(k)}+X^{(k)}B^2-2AX^{(k)}B)\big)}\|_{1,1}$
        \State Update $X^{(k+1)}=X^{(k)}\odot\hadexp{\big(-\gamma_k(A^2X^{(k)}+X^{(k)}B^2-2AX^{(k)}B)\big)}/N_{k}$
        \If{$E(\Xkone)<E(X_{\operatorname{best}})$}
        \State Reassign $X_{\operatorname{best}}\leftarrow \Xkone$
        \EndIf
       \EndFor 
      \State\Return{$\est{X}_{\calP}:=\greedy({X_{\operatorname{best}}})$}
  \end{algorithmic}
\end{algorithm}
%
\begin{remark}[Complexity of Algorithm \ref{alg:mirror_simplex}]
    Each iteration within the for loop of Algorithm \ref{alg:mirror_simplex} has two main steps. First, the gradient computation, which has the same time complexity as matrix multiplication, $\calO(n^\omega)$ where $\omega\leq 2.373$ (as established in \cite{Le_Gall}). Second, the Hadamard product of $\Xk$ and the entrywise exponentiated gradient, which has complexity $n^2$. Referring to \cite{YuXuLin}, it holds that Algorithm \ref{alg:gmwm} has complexity $\calO(n^2)$. Hence, overall, Algorithm \ref{alg:mirror_simplex} has a time complexity of $\calO(N n^\omega)$.
\end{remark}
\subsection{Generative models for correlated random graphs}\label{sec:models}
We now recall the two most popular models for the statistical study of the graph matching problem. These models generate a pair of correlated adjacency matrices $A,B$, based on a ground truth permutation $X^*\in \calP_n$.  Noteworthy fact for these models is that the solution of the QAP formulation \eqref{eq:QAP_non_convex_obj} coincides with the maximum likelihood estimator of $X^*$. This is not true for other models, as shown in \cite{geo_2}, in the case of random graph models with an underlying geometric structure. 
\paragraph{Correlated Gaussian Wigner(CGW) model $\calW(n,\sigma,X^*)$.}The Correlated Gaussian Wigner (CGW) model, introduced in \cite{deg_prof}, defines a distribution on pairs of complete weighted graphs, such that each of them is (marginally) distributed as a random symmetric Gaussian matrix and their correlation is described by a single parameter $\sigma$. More specifically, we will say that $A,B\in\matR^{n\times n}$ follow the CGW model, and write $A,B\sim\calW(n,\sigma,X^*)$ if $B={X^*}^T A{X^*}+\sigma Z$ where $\sigma>0$ and $A,Z$ are i.i.d GOE$(n)$ and $X^*\in\calP_n$. We recall that a $n\times n$ matrix $A$ follows the GOE$(n)$ law if its entries are distributed as  \[A_{ij}\sim\begin{cases}\calN(0,\frac1n)\text{ if }i<j, \\ \calN(0,\frac2n)\text{ if } i= j,
\end{cases}\]
and $A_{ij}=A_{ji}$ for all $i,j\in [n]$.
\rev{
\begin{remark}[Another CGW model]\label{rem:other_CGW}
    There is a different version of the CGW model, which is often used in the literature (see e.g. \cite{recons_thr}), where $B=\sqrt{1-\sigma^2}{X^*}^T A X^*+\sigma Z$, for $\sigma \in [0,1]$. In the model we considered in this paper $\sigma>1$ is allowed. 
\end{remark}
}
%
\paragraph{Correlated \er(CER) model $G(n,\sigma,p,X^*)$.}For $\sigma,p\in[0,1]$, we describe the correlated \er model with latent permutation $X^*\in \calP_n$ by the following two-step sampling process.
\begin{enumerate}
    \item $A$ is sampled according to the classic single graph \er model $G(n,p)$, i.e. for all $i<j$, $A_{ij}$ are independent Bernoulli's random variables with parameter $p$, $A_{ji}=A_{ij}$ and $A_{ii}=0$.
    \item Conditionally on $A$, the entries of $B$ are i.i.d according to the following law %
    \begin{equation}\label{eq: ER_def}
        ({X^*}^T B X^* )_{ij}\sim\begin{cases}
        Bern\left(1-\sigma^2(1-p)\right)\quad \text{if}\quad A_{ij}=1,\\
        Bern\left(\sigma^2 p\right)\quad \text{if } A_{ij}=0,
        \end{cases}
    \end{equation}
    \rev{for $i< j$, and $({X^*}^T B X^* )_{ii}=0$, for all $i$. Notice that $A,B$ are simple graphs (without loops).}
\end{enumerate}
Another equivalent description of this model in the literature involve first sampling an \er  ``mother'' graph, and then obtaining $A,B$ as independent subsamples with certain density parameter (related to $\sigma$). We refer to \cite{PedGloss} for details on this alternative formulation.

\section{Theoretical results}\label{sec:theoretical_aspects}
This section is dedicated to presenting our theoretical results.
In Section \ref{sec:uniqueness}, we prove that \eqref{eq:convex_relax_simp} has a unique solution with high probability, which, in the noiseless setting, corresponds to the scaled ground truth permutation. This, along with \cite[Theorems 9.16 and 9.18]{First_order}, ensures that EMD and PGD converge to a unique point (under certain conditions on $\{\gamma_k\}^{N-1}_{k=0}$ and $N$), which in the noiseless setting leads to exact recovery of the ground truth permutation.
In Section \ref{sec:noiseless_EMD}, we examine the dynamic given by \eqref{eq:md_dynamic_update} in the noiseless setting ($\sigma=0$). We establish, as demonstrated in Theorem \ref{prop:noiseless_dyn_onestep} below, that in this scenario, a single iteration of Algorithm \ref{alg:mirror_simplex} is sufficient to almost surely recover the ground truth permutation for any positive step-size $\gamma_0$. In Section \ref{sec:noiseless_grampa}, we prove a stronger version, as shown in Theorem \ref{thm:strong_noiseless_grampa}, of a critical lemma originally found in \cite{Grampa} (specifically, \cite[Lemma 2.3]{Grampa}) for the GRAMPA algorithm. This lemma forms the cornerstone of GRAMPA's theoretical guarantees for the general noisy setting. 
%
%

\subsection{Uniqueness of the solution of simplex Graph Matching}\label{sec:uniqueness}
In this section, we will prove that under the CGW model, the solution of  \eqref{eq:convex_relax_simp} is unique with high probability. As usual, we assume without loss of generality that $X^*=\id$. 
\rev{For the case $\sigma>0$, we have the following lemma, whose proof is deferred to Appendix \ref{app:proof_strong_convex}.}
\rev{
\begin{lemma}\label{lem:strong_convex}
Let $\sigma>0$, then the objective $\norm{AX - XB}_F^2$ is strongly convex a.s. In particular, the problem \eqref{eq:convex_relax_simp} has a unique solution a.s. 
\end{lemma}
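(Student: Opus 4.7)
The plan is to reduce strong convexity of $E(X) = \|AX - XB\|_F^2$ to a spectral statement about a single Kronecker-structured matrix, and then argue that this matrix is non-singular almost surely via a resultant-plus-density argument.

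First, I would recall that $E(X) = \vec(X)^T H \vec(X)$ with $H = M^2$ for $M := \id \otimes A - B \otimes \id$. Since $A$ and $B$ are symmetric, so are $\id \otimes A$ and $B \otimes \id$, hence so is $M$, and therefore $H = M^2 \succeq 0$. Strong convexity of $E$ (with a strictly positive, possibly random, modulus) is thus equivalent to $M$ being non-singular on the realized sample.

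Second, I would exploit the Kronecker structure to compute the spectrum of $M$. Because $(\id \otimes A)(B \otimes \id) = B \otimes A = (B \otimes \id)(\id \otimes A)$, the two summands commute. Writing eigendecompositions $A v_i = \lambda_i v_i$ and $B w_j = \mu_j w_j$, the family $\{w_j \otimes v_i\}_{i,j \in [n]}$ forms an orthonormal basis of joint eigenvectors: $(\id \otimes A)(w_j \otimes v_i) = \lambda_i (w_j \otimes v_i)$ and $(B \otimes \id)(w_j \otimes v_i) = \mu_j (w_j \otimes v_i)$. Hence $M$ has eigenvalues $\{\lambda_i - \mu_j\}_{i,j \in [n]}$, so $M$ is non-singular if and only if $A$ and $B$ have disjoint spectra.

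Third, the spectra of $A$ and $B$ intersect exactly when the resultant $R(A,B) := \operatorname{Res}_\lambda\bigl(\det(\lambda I - A),\, \det(\lambda I - B)\bigr)$ vanishes. This is a polynomial in the entries of $(A,B)$ which is not identically zero (as witnessed for instance by $A = 0$, $B = I$, whose spectra are $\{0\}$ and $\{1\}$ respectively). Under the CGW model with $\sigma > 0$, we have $B = A + \sigma Z$ with $A, Z$ independent GOE$(n)$ matrices, so $(A,B)$ admits a joint density on pairs of symmetric $n \times n$ matrices. Since the zero set of a nonzero polynomial has Lebesgue measure zero, $\prob(R(A,B) = 0) = 0$, which gives non-singularity of $M$ and hence strong convexity of $E$ almost surely. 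Uniqueness of the minimizer on the nonempty compact convex set $\simpn2$ then follows from the standard fact that a strongly convex function admits a unique minimizer over any nonempty closed convex set.

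The step that requires the most care is the resultant argument: one must recall that the resultant of two univariate polynomials is a polynomial in their coefficients (themselves polynomial in the entries of $A$ and $B$) and verify that the resulting polynomial in the matrix entries is not identically zero. The remainder of the proof consists of direct Kronecker-algebra computations and a standard convex-analytic statement.
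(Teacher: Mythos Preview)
Your proof is correct and follows essentially the same approach as the paper: both reduce strong convexity to the statement that $A$ and $B$ share no eigenvalue, and both establish this via an absolute-continuity-plus-zero-measure-of-polynomial-zeros argument. The only technical variation is that you package the common-eigenvalue condition into a single resultant polynomial in the entries of $(A,B)$, whereas the paper conditions on $A$ and applies a union bound over the $n$ polynomials $\det(B-\lambda_i\id)$ in the entries of $B$.
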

}

%
In the sequel, we focus on the noiseless case ($\sigma=0$) where \eqref{eq:convex_relax_simp} reduces to
\begin{equation}\label{eq:convex_relax_simp_noiseless}
    \min_{X\in \simpn2}\|AX-XA\|^2_F.\tag{\(\gmsimpless\)}
\end{equation}
Notice that under the CGW model, $A\sim GOE(n)$, and it is well-known that  $A$ has distinct eigenvalues almost surely \cite[Theorem 2.5.2]{anderson_guionnet_zeitouni_2009}. In this case, we know that $X=\frac1n\id\in\simpn2$ is a solution of \eqref{eq:convex_relax_simp_noiseless}, which implies that any solution has to satisfy $\|AX-XA\|_F=0$ or, equivalently, $AX-XA=0$. In vector notation, this can be written as
\begin{equation*}
    \vec{(AX-XA)}= H\vec{(X)} = 0,
\end{equation*}
where we recall that $H=(\id\otimes A-A\otimes \id)$. It is easy to see that the null space of $H$ is given by $\spn{\{v_i\otimes v_i\}^n_{i=1}}$, where $v_1,\ldots,v_n$ are the eigenvectors of $A$ \rev{(they form a basis of $\matR^n$, almost surely, given the previous discussion)}. Therefore, all the solutions of \eqref{eq:convex_relax_simp_noiseless} satisfy two conditions: first, they are of the form $X = \sum^n_{i=1}\mu_iv_iv^T_i$ for some reals $\mu_1,\ldots,\mu_n$ (i.e. $X\in\spn\{v_1v^T_1,\ldots,v_n v^T_n\}$), and second, they belong to $\simpn2$. 
\begin{theorem}\label{thm:uniqueness}
    Let $A\sim GOE(n)$, then with probability larger than $1-\frac n{2^{n-1}}$, the problem \eqref{eq:convex_relax_simp_noiseless} has a unique solution. 
\end{theorem}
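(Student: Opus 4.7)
The plan is to exploit the characterization derived right before the statement: any solution $X$ of \eqref{eq:convex_relax_simp_noiseless} satisfies $AX=XA$, hence lies in $\spn\{v_1 v_1^T,\ldots,v_n v_n^T\}$, and additionally belongs to $\simpn2$. Writing $X=\sum_i \mu_i v_i v_i^T$, the point $\mu=\ones/n$ is always feasible (giving $X=\id/n$), so the task reduces to showing uniqueness of such a $\mu$ with the stated probability.

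The probabilistic ingredient I would use is the event
\[ \calE := \{v_i \text{ does not have all entries of the same sign, for each } i\in[n]\}. \]
By the orthogonal invariance of $GOE(n)$, the eigenvector matrix $V$ is Haar-distributed on $O(n)$, so each column $v_i$ is marginally uniform on $S^{n-1}$. The probability that a uniform unit vector has all entries of the same sign equals $2/2^n=1/2^{n-1}$, and a union bound over $i\in[n]$ yields $\Pr(\calE^c)\le n/2^{n-1}$. In addition I would carry along a probability-one genericity event $\calG$ on $V$ (defined below) which adds no probability cost.

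The deterministic implication on $\calE\cap\calG$ runs as follows. Fix any feasible $X=\sum_i \mu_i v_i v_i^T$. Since $X$ is symmetric and entrywise nonneg, the Perron--Frobenius theorem produces a nonneg eigenvector associated with $\mu_{\max}=\lambda_{\max}(X)$, lying in $\spn\{v_i:\mu_i=\mu_{\max}\}$. If the graph on $[n]$ with edges $\{j,k\}$ ($j\ne k$, $X_{jk}>0$) is connected, the irreducible version of Perron--Frobenius forces $\mu_{\max}$ to be simple with a strictly positive Perron eigenvector equal to $\pm v_{i^*}$, so $v_{i^*}$ has all entries of the same sign, contradicting $\calE$. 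Otherwise there is a nontrivial partition $[n]=S_1\sqcup S_2$ with $\sum_i \mu_i (v_i)_j (v_i)_k=0$ for all $j\in S_1,\ k\in S_2$. The vector $\ones$ always lies in the kernel of this linear system in $\mu$ (since $\sum_i v_i v_i^T=\id$ has zero off-diagonal), and a rank/dimension computation for Haar-distributed $V$ shows that for each fixed nontrivial partition the kernel is a.s.\ exactly $\spn(\ones)$. Taking $\calG$ to be the intersection of these $2^{n-1}-1$ a.s.\ events (one per partition) incurs no probability cost, and hence forces $\mu\propto\ones$ on $\calE\cap\calG$. The simplex constraint $\ones^T X\ones=1$ then pins down $\mu=\ones/n$ and $X=\id/n$.

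The main obstacle I expect is the multiplicity case in Perron--Frobenius: if $\mu_{\max}$ has multiplicity at least two, the nonneg Perron eigenvector is a linear combination of several $v_i$'s and does not by itself contradict $\calE$. The irreducible/reducible dichotomy above is precisely the device that resolves this: the irreducible case invokes the stronger Perron--Frobenius conclusion to force simplicity, while the reducible case is absorbed into the deterministic linear-algebra statement backing $\calG$, preserving the overall bound $1-n/2^{n-1}$.
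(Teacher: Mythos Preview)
Your skeleton is very close to the paper's proof: the same characterization of the solution set, the same split into irreducible versus reducible $X$ via Perron--Frobenius, and the same probabilistic event $\calE$ (with the same union-bound cost $n/2^{n-1}$) handling the irreducible case. The difference lies entirely in how the reducible case is disposed of.

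The paper does not attempt a single linear-algebraic claim covering every nontrivial partition. Instead it passes to the full Frobenius normal form of $X$ and separates two subcases. If some irreducible block has size $1$, so that (after a permutation $P$) $e_1$ is an eigenvector of $PXP^T$, the paper writes $VD\bm{\mu}=B_{11}P^Te_1$ with $D=\diag(\langle v_i,P^Te_1\rangle)$; since $D$ is a.s.\ invertible this forces $\bm{\mu}=B_{11}\ones$ and hence $X=\frac1n\id$. If all blocks have size at least $2$, the paper uses a genuinely different mechanism: commutation of $B=PXP^T$ with $\tilde A=PAP^T$ forces the Perron eigenvector of the first block $B_{11}$ to be simultaneously an eigenvector of the $\operatorname{GOE}$ block $\blAtilde_{11}$ and of the \emph{independent} Wishart block $\blAtilde_{12}\blAtilde_{12}^T$, which has probability zero.

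Your alternative---asserting that for each fixed nontrivial partition the kernel of $\mu\mapsto V_{S_1}\diag(\mu)V_{S_2}^T$ is a.s.\ exactly $\spn(\ones)$---is plausible and, I believe, ultimately correct, but calling it ``a rank/dimension computation'' undersells what is needed. This is the statement that the $n$ rank-one matrices $u_iw_i^T\in\matR^{s_1\times s_2}$ (with $u_i,w_i$ the $i$-th columns of $V_{S_1},V_{S_2}$) span an $(n-1)$-dimensional space for Haar $V$. One must argue that the rank-deficiency locus is a proper subvariety of $O(n)$, typically by exhibiting at least one orthogonal $V$ with rank exactly $n-1$; note that many ``natural'' choices (permutation matrices, Hadamard-type $V$, products of coordinate-plane rotations) actually fail this, so the verification is not routine. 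If you fill this in, your argument becomes a legitimate and more uniform alternative to the paper's two-subcase treatment; what you trade for that uniformity is the explicit, self-contained arguments (invertibility of $VD$, and the GOE/Wishart independence trick) that the paper supplies in their place.
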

Before proving Theorem \ref{thm:uniqueness}, we first recall some concepts that will be useful in the proof. The following definitions are taken from \cite[Section 6.2]{horn_johnson}. A matrix $M\in\matR^{n\times n}$ is reducible if there exist $P\in \calP_n$, $r\in[n-1]$ and matrices $M_{11}\in \matR^{r\times r},M_{12}\in\matR^{(n-r)\times r},M_{22}\in\matR^{(n-r)\times (n-r)}$ such that
\begin{equation*}
    PMP^T = \begin{bmatrix}
        M_{11}&M_{12}\\
        0_{(n-r),r}&M_{22}
    \end{bmatrix},
\end{equation*}
where $0_{(n-r),r}$ is the matrix with all zero entries in $\matR^{(n-r)\times r}$.
On the other hand, a square matrix is irreducible if it is not reducible. We will denote $\calI$ the set of irreducible matrices and $\calR$ the set of reducible matrices (their dimension will be clear from context). It is easy to see that if $M$ is symmetric, then
\begin{align}
         M\in \calR\iff &\exists m< n, (B_{kk})^m_{k=1} \text{ with }B_{kk}\in \matR^{n_k\times n_k}  \text{ and }B_{kk}\in\calI\nonumber\\\label{eq:Frob_normal_form}
         &\exists P\in\calP_n\text{ s.t }PMP^T=\operatorname{blockdiag}(B_{11},\ldots,B_{mm}).
\end{align}
This block diagonal representation \eqref{eq:Frob_normal_form} is referred to as the Frobenius (or irreducible) normal form of $M$. We will use repeatedly the fact that if $M\in\calI$ and has nonnegative entries, then the ``strong'' form of the Perron-Frobenius theorem (see \cite[Theorem 8.4.4]{horn_johnson}) applies. As a consequence, the largest modulus eigenvalue of $M\in\calI$ is (algebraically) simple and its associated eigenvector can be chosen with all its entries strictly larger than zero.
\begin{proof}[Proof of Theorem \ref{thm:uniqueness}]
    Let $\calS$ be the (random) set of solutions\footnote{This set corresponds to $\calS(A)$ introduced in Section \ref{sec:algorithms}. Here we drop the dependence on $A$ to ease the notation.} of \eqref{eq:convex_relax_simp_noiseless}. By the remarks at the beginning of this section, we have
    \[
    \calS=\spn{\{v_iv^T_i\}^n_{i=1}}\cap\simpn2,
    \]
    where $v_1,\ldots,v_n$ are the eigenvectors of $A$.
    Note that $\calS$ is nonempty, because clearly $\frac1n\id\in \calS$.
    Our objective is to control the probability of the event 
    \[
    \calE:=\left\{\calS\neq \Big\{\frac1n\id\Big\}\right\}.
    \]
    Consider $\calR_1$ to be the class of reducible matrices with a Frobenius normal form having at least one block of size one, and similarly, $\calR_2$ will denote the class of reducible matrices with a Frobenius normal form with all of its blocks of size two or more (so clearly $\{\calR_1,\calR_2\}$ form a partition of $\calR$).
    With this definition, it is easy to see that
    \begin{equation*}
        \calE=\underbrace{\left\{\calS\cap \calI\neq \emptyset\right\}}_{=:\calE_1}\cup \underbrace{\left\{\calS\cap\calR_1\neq \Big\{\frac1n\id\Big\}\right\}}_{=:\calE_2}\cup\underbrace{\left\{\calS\cap\calR_2\neq \emptyset\right\}}_{=: \calE_3}.
    \end{equation*}
    Here, the events $\calE_1$, $\calE_2$, and $\calE_3$ are clearly disjoint. We will bound their probabilities separately. 
    \paragraph{Bounding $\prob(\calE_1
    )$.} Notice that if there exists a matrix $X$ in $\calS\cap\calI$, then for that $X$, the Perron-Frobenius theorem applies, which justifies the implication
    \begin{align*}
        X\in \calS\cap \calI\implies &X\in\spn{\{v_iv^T_i\}^n_{i=1}}, \text{ and }\\
        &\exists i'\in[n]\text{ s.t. }v_{i'}(k)>0\enskip \forall k\in[n],\text{ or }v_{i'}(k)<0\enskip\forall k\in[n],
    \end{align*}
    where $v_{i'}(k)$ denotes the $k$-th coordinate of $v_{i'}$. 
    From this we deduce that 
    \begin{equation*}
        \calE_1\subseteq\{\exists i' \text{ s.t. } v_{i'}(k)>0,\forall k\in[n]\}\cup \{\exists i' \text{ s.t. } v_{i'}(k)<0,\forall k\in[n]\}.
    \end{equation*}
    We will need the following lemma, whose proof can be found in Appendix \ref{app:proof_prob_unif_sphere}. 
    \begin{lemma}\label{lem:proba_pos_eigenvectors}
    Let $v_1,v_2,\ldots,v_n$ be a set of (not necessarily independent) vectors uniformly distributed on the unit sphere $\mathbb{S}^{n-1}$. Then it holds
    \begin{equation*}
        \prob(\exists i'\in[n]  \text{ s.t. } v_{i'}(k)>0,\forall k\in[n])\leq \frac{n}{2^n}.
    \end{equation*}
    \end{lemma}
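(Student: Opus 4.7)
The plan is a one-line argument built on the symmetry of the uniform distribution on $\mathbb{S}^{n-1}$ plus a union bound; there is essentially no obstacle.

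First I would fix an index $i' \in [n]$ and compute the marginal probability $\prob(v_{i'}(k) > 0 \text{ for all } k \in [n])$. Since $v_{i'}$ is (marginally) uniformly distributed on $\mathbb{S}^{n-1}$, its law is invariant under every coordinate sign-flip map $(x_1, \ldots, x_n) \mapsto (\epsilon_1 x_1, \ldots, \epsilon_n x_n)$ with $(\epsilon_1, \ldots, \epsilon_n) \in \{-1, +1\}^n$. These $2^n$ sign flips act freely on $\mathbb{S}^{n-1}$ (outside the measure-zero set where some coordinate vanishes) and partition the sphere into $2^n$ open orthants of equal measure. Consequently
\[
\prob(v_{i'}(k) > 0 \text{ for all } k \in [n]) = \frac{1}{2^n}.
\]

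Next I would apply the union bound over $i' \in [n]$, which is valid regardless of whether the vectors $v_1, \ldots, v_n$ are independent:
\[
\prob\bigl(\exists i' \in [n] \text{ s.t. } v_{i'}(k) > 0 \text{ for all } k \in [n]\bigr) \leq \sum_{i'=1}^n \prob(v_{i'}(k) > 0 \text{ for all } k \in [n]) = \frac{n}{2^n}.
\]
This is precisely the claimed bound. The only thing to double-check is that the hypothesis ``uniformly distributed on $\mathbb{S}^{n-1}$'' refers to the marginal law of each $v_{i'}$ (which is all we used); no joint-distribution assumption is needed, which is convenient since in the application the $v_i$ are the eigenvectors of a GOE matrix and hence highly dependent. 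The same argument also yields the analogous bound for the all-negative-orthant event appearing in the proof of Theorem \ref{thm:uniqueness}.
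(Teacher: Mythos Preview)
Your proof is correct and follows essentially the same approach as the paper: compute the marginal probability that a single uniform vector on $\mathbb{S}^{n-1}$ lies in the positive orthant, then union bound. The only cosmetic difference is that the paper obtains the $1/2^n$ via the Gaussian representation $v_{i'}\stackrel{dist}{=}g/\|g\|_2$ and the i.i.d.\ Rademacher signs of the coordinates of $g$, whereas you invoke sign-flip symmetry of the uniform measure directly; these are the same argument in slightly different clothing.
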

    Given that $v_1,\ldots,v_n$ are uniformly distributed in the unit sphere $\mathbb{S}^{n-1}$, we use Lemma \ref{lem:proba_pos_eigenvectors} and the union bound, to conclude that 
    \begin{equation*}\label{eq:boundproba1}
        \prob(\calE_1)\leq \frac{n}{2^{n-1}}. 
    \end{equation*}
    \paragraph{Bounding $\prob(\calE_2)$.} 
    Fix any $X\in\calS\cap\calR_1$. We have that there exists at least one block in the Frobenius normal form of $X$ (c.f. \eqref{eq:Frob_normal_form}) with size one. For simplicity, we can assume w.l.o.g. that its first block is of size one, i.e., $B_{11}\in\matR$. Given that entries of $X$ are nonnegative, it is clear that $B_{11}\geq 0$. Then, the following chain of implications holds (denoting $e_1$ the first element of the canonical basis of $\matR^n$)
    \begin{align}
         PXP^Te_1&=B_{11}e_1\nonumber\\
         \implies XP^Te_1 &= B_{11}P^Te_1\nonumber\\
         \implies\sum^n_{i=1}\mu_iv_i\langle v_i,P^Te_1\rangle &= B_{11}P^Te_1\nonumber\\ \label{eq:mu_D}
         \implies VD\bm{\mu}&=B_{11}P^Te_1,
    \end{align}
 where $V$ has as columns $v_1,\ldots, v_n$, $D$ is a diagonal matrix with entries $D_{ii}=\langle v_i,P^Te_1\rangle$ and $\bm{\mu}=(\mu_1,\ldots,\mu_n)^T$. We have the following simple result. 
 \begin{claim}\label{claim:D_invert}
     D is almost surely invertible.
 \end{claim}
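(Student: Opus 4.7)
The plan is to reduce invertibility of the diagonal matrix $D$ to the non-vanishing of certain coordinates of the eigenvectors $v_i$, and then exploit the absolute continuity of the distribution of GOE eigenvectors on the sphere.

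First I would observe that since $P \in \calP_n$, the vector $P^T e_1$ equals some standard basis vector $e_j$ for $j \in [n]$. Therefore $D_{ii} = \langle v_i, e_j \rangle = v_i(j)$, the $j$-th coordinate of the $i$-th eigenvector of $A$. Since $D$ is diagonal, it is invertible if and only if $v_i(j) \neq 0$ for every $i \in [n]$.

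Next I would recall that, by the orthogonal invariance of the GOE distribution, the (normalized) eigenvectors $v_1, \ldots, v_n$ of $A$ are each marginally distributed uniformly on the sphere $\mathbb{S}^{n-1}$ (almost surely, the eigenvalues of $A$ are distinct, so the eigenvectors are well-defined up to sign). For any fixed coordinate index $j$, the hyperplane $\{x \in \mathbb{S}^{n-1} : x(j) = 0\}$ has zero surface measure, so $\prob(v_i(j) = 0) = 0$ for each fixed pair $(i,j) \in [n]^2$.

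Finally, I would take a finite union bound over the $n^2$ pairs $(i,j) \in [n] \times [n]$ to conclude that, with probability one, $v_i(j) \neq 0$ for all $i,j \in [n]$ simultaneously. Since the permutation $P$ arising from the Frobenius normal form of $X \in \calS \cap \calR_1$ lies in the finite set $\calP_n$ (and hence $P^T e_1$ is one of finitely many basis vectors $e_j$), this event ensures that $D$ is invertible almost surely, regardless of which $P$ (and hence which $j$) is produced by $X$. The only mildly delicate point is making sure that the dependence of $P$ on the random $X$ is handled, but this is resolved immediately by taking the union bound over the finite index set $[n]^2$ before conditioning on $P$.
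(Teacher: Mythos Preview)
Your proof is correct and follows essentially the same approach as the paper: reduce invertibility of $D$ to the non-vanishing of all coordinates $v_i(j)$, use the uniform distribution of each eigenvector on $\mathbb{S}^{n-1}$ to get $\prob(v_i(j)=0)=0$, and then take a union bound over the finite index set $[n]^2$ to handle the dependence of $P$ on $X$. Your write-up is slightly more explicit about this last measurability point than the paper's, but the argument is identical.
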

    \begin{proof}
        Clearly $D$ is invertible iff $D_{ii}\neq 0$ for all $i\in [n]$. Note that, there exists $k'\in[n]$ such that $P^Te_1=e_{k'}$. Hence, $D_{ii}=\langle v_i,e_{k'}\rangle$ for some $k'\in [n]$. We claim that 
        \begin{equation}\label{eq:uniform_bound_claim_D}
           \prob\big( \forall k,i\in[n],\enskip \langle v_i,e_k\rangle \neq 0\big)=1.
        \end{equation}
        Indeed, for any given pair $i,k\in[n]$, we have that $\prob(\langle v_i,e_k\rangle=0)=0$ (because $v_i$ is uniformly distributed on $\mathbb{S}^{n-1}$). Then \eqref{eq:uniform_bound_claim_D} follows from the union bound.  
        From \eqref{eq:uniform_bound_claim_D}, we deduce that $D_{ii}\neq 0$ a.s. 
    \end{proof}
    Claim \ref{claim:D_invert} implies that $VD$ is almost surely invertible, which in turn implies that there exists a unique $\bm{\mu}$ that satisfies $VD\bm{\mu}=B_{11}P^Te_1$. On the other hand, it is clear that $\bm{\mu}=B_{11}\ones$ is a solution of \eqref{eq:mu_D}, so it has to be the unique solution. But, if $\bm{\mu}=B_{11}\ones$, then $X= B_{11}\id$ \rev{(recalling that $X = \sum^n_{i=1}\mu_iv_iv^T_i$)}. However, the fact that $X\in\calS$ implies that $B_{11}=\frac1n$. In summary, we have demonstrated that for any $X\in \calS\cap\calR_1$, it holds that $X=\frac1n\id$ almost surely. In other words, with probability one, there are no elements in $\calS\cap\calR_1$ other than $\frac1n\id$. Equivalently, $\prob(\calE_2)=0$.
    \paragraph{Bounding $\prob(\calE_3)$.} Fix any $X\in\calS\cap\calR_2$ (assuming it exists). Then for some $P\in\calP_n$, we have
    \begin{equation}\label{eq:defn_B}
PXP^T=\underbrace{\operatorname{blockdiag}(B_{11},\ldots,B_{mm})}_{=:B},
    \end{equation}
    with all the blocks $B_{kk}\in \matR^{n_k\times n_k}$ having size $n_k\geq 2$, and $B_{kk}\in\calI$. We note that 
    \begin{equation*}
        AX=XA\iff PAP^TB = BPAP^T.
    \end{equation*}
    We will use the following lemma.
    \begin{lemma}\label{lem:B_commutes_Atilde}
    For $A\sim \operatorname{GOE}(n)$ and $B$ as defined in \eqref{eq:defn_B}, it holds 
    \begin{equation*}
        \prob\big(\forall Q\in \calP_n, \enskip QAQ^T B\neq BQAQ^T\big)=1.
    \end{equation*}
    \end{lemma}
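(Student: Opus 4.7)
My plan is to union bound over the finite set $\calP_n$ and over the (finitely many) possible block structures $(n_1,\ldots,n_m)$ with $m\geq 2$ and $n_k\geq 2$; for a fixed $Q$ and a fixed block structure, I would show that the event ``there exists an admissible $B$ (block-diagonal with irreducible blocks of these sizes) commuting with $\tilde A:=QAQ^T$'' has probability zero. By orthogonal invariance of the GOE, $\tilde A \sim \operatorname{GOE}(n)$, so almost surely it has simple spectrum and its eigenvector matrix $W\in O(n)$ is Haar-distributed. Hence the commutant of $\tilde A$ is exactly $\{W\Lambda W^T:\Lambda\text{ diagonal}\}$, and any $B$ commuting with $\tilde A$ takes the form $B=W\Lambda W^T$ for some diagonal $\Lambda=\diag(c_1,\ldots,c_n)$.

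Next I would exploit the block structure. Writing $W_k\in\matR^{n_k\times n}$ for the submatrix of rows of $W$ indexed by the $k$-th block, the orthogonality $WW^T=\id$ yields $W_kW_l^T=\delta_{kl}\id_{n_k}$, so the rowspaces $U_k:=\operatorname{rowspace}(W_k)$ form an orthogonal direct sum decomposition $\matR^n=\bigoplus_k U_k$. The block-diagonality of $B$ is equivalent to $W_k\Lambda W_l^T=0$ for all $k\neq l$, which, using $\ker W_k=U_k^{\perp}=\bigoplus_{l\neq k}U_l$, is in turn equivalent to $\Lambda$ preserving every subspace $U_k$.

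The heart of the argument is the following genericity claim: almost surely, the only diagonal matrix preserving the Haar-random subspace $U_1$ is a scalar multiple of identity. To prove it I would union bound over the (finite) set of partitions $\pi=\{B_1,\ldots,B_r\}$ of $[n]$; each such $\pi$ indexes the coordinate eigenspaces $E_l=\spn\{e_i:i\in B_l\}$ of a diagonal matrix with $r$ distinct values, and $\Lambda$ with this structure preserves $U_1$ iff $U_1=\bigoplus_l (U_1\cap E_l)$. Since $U_1$ is Haar-distributed on the Grassmannian of $n_1$-planes, almost surely the generic intersection formula $\dim(U_1\cap E_l)=\max(0,\,n_1+|E_l|-n)$ holds simultaneously for every such $E_l$ (excess-intersection loci are lower-dimensional algebraic subvarieties of the Grassmannian, hence of Haar measure zero). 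A short case analysis on $|T|:=|\{l:|E_l|>n-n_1\}|$ (the cases $|T|=0$, $|T|=1$, $|T|\geq 2$) then shows that $\sum_l\dim(U_1\cap E_l)<n_1$ whenever $r\geq 2$, contradicting the required direct-sum decomposition. Therefore $\Lambda=c\,\id$ and so $B=c\,\id$; but its diagonal blocks $c\,\id_{n_k}$ are reducible for $n_k\geq 2$, contradicting the assumed admissibility of $B$. Union bounding over $Q$ and block structures finishes the proof.

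The main obstacle I foresee is the genericity/combinatorial step: carefully justifying that the Haar-random $U_1$ almost surely achieves the minimal intersection dimension with every coordinate subspace (so that a finite union bound over partitions is legitimate), and then grinding through the counting argument ruling out every nontrivial $\pi$. A secondary subtlety is that the admissible $B$ is not fixed a priori but depends on the random $A$ (via $X\in\calS\cap\calR_2$); this is handled by the fact that the above reasoning rules out \emph{all} non-scalar diagonal $\Lambda$ uniformly, rather than a specific one chosen in advance.
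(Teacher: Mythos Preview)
Your proposal is correct and takes a genuinely different route from the paper's proof.

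\textbf{How the paper argues.} The paper fixes $Q$, writes $\tilde A=QAQ^T$, and block-decomposes $\tilde A$ according to the block sizes of $B$. From $\tilde A B=B\tilde A$ it extracts, for the first two blocks, that $B_{11}$ commutes with both $\tilde A_{11}\sim\operatorname{GOE}(n_1)$ and the \emph{independent} matrix $\tilde A_{12}\tilde A_{12}^T\sim\operatorname{Wish}(n_1,n_2)$. Both have simple spectra almost surely, so each determines a unique (up to sign) orthonormal eigenbasis. By Perron--Frobenius (using the nonnegativity of $B$), $B_{11}$ has a simple top eigenvalue; its Perron eigenvector must therefore coincide with one eigenvector from each of these two independent bases. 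Since those two vectors are independent and uniform on $\mathbb S^{n_1-1}$, this happens with probability zero. A union bound over $Q\in\calP_n$ finishes.

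\textbf{How your argument compares.} You instead work globally: the commutant of $\tilde A$ (simple spectrum a.s.) is $\{W\Lambda W^T:\Lambda\text{ diagonal}\}$, and block-diagonality of $B$ is reformulated as $\Lambda$ preserving each Haar-random rowspace $U_k$. A Schubert-variety genericity argument (plus the short counting on $|T|$) then forces $\Lambda$ to be scalar, whence $B=c\,\id$, whose diagonal blocks are reducible for $n_k\ge 2$. This is a clean, structural alternative. Two observations on what each approach buys: the paper's route is more elementary in its inputs (only standard GOE/Wishart spectrum facts and Perron--Frobenius), avoiding any Grassmannian machinery; your route is arguably more conceptual and, notably, never uses the nonnegativity of $B$---irreducibility enters only through the trivial fact that a scalar block of size $\ge 2$ is reducible---so it applies verbatim to any block-diagonal $B$ with irreducible blocks of size $\ge 2$, nonnegative or not. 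Your handling of the ``$B$ depends on $A$'' subtlety (ruling out \emph{all} non-scalar diagonal $\Lambda$ at once, then union bounding over finitely many $Q$ and block patterns) is exactly right.
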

    \begin{proof}
        Fix $Q\in \calP_n$. Consider the following block decomposition of $\tilde{A}:=QAQ^T$.
    \begin{equation*}
        \tilde{A}=\begin{bmatrix}
            \blAtilde_{11}& \blAtilde_{12}&\cdots&\blAtilde_{1m}\\
            \blAtilde_{21}& \blAtilde_{22}&\ddots&
            \vdots\\
            \vdots& & \ddots & \\
            \blAtilde_{m1}&\cdots & &\blAtilde_{mm}
        \end{bmatrix},
    \end{equation*}
     where $\blAtilde_{kl}$ belongs to $\matR^{n_k\times n_l}$ (we use bold notation here to distinguish $\blAtilde_{kl}$ from the $(k,l)$ entry of $\tilde{A}$). Observe that $\blAtilde_{kl}=\blAtilde_{lk}^T$. With this, it is easy to verify that
    \begin{equation*}
        \tilde{A}B = B\tilde{A} \iff \blAtilde_{kl}B_{ll}=B_{kk}\blAtilde_{kl}, \enskip \forall k,l\in [m],
    \end{equation*}
    which implies that 
    \begin{equation*}\label{eq:block_commuting1}
        \blAtilde_{lk}\blAtilde_{kl}B_{ll}=B_{ll}\blAtilde_{lk}\blAtilde_{kl},\enskip \forall k,l \in[m] \text{ with }k\neq l,
    \end{equation*}
    and 
    \begin{equation*}\label{eq:block_commuting2}
        \blAtilde_{ll}B_{ll}=B_{ll}\blAtilde_{ll},\enskip \forall l \in[m].
    \end{equation*}
    In the sequel, we will assume w.l.o.g. that $n_1\leq n_2$ and only use the following implication
    \begin{equation}\label{eq:B11_commuting}
        \tilde{A}B = B\tilde{A}\implies \blAtilde_{12}\blAtilde_{21}B_{11}=B_{11}\blAtilde_{12}\blAtilde_{21},\enskip \text{ and }\enskip\blAtilde_{11}B_{11}=B_{11}\blAtilde_{11}.
    \end{equation}
    It can be seen that the following facts hold. 
    \begin{enumerate}[label=(\roman*)]
        \item $\tilde{A}\stackrel{dist.}{=}A\sim \operatorname{GOE}(n)$. In particular, $\blAtilde_{11}\sim \operatorname{GOE}(n_1)$, which implies that $\blAtilde_{11}$ has $n_{1}$ different eigenvalues almost surely (c.f. \cite[Thm. 2.5.2]{anderson_guionnet_zeitouni_2009}). Hence, its eigenvectors, denoted $\vtil_1,\ldots,\vtil_{n_1}$ are uniquely defined up to sign change.
        \item $\blAtilde_{12}\blAtilde_{21}=\blAtilde_{12}\blAtilde_{12}^T\sim \operatorname{Wish}(n_1,n_2)$. Here $M\sim \operatorname{Wish}(n_1,n_2)$ means that $M\in \matR^{n_1\times n_1}$ can be written $M=\tilde{M}\tilde{M}^T$, where $\tilde{M}\in \matR^{n_1\times n_2}$ is a matrix with independent Gaussian entries with variance $\frac1n$. From \cite[Prop. 4.1.3]{anderson_guionnet_zeitouni_2009}, we see that $\blAtilde_{12}\blAtilde_{21}$ has $n_1$ different eigenvalues almost surely. Hence, its eigenvectors, denoted $\vtilk_1,\ldots,\vtilk_{n_1}$ are uniquely determined up to sign change.
        \item $\blAtilde_{12}\blAtilde_{21}$ is independent from $\blAtilde_{11}$.
    \end{enumerate}
    Given that $B_{11}\in \calI$, we deduce, by Perron-Frobenius theorem, that its largest eigenvalue has multiplicity one. On the other hand, by \eqref{eq:B11_commuting} we have that
    \[
    B_{11}\in \spn\Big(\{\vtil_i\vtilT_i\}^{n_1}_{i=1}\Big)\cap \spn\Big(\{\vtilk_i\vtilkT_i\}^{n_1}_{i=1}\Big).
    \]
    In other words, $\{\vtil_1,\ldots,\vtil_{n_1}\}$ and $\{\vtilk_1,\ldots\vtilk_{n_1}\}$ are both basis of eigenvectors for $B_{11}$. Assume w.l.o.g that the largest eigenvalue of $B_{11}$ (of multiplicity one) is associated with $\vtil_1$, in the first basis, and $\vtilk_1$, in the second. This implies that either $\vtil_1=\vtilk_1$ or $\vtil_1=-\vtilk_1$. From facts $(i)$ and $(ii)$, $\vtil_1$ and $\vtilk_1$ are uniformly distributed in $\mathbb{S}^{n_1-1}$. From $(iii)$, $\vtil_1$ and $\vtilk_1$ are independent. We deduce that $\prob(\vtil_{1}=\vtilk_{1})=\prob(\vtil_{1}=-\vtilk_{1})=0$. Summarizing, for our fixed permutation $Q$ we have 
    \begin{equation*}
        \prob(QAQ^TB=BQAQ^T)\leq \prob(\vtil_{1}=\vtilk_{1})+\prob(\vtil_{1}=-\vtilk_{1})=0.
    \end{equation*}
    Then, the stated result follows from a union bound over the (finite) set $\calP_n$.
    \end{proof}
    %
    Using Lemma \ref{lem:B_commutes_Atilde}, we conclude that $\prob(\calE_3)=0$.
\paragraph{Concluding the proof.} We have obtained
\begin{align*}
    \prob(\calE)&=\prob(\calE_1)+\underbrace{\prob(\calE_2)}_{=0}+\underbrace{\prob(\calE_3)}_{=0}\\
    &\leq \frac n{2^{n-1}},
\end{align*}
which concludes the argument. 
\end{proof}

\subsection{Noiseless EMD dynamics}\label{sec:noiseless_EMD}
We now consider the ``noiseless'' version of the GM problem, where $\sigma=0$ or equivalently $A=B$ (given that we assumed $X^*=\id$), which makes the gradient  $\nabla E(X)$ that appears in \eqref{eq:md_dynamic_update} equal to $A^2X+XA^2-2AXA$ \rev{(since in this particular case, the function to be minimized is $E(X)=\|AX-XA\|^2_F$)}. We  consider the CGW model, or equivalently, when $A\sim \operatorname{GOE}(n)$. Interestingly, in this case one iteration of Algorithm \ref{alg:mirror_simplex} suffices to recover the ground truth (i.e., we can fix $N=1$). 
\begin{theorem}\label{prop:noiseless_dyn_onestep}
Let $A\sim \text{GOE}(n)$ and let $\Xone$ be the first iterate of the dynamic defined by \eqref{eq:md_dynamic_update} with initialization $\Xzero = J/n^2$. Then $\greedy(\Xone)=\id$ a.s., for any $\gamma_0>0$.
\end{theorem}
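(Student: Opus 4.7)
The plan is to compute the single iterate $\Xone$ in closed form using the special initialization $\Xzero = J/n^2$, observe that it is symmetric, and then apply Lemma \ref{lem:sufficient_prop}(ii) by verifying \eqref{eq:property_sym} pointwise. Since both $\Xzero$ and the quantity $\nabla E(\Xzero)$ depend on $A$ only through the low-rank structure of $J = \ones\ones^T$, the verification should reduce cleanly to a $\cosh$-type inequality that holds almost surely and uniformly in the step-size $\gamma_0>0$.

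First I would obtain a closed form for $\Xone$. Using $\Xzero = J/n^2$, $J = \ones\ones^T$ and the symmetry of $A$, one computes
\[
\nabla E(\Xzero) = \tfrac{1}{n^2}\bigl(A^2 J + J A^2 - 2AJA\bigr) = \tfrac{1}{n^2}\bigl(\vecw\ones^T + \ones\vecw^T - 2\vecu\vecu^T\bigr),
\]
where $\vecu := A\ones$ and $\vecw := A^2\ones$. Inserting this into \eqref{eq:md_dynamic_update} and noting that the normalizing constant cancels in pointwise ratios yields $\Xone_{ij} \propto \exp(-\tfrac{\gamma_0}{n^2}(w_i + w_j - 2u_iu_j))$, which is manifestly symmetric in $i,j$. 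Hence Lemma \ref{lem:sufficient_prop}(ii) is applicable, and it suffices to verify \eqref{eq:property_sym} for $\Xone$.

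Next, setting $a_i := -\tfrac{2\gamma_0}{n^2}(w_i - u_i^2)$ and using the algebraic identity
\[
-(w_i + w_j - 2u_iu_j) = -(w_i - u_i^2) - (w_j - u_j^2) - (u_i - u_j)^2,
\]
the required strict inequality $\Xone_{ii} + \Xone_{jj} > 2\Xone_{ij}$ rewrites, after dividing by the common positive factor $\exp(\tfrac{a_i + a_j}{2})$, as
\[
2\cosh\!\Bigl(\tfrac{a_i - a_j}{2}\Bigr) > 2\exp\!\Bigl(-\tfrac{\gamma_0}{n^2}(u_i - u_j)^2\Bigr).
\]
The LHS is always $\geq 2$ (with equality iff $a_i = a_j$) while the RHS is always $\leq 2$ (with equality iff $u_i = u_j$), so the inequality is strict whenever $u_i \neq u_j$, regardless of the value of $\gamma_0$. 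For any fixed pair $i \neq j$, $u_i - u_j = \sum_k (A_{ik} - A_{jk})$ is a Gaussian with strictly positive variance (it contains, e.g., the independent contribution $A_{ii} - A_{jj}$), so $\prob(u_i = u_j) = 0$; a union bound over the $\binom{n}{2}$ pairs forces $u_i \neq u_j$ for all $i \neq j$ almost surely. Combined with Lemma \ref{lem:sufficient_prop}(ii) this yields $\greedy(\Xone) = \id$ a.s., for every $\gamma_0 > 0$.

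I do not anticipate a serious obstacle. The whole argument hinges on spotting the identity that converts \eqref{eq:property_sym} into the $\cosh$ inequality; once that is in place, the rest is routine. The two small points to watch out for are: (i) preserving the symmetry of $\Xone$ so that the symmetric version of Lemma \ref{lem:sufficient_prop} applies rather than the weaker \eqref{eq:property}, and (ii) the uniformity in $\gamma_0$, which comes for free because the strictness of the $\cosh$ inequality is independent of how large or small $\gamma_0$ is, as soon as $\vecu$ has distinct coordinates.
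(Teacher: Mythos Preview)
Your proof is correct and follows essentially the same route as the paper: both reduce \eqref{eq:property_sym} for $\Xone$ to the almost-sure distinctness of the coordinates of $u=A\ones$ (equivalently, $\langle a_i-a_j,\ones\rangle\neq 0$), then pass from the gradient to $\Xone$ via AM--GM. The paper packages the last step through the ratio $q_{ij}=\sqrt{\Xone_{ii}\Xone_{jj}}/\Xone_{ij}>1$, whereas you phrase the identical inequality as $2\cosh(\cdot)\geq 2 > 2e^{-\gamma_0(u_i-u_j)^2/n^2}$; these are the same argument in different clothing.
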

To prove the previous theorem, we will need the following auxiliary result.
\begin{lemma}\label{lem:gradient_property}
Let $A\sim \text{GOE}(n)$. Then for any $i,j\in [n]$ with $i\neq j$, $-\gradE (\Xzero)$ satisfies a.s. the property \eqref{eq:property_sym}, i.e. the following holds almost surely 
\[ \gradE(J)_{ii}+\gradE(J)_{jj}-2\gradE(J)_{ij}<0. 
\]
\end{lemma}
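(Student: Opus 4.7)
The plan is to exploit the special structure of the all-ones matrix $J$ to obtain an explicit formula for $\nabla E(J)$ and then reduce the inequality to an almost-sure non-degeneracy statement about a single Gaussian random variable. Recall that in this setting $\nabla E(X) = A^2 X + X A^2 - 2 A X A$. Since $\nabla E$ is linear and $X^{(0)} = J/n^2$, it suffices to prove the inequality for $J$ itself (positive scaling preserves \eqref{eq:property_sym} by Lemma \ref{lem:property_preserving_ops}).

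First I would introduce the vector $v := A\ones \in \matR^n$ and observe that $A^2 \ones = A v$. Since $J = \ones \ones^T$, one obtains the entrywise identities
\begin{equation*}
(A^2 J)_{ij} = (Av)_i, \quad (J A^2)_{ij} = (Av)_j, \quad (A J A)_{ij} = v_i v_j,
\end{equation*}
using the symmetry of $A$. Writing $w := Av$, this gives the clean formula $\nabla E(J)_{ij} = w_i + w_j - 2 v_i v_j$. Plugging in, a short computation shows that the expression of interest collapses to
\begin{equation*}
\nabla E(J)_{ii} + \nabla E(J)_{jj} - 2\nabla E(J)_{ij} = -2(v_i - v_j)^2,
\end{equation*}
so the claim is equivalent to $v_i \neq v_j$ almost surely for each fixed $i \neq j$.

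The second step is to verify the non-degeneracy of $v_i - v_j = \sum_k (A_{ik} - A_{jk})$. Expanding and using that $A_{ij} = A_{ji}$ so the $k \in \{i,j\}$ cross-terms cancel, one gets
\begin{equation*}
v_i - v_j = (A_{ii} - A_{jj}) + \sum_{k \notin \{i,j\}} (A_{ik} - A_{jk}).
\end{equation*}
Under $A \sim \operatorname{GOE}(n)$ the summands are independent Gaussians (with $A_{ii} - A_{jj}\sim \calN(0, 4/n)$ being independent from the off-diagonal part), so $v_i - v_j$ is a centered Gaussian with strictly positive variance, and hence $\prob(v_i = v_j) = 0$. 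A union bound over the finite set of pairs $\{(i,j) : i \neq j\} \subset [n]^2$ then gives the inequality simultaneously for all $i \neq j$, a.s.

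There is no real obstacle here; the only point requiring minor care is the algebraic simplification of $\nabla E(J)_{ii} + \nabla E(J)_{jj} - 2\nabla E(J)_{ij}$ into the telescoping square $-2(v_i - v_j)^2$, which is what makes the whole argument work. Everything else is standard.
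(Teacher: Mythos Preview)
Your argument is correct and follows essentially the same route as the paper: both compute $\nabla E(J)$ explicitly, observe that the $A^2J+JA^2$ contribution cancels in the combination $(\cdot)_{ii}+(\cdot)_{jj}-2(\cdot)_{ij}$, reduce the $AJA$ contribution to $-2(v_i-v_j)^2$ with $v=A\ones$, and conclude by noting $v_i-v_j$ is a nondegenerate Gaussian. Your expansion of $v_i-v_j$ is in fact slightly more careful than the paper's (you retain the $A_{ii}-A_{jj}$ term), but this does not affect the conclusion.
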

\begin{proof}
We use the decomposition 
\begin{equation*}
    \gradE(J)_{ii}+\gradE(J)_{jj}-2\gradE(J)_{ij}=M_{ii}+M_{jj}-2M_{ij}+M'_{ii}+M'_{jj}-2M'_{ij}
\end{equation*}
where $M:=A^2J+JA^2$ and $M':=-2AXA$. Define $a_k$ as the $k$-th column of $A$, with this we have for $i,j\in [n]$\begin{align*}
(A^2J+J A^2)_{ij}&=\langle a_i,\sum^n_{k=1}a_kJ_{kj}\rangle+\langle \sum^n_{k=1}a_kJ_{ik}, a_j\rangle\\
&=\langle \sum^n_{k=1}a_k,a_i+a_j\rangle,
\end{align*} 
which clearly implies  $M_{ii}+M_{jj}=2M_{ij}$. For the term $M'$ we have \begin{equation*}
    M'_{ii}+M'_{jj}-2M'_{ij}=-\underbrace{\langle (a_i-a_j), J(a_i-a_j)\rangle}_{=(\langle a_i-a_j,\ones\rangle)^2}. 
\end{equation*}
Since $\langle a_i-a_j,\ones\rangle=\sum_{k\notin\{ j,i\}}[A_{ik}-A_{jk}]$, it is clear that $\langle a_i-a_j,\ones\rangle\neq 0$ almost surely.
%
%
%
%
\end{proof}
Observe that in this case, it follows from its definition that $\Xone$ is symmetric. Indeed, it is the Hadamard product of $\Xzero$ and $\hadexp{\left(-\gamma_0\gradE(\Xzero)\right)}$, and both are symmetric matrices. 
\begin{proof}[Proof of Theorem \ref{prop:noiseless_dyn_onestep}]
Take $i,j\in[n]$. From \eqref{eq:md_dynamic_update} and $\Xzero = J/n^2$ we deduce
\begin{align*}
\Xone_{ij}&=N^{-1}_0 e^{-\gamma_0\nabla E\left(\frac{J}{n^2}\right)_{ij}}\\
&=N^{-1}_0 e^{-\frac{\gamma_0}{n^2}(A^2J+JA^2-2AJA)_{ij}}
\end{align*}
where the scalar random variable $N_0=\|e^{-\frac{\gamma_0}{n^2}(A^2J+JA^2-2AJA)_{ij}}\|_{1,1}$ is clearly almost surely strictly positive (we will only need this fact in the sequel). Define the following quantity $q_{ij}=\frac{\sqrt{\Xone_{ii}\Xone_{jj}}}{\Xone_{ij}}$, which is well-defined given that $\Xone_{ij}>0$, almost surely. On the other hand, 
\[q_{ij}=\exp{\left(-\gamma_0( \gradE(J)_{ii}+\gradE(J)_{jj}-2\gradE(J)_{ij})\right)}.\]
By Lemma \ref{lem:gradient_property} we deduce that for $i\neq j$, we have almost surely that $q_{ij}>1$. Which means that $\Xone_{ij}<\sqrt{\Xone_{ii}\Xone_{jj}}$, so we deduce that $\Xone_{ij}<\frac12(\Xone_{ii}+\Xone_{jj})$. Consequently, $\Xone$ satisfies \eqref{eq:property_sym}, and given the symmetry of $\Xone$, we deduce that $\greedy(\Xone)=\id$ almost surely.
\end{proof}
\begin{remark}[The \er case]\label{rem:er_case}
Clearly, if we take an \er graph $A\sim G(n,p)$, the argument in Lemma \ref{lem:gradient_property} will not work, since there are, with probability one, at least two vertices (say $i,j\in [n]$) with the same degree, hence $\langle a_i-a_j,\ones\rangle=0$. Using the degree profile distribution of \er graphs, which is a well-studied problem \cite{BOLLOBAS1980,BOLLOBAS1981}, one can give a bound on the number of vertices that will be correctly assigned by Algorithm \ref{alg:mirror_simplex} with $N=1$. We remark that for the graph isomorphism problem, more elaborated ``signatures'' using the degrees have been used with success under the \er model \cite{RG_Isom_BES,Improv_RG_Isom,dai_cullina,deg_prof}.  
\end{remark}
\rev{
\begin{remark}[Diagonal dominance versus sufficiency property in Lemma \ref{lem:sufficient_prop}] Given a similarity matrix $\est{X}$, several existing theoretical results for exact recovery (see \cite{Grampa,YuXuLin,MaoRud,ArayaBraunTyagi}) use the following sufficient condition for ensuring $\greedy(\est{X})=\id$;
\begin{equation}\label{eq:diago_dom}
    \est{X}_{ii}>\max_{j\neq i}\est{X}_{ij},\ \forall i\in[n],\tag{\(\operatorname{diag.dom.}\)}
\end{equation}
which is known as \emph{diagonal dominance}. It is clear that \eqref{eq:diago_dom} implies \eqref{eq:property} (which is equivalent to \eqref{eq:property_sym} in the symmetric case). The approach of using \eqref{eq:diago_dom} instead of \eqref{eq:property_sym} to prove Theorem \ref{prop:noiseless_dyn_onestep} fails. Indeed, when $\est{X}=\Xone$ as in Theorem \ref{prop:noiseless_dyn_onestep}, showing \eqref{eq:diago_dom} is equivalent to showing $ \gradE(J)_{ii}<\gradE(J)_{ij}$, for all $j\neq i $. A simple calculation shows that 
\begin{align*}
    \expec[\gradE(J)_{ii}]&=-1+o(1),\enskip \expec[\gradE(J)_{ij}]=o(1)\enskip \forall i \neq j, \\
    \text{and} \ \operatorname{Var}[\gradE(J)_{ij}] &=3-o(1) \enskip \forall i, j.
\end{align*}
Then using Chebyshev's inequality, we deduce that the event $\gradE(J)_{ii}\geq \gradE(J)_{ij}$ occurs with at least a constant probability, for any given $i \neq j$. Hence it follows that \eqref{eq:diago_dom} will fail to hold with at least a constant probability.
\end{remark}
}
\subsection{Alternative analysis for GRAMPA in noiseless setting}\label{sec:noiseless_grampa}
To further illustrate the advantage of using \eqref{eq:property_sym} over \eqref{eq:diago_dom}, we will prove a stronger version of \cite[Lemma 2.3]{Grampa} which  provides guarantees for exact recovery (in the noiseless case) for the GRAMPA algorithm that operates as follows. \rev{First, it computes the spectral decomposition of $A = \sum^n_{i=1}\lambda_iv_iv_i^{T}$. Second, for a regularization parameter $\eta$, it computes the similarity matrix
\begin{equation}\label{eq:grampa_similarity}
\Xgrampa := \sum^n_{i,j=1}\frac{1}{\eta^2+(\lambda_i-\lambda_j)^2}v_iv_i^TJv_jv_j^T.
\end{equation}
%
%
Finally, the algorithm rounds $\Xgrampa$ using any rounding procedure (e.g., linear assignment, greedy rounding, etc.).}
For the result in \cite[Lemma 2.3]{Grampa}, however, conditions are derived on $\eta$ which ensure that $\Xgrampa$ is diagonally-dominant\footnote{In fact, they consider an even stronger version of diagonal-dominance than \eqref{eq:diago_dom}.}, with high probability. 

In the following theorem, we show for the noiseless setting that exact recovery is in fact possible \emph{for any} choice of $\eta$.
\begin{theorem}\label{thm:strong_noiseless_grampa}
    Let $A$ be a $GOE(n)$ distributed matrix with spectral expansion $A=\sum^n_{i=1}\lambda_iv_iv_i^T$. Then for any $\eta$, the matrix $\Xgrampa$, defined in \eqref{eq:grampa_similarity}, satisfies 
    \begin{equation*}
        \greedy(\Xgrampa)=\id, 
    \end{equation*}
    almost surely.
\end{theorem}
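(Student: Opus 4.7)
The plan is to apply part (ii) of Lemma \ref{lem:sufficient_prop}. First I would verify that $\Xgrampa$ is symmetric: expanding \eqref{eq:grampa_similarity} entrywise gives
\begin{equation*}
\Xgrampa_{kl} = \sum_{i,j=1}^n \frac{(\ones^T v_i)(\ones^T v_j)}{\eta^2 + (\lambda_i - \lambda_j)^2}\, v_i(k)\, v_j(l),
\end{equation*}
and the kernel $1/(\eta^2+(\lambda_i-\lambda_j)^2)$ is invariant under $i\leftrightarrow j$, so the expression is symmetric in $(k,l)$. Hence it is enough to establish \eqref{eq:property_sym}.

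For fixed $k \neq l$, setting $\alpha_i := (\ones^T v_i)(v_i(k) - v_i(l))$ and using the factorization $v_i(k)v_j(k) + v_i(l)v_j(l) - v_i(k)v_j(l) - v_i(l)v_j(k) = (v_i(k)-v_i(l))(v_j(k)-v_j(l))$, one gets
\begin{equation*}
Q_{k,l} := \Xgrampa_{kk} + \Xgrampa_{ll} - 2 \Xgrampa_{kl} = \sum_{i,j=1}^n \frac{\alpha_i \alpha_j}{\eta^2 + (\lambda_i - \lambda_j)^2}.
\end{equation*}
The main step is to show $Q_{k,l}>0$ almost surely. Without loss of generality $\eta>0$ (only $\eta^2$ appears); applying the Laplace identity $\frac{1}{\eta^2+\omega^2} = \frac{1}{\eta}\int_0^\infty e^{-\eta t}\cos(\omega t)\,dt$ with $\omega=\lambda_i-\lambda_j$, together with $\cos((\lambda_i-\lambda_j)t)=\cos(\lambda_i t)\cos(\lambda_j t)+\sin(\lambda_i t)\sin(\lambda_j t)$, yields
\begin{equation*}
Q_{k,l} = \frac{1}{\eta}\int_0^\infty e^{-\eta t}\left[\Big(\sum_{i=1}^n \alpha_i \cos(\lambda_i t)\Big)^2 + \Big(\sum_{i=1}^n \alpha_i \sin(\lambda_i t)\Big)^2\right] dt \ge 0.
\end{equation*}
Equality forces $\sum_i \alpha_i e^{\sqrt{-1}\lambda_i t}\equiv 0$; since the $\lambda_i$ are almost surely distinct \cite[Theorem 2.5.2]{anderson_guionnet_zeitouni_2009}, linear independence of the exponentials then forces $\alpha_i=0$ for all $i$.

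Finally, I would control the event $\{\alpha_i=0\ \forall i\}$. By definition, $\alpha_i=0$ iff $v_i\in \ones^\perp \cup (e_k-e_l)^\perp$, a fixed union of two proper hyperplanes in $\matR^n$. Since $A\sim\text{GOE}(n)$ has its eigenvector matrix Haar-distributed on $O(n)$ (up to sign), each marginal $v_i$ is uniform on $S^{n-1}$ and thus lies in any fixed hyperplane with probability zero. Consequently $\prob(Q_{k,l}=0)\le \prob(\alpha_1=0)=0$ for each pair $k\neq l$, and a union bound over the $\binom{n}{2}$ pairs delivers $Q_{k,l}>0$ simultaneously for all $k\neq l$ almost surely. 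Lemma \ref{lem:sufficient_prop}(ii) applied to the symmetric matrix $\Xgrampa$ then gives $\greedy(\Xgrampa)=\id$ almost surely. The main hurdle is upgrading the nonnegativity of $Q_{k,l}$ to strict positivity, which the Laplace-type representation together with the almost-sure distinctness of GOE eigenvalues and non-alignment of eigenvectors with fixed hyperplanes resolve cleanly.
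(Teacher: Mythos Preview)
Your proof is correct. Both your argument and the paper's ultimately rest on the same two probabilistic ingredients (a.s.\ simple spectrum of $A$ and a.s.\ non-membership of the eigenvectors in the fixed hyperplanes $\ones^\perp$ and $(e_k-e_l)^\perp$), but the routes diverge in how positivity of the relevant quadratic form is obtained. The paper rewrites $\Xgrampa = V\bigl(\Lambda_\eta^{(-1)}\odot(V^TJV)\bigr)V^T$, shows $\Lambda_\eta^{(-1)}$ is positive definite via the Laplace representation $1/s=\int_0^\infty e^{-st}\,dt$ combined with the theory of conditionally p.s.d.\ matrices and Hadamard exponentials (Theorem~\ref{thm:condp_to_exppd}), and then applies a Schur-product lemma (Lemma~\ref{lem:hadprod_pd_psd}) to conclude that $\Xgrampa$ is positive definite, invoking Lemma~\ref{lem:sufficient_prop}(iii). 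You instead compute $Q_{k,l}=\alpha^T\Lambda_\eta^{(-1)}\alpha$ for the specific vector $\alpha_i=(\ones^Tv_i)(v_i(k)-v_i(l))$ and use the Poisson-kernel identity $\tfrac{1}{\eta^2+\omega^2}=\tfrac{1}{\eta}\int_0^\infty e^{-\eta t}\cos(\omega t)\,dt$, which via the cosine addition formula produces an explicit sum-of-squares integrand; strict positivity then reduces to linear independence of $\{e^{\sqrt{-1}\lambda_i t}\}$. Your path is more elementary in that it avoids the appendix machinery on conditionally p.s.d.\ matrices and the Hadamard-product lemma; the paper's path yields the stronger conclusion that $\Xgrampa$ is genuinely positive definite (not merely that it satisfies \eqref{eq:property_sym}), though for the stated theorem your weaker conclusion suffices.
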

\begin{proof}
    Define the matrix $\Lambda_\eta$, with entries
    \begin{equation*}
        (\Lambda_\eta)_{ij}=(\lambda_i-\lambda_j)^2+\eta^2.
    \end{equation*}
    Observe that $\Xgrampa$ can be rewritten as 
    \begin{equation}\label{eq:grampa_similarity2}
        \Xgrampa = V\left(\ileta\odot (V^TJV)\right)V^T,
    \end{equation}
    where $\ileta$ denotes the entrywise inverse of $\leta$ and $V$ is the matrix with columns $v_1,\ldots,v_n$. Recall that a matrix $M\in \matR^{n\times n}$ is conditionally p.s.d (resp. conditionally p.d) if, for all $v\neq0$ such that $v^T\ones=0$,  we have $v^TMv\geq0$ (resp. $v^TMv>0$); see e.g., \cite[Section 6.3]{Topics_matrix_analysis}. We now have the following claim.
   \begin{claim}\label{claim:grampa_1}
        $-\Lambda_\eta$ is conditionally p.s.d for all $\eta\in\matR$.
    \end{claim}
    \begin{proof}
        We will first prove that $-\Lambda_0$ is conditionally p.s.d. For that, notice that 
        \[
        \Lambda_0 = (\blambda\odot\blambda)\ones^T+\ones(\blambda\odot\blambda)^T-2\blambda\blambda^T,
        \]
        where $\blambda=(\lambda_1,\ldots,\lambda_n)^T$. Take $v\in\matR^n$ such that $v\neq 0$ and $v^T\ones=0$. Then 
        \begin{align*}
            v^T(-\Lambda_0)v&=2(v^T\blambda)^2-\underbrace{v^T(\blambda\odot\blambda)\ones^Tv+v^T\ones(\blambda\odot\blambda)v}_{=0}\\
            &\geq 0.
        \end{align*}
        On the other hand, it is evident that $\leta=\Lambda_0+\eta^2J$. Since adding a scalar times the all ones matrix preserves the conditional p.s.d property, we deduce that $-\leta$ is conditionally p.s.d for any $\eta$.  
    \end{proof}
    Claim \ref{claim:grampa_1} is useful in light of Theorem \ref{thm:condp_to_exppd} (see Appendix \ref{app:psd_facts}), which says that the entrywise exponential of a conditionally p.s.d matrix is p.s.d. We will use this to prove the following claim. 
    \begin{claim}\label{claim:grampa_2}
        $\ileta$ is almost surely p.d for all $\eta \in \matR$.
    \end{claim}
    \begin{proof}
        We will first prove that $\hadexp(-t\leta)$ is p.d, for all $t>0$. For that, using Claim \ref{claim:grampa_1} 
        we obtain that $-t\leta$ is conditionally p.s.d. for all $t\geq 0$. By Theorem \ref{thm:condp_to_exppd} part $(iii)$, $\hadexp(-t\leta)$ is p.d. if and only if $-t\leta$ satisfies \eqref{eq:property_sym}. It is clear that for all $i\neq j$ and $t>0$
        \[
        \underbrace{-t(\leta)_{ii}-t(\leta)_{jj}}_{=-2t\eta^2}>\underbrace{-2t(\leta)_{ij}}_{=-2t(\lambda_i-\lambda_j)^2-2t\eta^2} \enskip \text{almost surely},
        \]
        since $\lambda_i \neq \lambda_j$ almost surely \cite[Theorem 2.5.2]{anderson_guionnet_zeitouni_2009}. Hence, $\hadexp(-t\leta)$ is almost surely p.d. for all $t>0$. Given that $\frac1s=\int^\infty_0e^{-st}dt$, we have the representation 
        \begin{equation*}
            \ileta=\int^\infty_0\hadexp(-t\leta)dt,
        \end{equation*}
        where the integral of a matrix is understood entrywise. \rev{On the other hand, the integral of positive definite matrices is positive definite, as proven in Lemma \ref{lem:integral_psd} in Appendix \ref{app:psd_facts}.}
        Given that $\hadexp(-t\leta)$ is almost surely p.d. for all $t>0$, we conclude that $\ileta$ is almost surely p.d.
    \end{proof}
    From Claim \ref{claim:grampa_2}, $\ileta$ is almost surely p.d and, on the other hand, it is evident that $V^TJV$ is p.s.d. In addition, the diagonal entries of $V^TJV$ (observe that $(V^TJV)_{ii}=(v^T_i\ones)^2$) are almost surely different from zero, since $v_i$'s are uniformly distributed on the sphere (see e.g., \cite{anderson_guionnet_zeitouni_2009}). \rev{In Lemma \ref{lem:hadprod_pd_psd} in Appendix \ref{app:psd_facts} we prove that the Hadamard product of a p.d matrix with a p.s.d matrix is p.d. Using this result
    we obtain that $\ileta\odot(V^TJV)$ is almost surely p.d.} From \eqref{eq:grampa_similarity2} it is easy to see that $\Xgrampa$ is almost surely p.d. Combining this with Lemma \ref{lem:sufficient_prop} part $(iii)$, the conclusion follows. 
\end{proof}
\begin{remark}[Comparison with Lemma 2.3 in \cite{Grampa}]\label{rem:comp_lem_grampa}
    Theorem \ref{thm:strong_noiseless_grampa} is stronger than \cite[Lemma 2.3]{Grampa} in two aspects. First, it provides an almost sure guarantee instead of the high probability bound in \cite[Lemma 2.3]{Grampa}. Second, it holds for any $\eta\in\matR$, while in \cite[Lemma 2.3]{Grampa} the result requires that $\frac1{n^{0.1}}<|\eta|<\frac c{\log n}$ for some constant $c$. In addition, our approach also leads to a shorter and arguably simpler proof. 
\end{remark}
\begin{remark}[Extension to the Erd\H{o}s-R\'enyi case]\label{rem:extension_ER_lemma_grampa}
    Notice that in the proof of Theorem \ref{thm:strong_noiseless_grampa}, the assumption that $A\sim\operatorname{GOE}(n)$ is crucial at two key points. First, to establish Claim \ref{claim:grampa_2}, it is necessary that $A$ almost surely possesses a simple spectrum. Second, we rely on this assumption to demonstrate that the quantities $v_i^T\ones$ are almost surely non-zero for $i\in[n]$. In light of \cite[Corollary 1.4]{Tao2017}, we believe that it should be relatively straightforward to extend Theorem \ref{thm:strong_noiseless_grampa} to the case where $A\sim G(n,p)$, at least in a high probability sense.
\end{remark}

\section{Numerical experiments}\label{sec:experiments}
We now provide numerical experiments using both synthetic and real data to assess the performance of Algorithm \ref{alg:mirror_simplex}. We evaluate its performance along with other competing graph matching algorithms, considering both accuracy and running time as key metrics.
In graph matching, the customary measure of accuracy is the overlap with the ground truth $X^*\in\calP_n$, also known as the \emph{recovery fraction}. This measure, denoted as $\operatorname{overlap}(X^*, \est{X}_\calP)$, is defined as: 
\begin{equation*}\label{eq:overlap}
    \operatorname{overlap}(X^*,\est{X}_\calP):=\frac1n\langle X^*,\est{X}_\calP\rangle_F,
\end{equation*}
where $\est{X}_\calP$ represents the output of any graph matching algorithm, and we recall that $\langle X^*,\est{X}_\calP$ denotes the Frobenius inner product between $X^*$ and $\est{X}_\calP$.

All the experiments were conducted using MATLAB R2021a (MathWorks Inc., Natick, MA) on an Apple M1 machine with 8 cores, 3.2 GHz clock speed, and 16GB of RAM. The reported running times are based on this setup. The code is available
at \url{https://github.com/ErnestoArayaV/Graph-Matching-Convex-Relaxations}.
\subsection{Synthetic data setup}\label{sec:exp_synthetic_data}
For the synthetic data experiments, we follow this general setup: we generate matrices $A$ and $B$ using the CER and CGW models described in Section \ref{sec:models}, with $X^*$ chosen uniformly at random from $\calP_n$. We then apply various graph matching methods based on convex relaxations, using $A$ and $B$ as inputs.
%
%
%
\paragraph{Standardization of the inputs.} To ease the comparison between different models, we will consider their standardized version. Following \cite{Grampa}, we consider the following standardized version of the CER model described in Sec. \ref{sec:models}
\[\tilde{A} = \frac{\left(A-\expec(A)\right)}{p(1-p)n},\enskip \tilde{B} = \frac{\left(B-\expec(B)\right)}{p(1-p)n},\]
where $A,B\sim G(n,\sigma,p,X^*)$. This can be regarded as a simple preprocessing step in the pipeline of the three graph matching algorithms that will be experimentally tested here. 
\paragraph{Step-size selection.}An obvious question for any gradient descent-like method is the selection of a step-size updating rule. We investigate the following popular strategies for EMD and PGD. 
\begin{itemize}
    \item \textbf{Fixed step-sizes.} In \cite[Theorem 9.16 and Example 9.17]{First_order} it is proven that MD converges in $\calO\left(\frac1{\sqrt{N}}\right)$ iterations, under the following constant step-size rule
    \begin{equation}\label{eq:fixed_iter_md}
        \gamma^{\operatorname{md}}_k = \frac{\sqrt{2\log{n}}L_{E,\infty}}{\sqrt{N+1}},\ \ \forall k\in [N],
    \end{equation}
    where $\LEinf:=\max\limits_{X\in\simpn2}\|\gradE{(X)}\|_{\infty}$ and for $v\in \matR^d$, with $d\in\matN$, $\|v\|_\infty:=\max_{j\in\matN}|v_j|$. Given that $\LEinf$ is, in general, not observed, for each $k$ we replace it (as a heuristic) with the following observed quantity
    \begin{equation*}\label{eq:grad_norm_inf_emp}
        \hat{L}^{(k)}_{E,\infty} = \max\limits_{X\in \{\Xzero,\ldots,\Xk\}}\|\gradE{(X)}\|_{\infty}.
    \end{equation*}
    Notice that, contrary to \eqref{eq:fixed_iter_md}, the rule obtained by replacing $L_{E,\infty}$ by the empirical quantity $\hat{L}^{(k)}_{E,\infty}$ is not a fixed step strategy.
    Analogously, for the PGD algorithm, based on the same theoretical result we define the following update rule
    \begin{equation}\label{eq:fixed_iter_pgd}
        \gamma^{\operatorname{pgd}}_k = \frac{\sqrt{2}\hat{L}^{(k)}_{E,2}}{\sqrt{N+1}}, \ \ \forall k\in [N],
    \end{equation}
    where 
    \begin{equation*}\label{eq:grad_norm_2_emp}
    \hat{L}^{(k)}_{E,2} = \max\limits_{X\in \{\Xzero,\ldots,\Xk\}}\|\gradE{(X)}\|_{F}.
    \end{equation*}
    \item \textbf{Dynamic step-sizes.} The $\calO\left(\frac1{\sqrt{N}}\right)$ convergence of EMD and PGD has also been established, see \cite[Theorem 9.18]{First_order}, under the following alternative strategy 
    \begin{equation}\label{eq:dynamic_step_md}
         \gamma^{\operatorname{md}}_k=
          \begin{cases}
        \frac{\sqrt{2}}{\|\gradE(\Xk)\|_{\infty}\sqrt{k+1}}\text{ if }\gradE(\Xk)\neq 0 \\
        0\qquad \qquad \qquad\text{if }\gradE(\Xk)= 0,
        \end{cases}
    \end{equation}
    %
    Notice that here $\gamma^{\operatorname{md}}_k$ can change from iteration to iteration, as it is adapting to the current gradient information. In the case of PGD the analogous rule is given by 
    \begin{equation*}
        \gamma^{\operatorname{pgd}}_k=\begin{cases}
        \frac{\sqrt{2}}{\|\gradE(\Xk)\|_{F}\sqrt{k+1}}\text{ if }\gradE(\Xk)\neq 0 \\
        0\qquad \qquad \qquad\text{ if }\gradE(\Xk)= 0.
        \end{cases}
    \end{equation*}
    In the case that $\gradE(\Xk)=0$, we have chosen to impute $\gamma^{\operatorname{md}}_k$ and $\gamma^{\operatorname{pgd}}_k$ with a zero value, but this choice is arbitrary and has no effect in the value of the iterates.
    %
    \item \textbf{Other heuristic rules.} We also consider certain purely heuristic rules, which are not based on theoretical convergence results. In our experiments, these rules sometimes outperform the theoretically established rules described above. For PGD we consider 
    \begin{equation}\label{eq:heuristic_pgd_1}
        \gamma_k = \theta \frac{\|\gradE(\Xk)\|_F^2}{\|E(\Xk)\|_F^2},
    \end{equation}
    where $\theta$ is a positive constant, which is either fixed to $1$ (default value) or determined by line search. 
\end{itemize} 
\subsubsection{Comparison with other convex optimization-based methods }\label{sec:exp_comparison_convex}
We compare some of the state-of-art seedless graph matching methods based on convex relaxations with our proposed method, Algorithm \ref{alg:mirror_simplex}, using a fixed number of iterations $N=125$. In terms of efficient methods for convex graph matching, two of the better performing ones are \texttt{Grampa} \cite{Grampa} and \texttt{QPADMM}, the latter of which solves \eqref{eq:convex_relax_general} with $\calK$ the set of doubly stochastic matrices, using the method of Alternating Direction Method of Multipliers (ADMM). The \texttt{QPADMM} algorithm has also been considered in the numerical experiments sections in \cite{deg_prof, Grampa}, under the name \texttt{QS-DS}. We recall that $\grampa$ is a regularized spectral method, which adds the term $\eta\|X\|^2_F$ to the objective function $E(\cdot)$ in \eqref{eq:convex_relax_general} with $\calK=\{\ones^TX\ones=n\}$. Unless otherwise specified, we adopt the recommended value of $\eta=0.2$ (see \cite[Section 4]{Grampa}). In our experiments, we observed that variations in this parameter have minimal impact on the overall performance of $\grampa$, which aligns with findings reported in \cite{Grampa}. We also include the classic \texttt{Umeyama} algorithm, which was originally introduced in \cite{Spectral_weighted_Ume}, and corresponds to an unregularized spectral method. 
%
%
\begin{figure}[h]
    \centering
        \begin{subfigure}{.5\textwidth}
        \centering
        \includegraphics[scale=0.22]{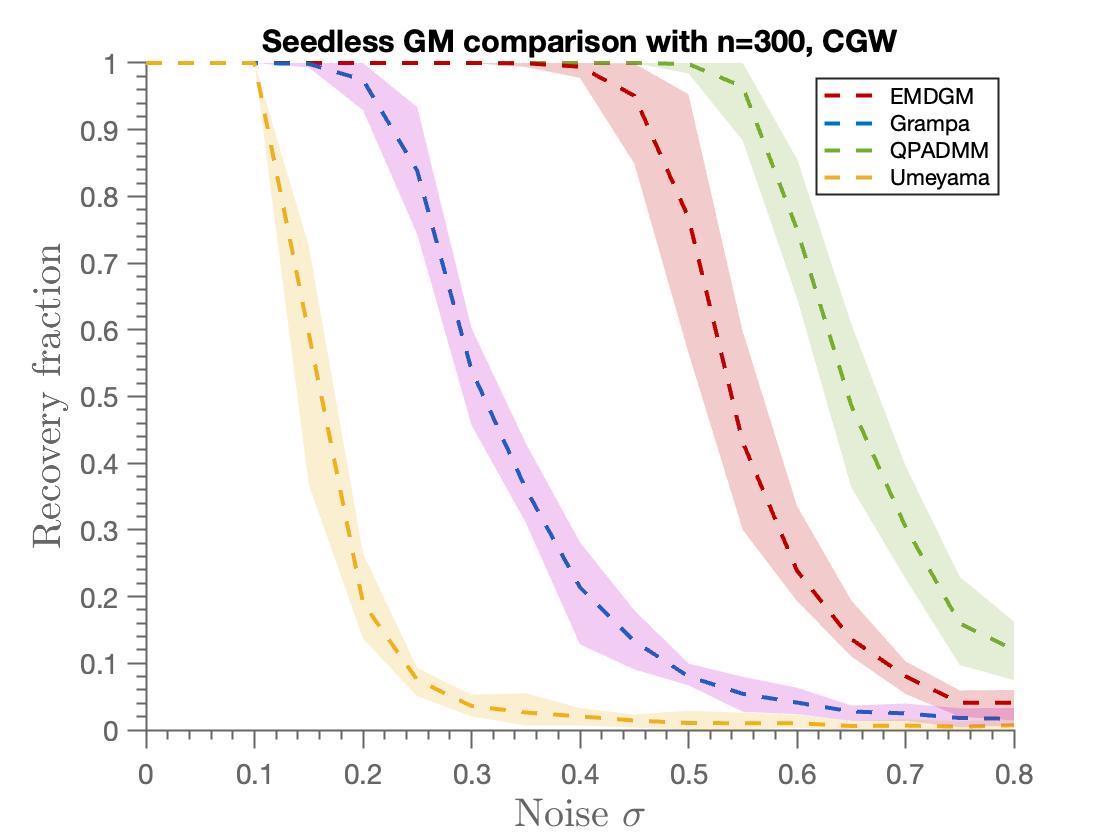}
    \caption{CGW model.}
    \label{fig:comp_convex_a}
    \end{subfigure}%
    \begin{subfigure}{.5\textwidth}
         \centering
         \includegraphics[scale=0.22]{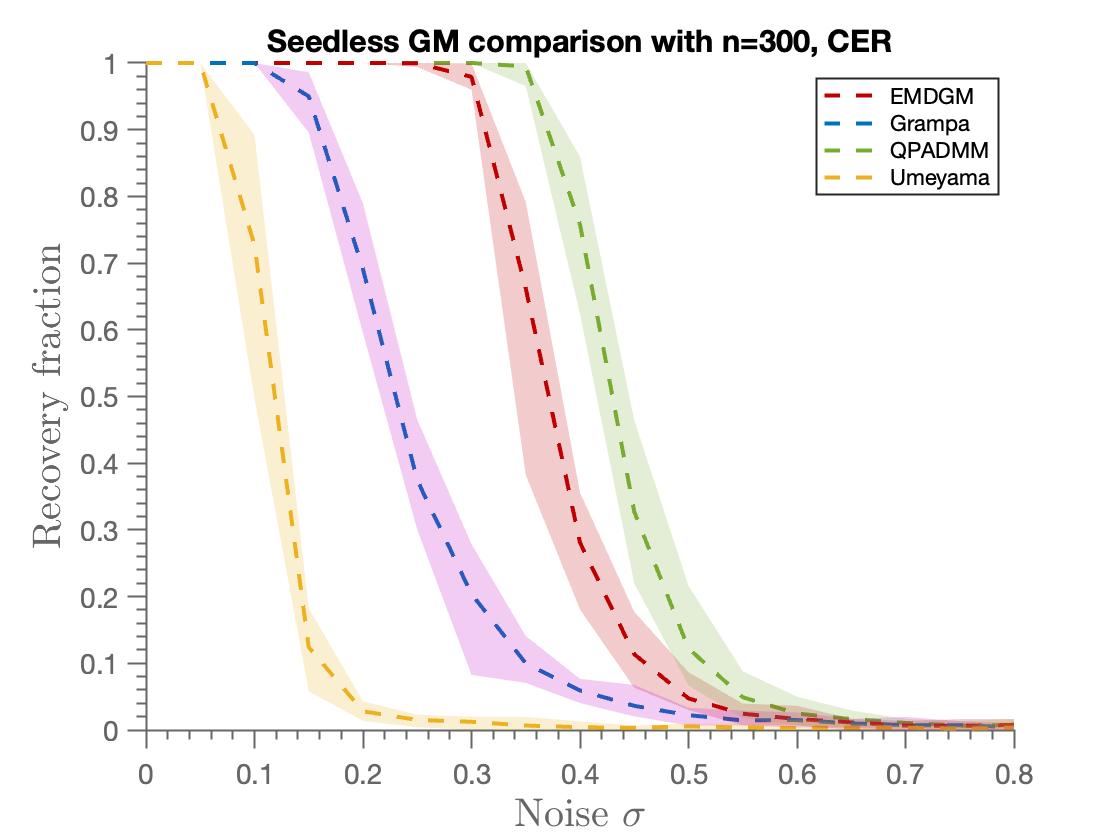}
    \caption{CER model with $p=\frac12$.}
    \label{fig:comp_convex_b}
    \end{subfigure}
    \caption{Comparison of the performance of seedless convex methods for CGW (Fig. \ref{fig:comp_convex_a}) and CER (Fig. \ref{fig:comp_convex_b}) models. We plot the average overlap with the ground truth (recovery fraction) over $15$ Monte Carlo runs for graphs of size $n=300$. We used $N=125$ iterations in $\mdgm$ and in $\grampa$ we use the regularization parameter $\eta=0.2$. For $\mdgm$ we used the dynamic step-size rule \eqref{eq:dynamic_step_md}.}
    \label{fig:compar_md_grampa_qpadmm}
\end{figure}

\paragraph{Accuracy comparison.}Figure \ref{fig:compar_md_grampa_qpadmm} illustrates that, in terms of accuracy, the performance of $\mdgm$ sits between $\grampa$ and $\qpadmm$, while \texttt{Umeyama} is outperformed by all three methods. While this result is not unexpected, considering the well-defined hierarchy of relaxations from the hyperplane to the simplex and, finally, to the Birkhoff polytope, these experiments aim to provide a clearer understanding of the gap between them. In this regard, taking the case of the CGW model in Fig.\ref{fig:comp_convex_a}, we see that $\mdgm$ achieves exact recovery for noise levels as high as $\sigma\approx 0.5$, while $\grampa$ can recover only up to $\sigma\approx 0.25$. On the other hand, $\qpadmm$ outperforms all the rest, with a recovery of up to $\sigma\approx 0.6$. These experiments show that $\mdgm$ (with $N=125$) is closer, in terms of accuracy, to the tightest relaxation of $\qpadmm$. As we will see in the sequel, the performance of $\mdgm$ is highly correlated with the number of iterations. 
In the case of the CER model, the gap between methods reduce and even $\qpadmm$ can recover only up to $\sigma\approx 0.4$. This confirms that matching CER graphs is more challenging for comparable levels of noise. One possible explanation for this fact is that the (continuous) weights in the CGW model help to discriminate the correct from incorrect matches (``breaking the symmetry'') more effectively than in the binary weights case, where one relies solely on the graph structure. From the theoretical point of view, we saw a manifestation of this phenomenon in Section \ref{sec:theoretical_aspects}, when trying to extend Theorem \ref{prop:pop_dyn_onestep} to the CER case (see Remark \ref{rem:er_case}). Overall, these experiments show that $\mdgm$ is closer, in terms of accuracy, to the tightest relaxation of $\qpadmm$.  
\rev{
\begin{remark}\label{rem:exp_other_algos}
    Although it is not the main focus of this section, it is worth noting that in \cite[Section 4]{Grampa}, \texttt{Grampa} has been empirically demonstrated to outperform a range of other algorithms (not necessarily based on convex optimization) in terms of accuracy, including \texttt{TopEigenVec} \cite{Grampa}, \texttt{IsoRank} \cite{singh,isorank_2}, \texttt{EigenAlign} \cite{spec_align}, \texttt{LowRankAlign} \cite{spec_align}, and \texttt{DegreeProfile} \cite{deg_prof}. Thus we avoid comparing our \mdgm\ algorithm with these other methods.
\end{remark}
}
\paragraph{Running time comparison.} In Table \ref{tab:comp_convex_run_time}, we summarize the average running time and standard deviation over the $15$ Monte Carlo runs, for each algorithm. In these experiments, we observe that both $\grampa$ and $\texttt{Umeyama}$ are of comparable speed, while $\mdgm$ operates about eight times slower than them. Conversely, $\qpadmm$ is the slowest, being about a thousand times slower than $\grampa$. In the next subsection, we will explore how it is possible to achieve good accuracy with $\mdgm$ using fewer iterations, thus speeding up the overall process.
\begin{table}[h]
    \centering
    \begin{tabular}{l||c|c|c|c}    
       &\texttt{Umeyama}  & $\mdgm$ &$\grampa$ &$\qpadmm$ \\
       \hline
      CGW&  0.0356(0.0211) & 0.8609(0.0282) & 0.0490(0.0048)&34.2666(1.1065)\\
      CER&  0.0382(0.015) & 0.9226(0.0168) & 0.0432(0.0063)& 29.1306(1.1302)\\
    \end{tabular}
    \caption{Average (over $15$ Monte Carlos) running time in seconds (standard deviation in parentheses).}
    \label{tab:comp_convex_run_time}
\end{table}
\subsubsection{Comparison with PGD}\label{sec:exp_comparison_firstorder}
As mentioned earlier, the EMD algorithm is one of several algorithmic alternatives to solve \eqref{eq:convex_relax_simp}. In this section, we have two objectives: Firstly, to compare it with other algorithms that achieve the same task, and secondly, to evaluate its performance with different values of $N$. For runtime efficiency, we confine our comparison to the category of first-order methods, among which PGD is arguably the most popular one. We will call $\texttt{PGDGM}$ the algorithm that solves \eqref{eq:convex_relax_simp} using a gradient descent approach, receiving the same inputs and following the same pipeline as Algorithm \ref{alg:mirror_simplex}, but changing the update rule from \eqref{eq:md_dynamic_update} to \eqref{eq:pgd_dynamic_update}. Given the existing theoretical convergence guarantees for MD and PGD, see \cite[Sections 8 and 9]{First_order}, we know that there exists rate sequences $(\gamma_k)_{k\geq 0}$ such that both methods will converge to a minimizer of \eqref{eq:convex_relax_simp}. Our objective here is to provide a practical comparison of both algorithms within a unified setup, focusing on the number of iterations and the selection of step-size rules. In this section, we will demonstrate that the overall results of both approaches often diverge on synthetic data (a pattern consistent with real data, as we demonstrate later in Section \ref{sec:exp_real_data} below).
\begin{figure}
    \centering
    \begin{subfigure}{.5\textwidth}
        \centering
        \includegraphics[scale=0.22]{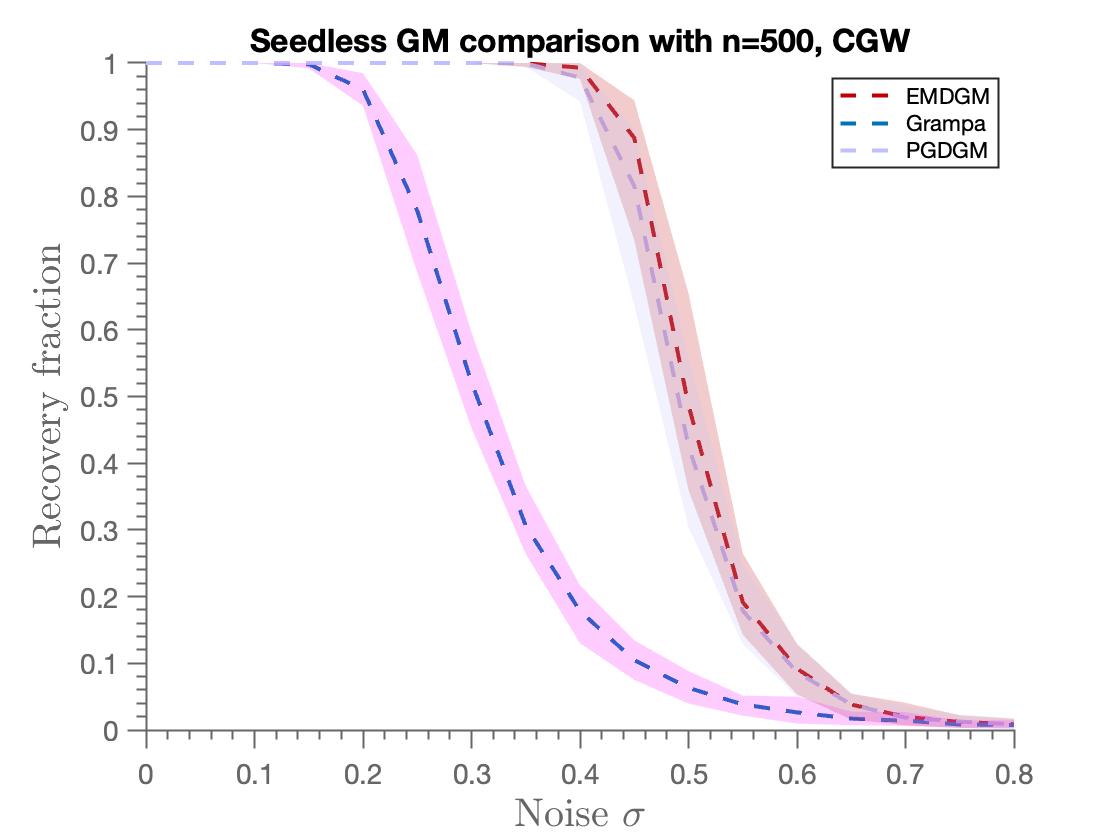}
    \caption{$N=125$ in the CGW model.}
    \label{fig:pgd_md_a}
    \end{subfigure}%
    \begin{subfigure}{.5\textwidth}
         \centering
         \includegraphics[scale=0.22]{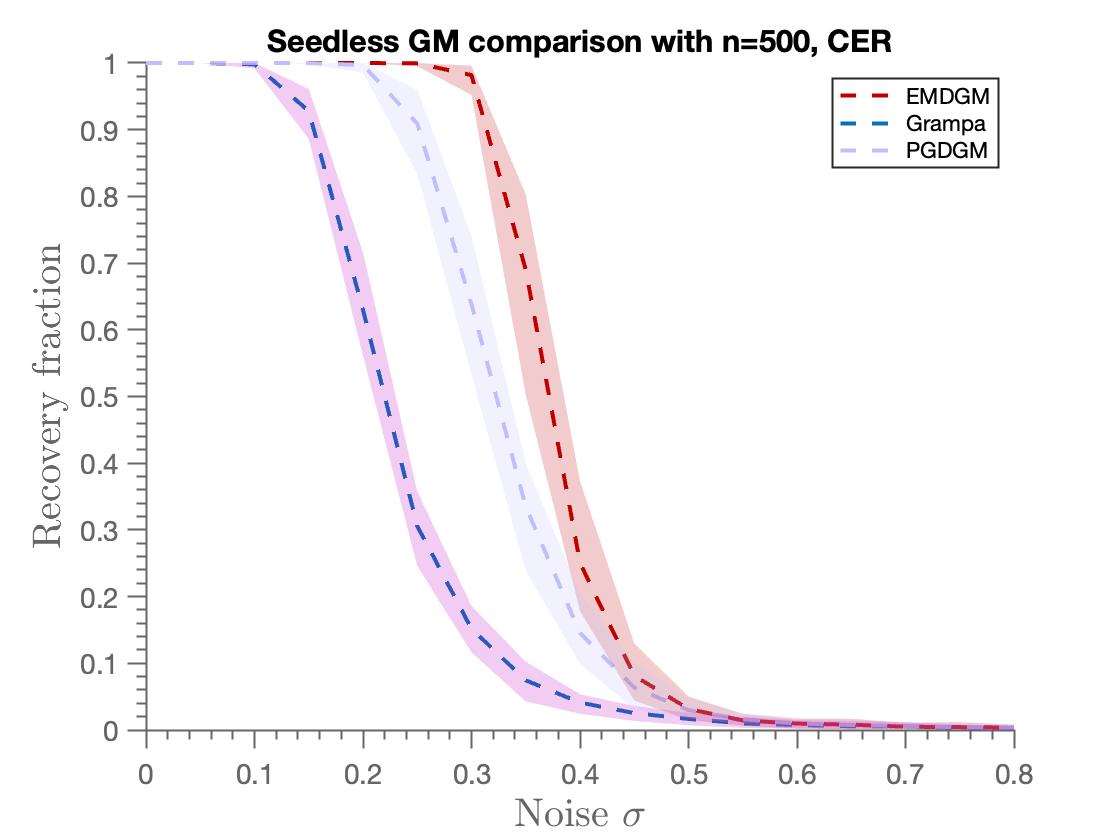}
    \caption{$N=125$ in the CER model with $p=\frac12$.}
    \label{fig:pgd_md_b}
    \end{subfigure}
    \begin{subfigure}{.5\textwidth}
        \centering
        \includegraphics[scale=0.22]{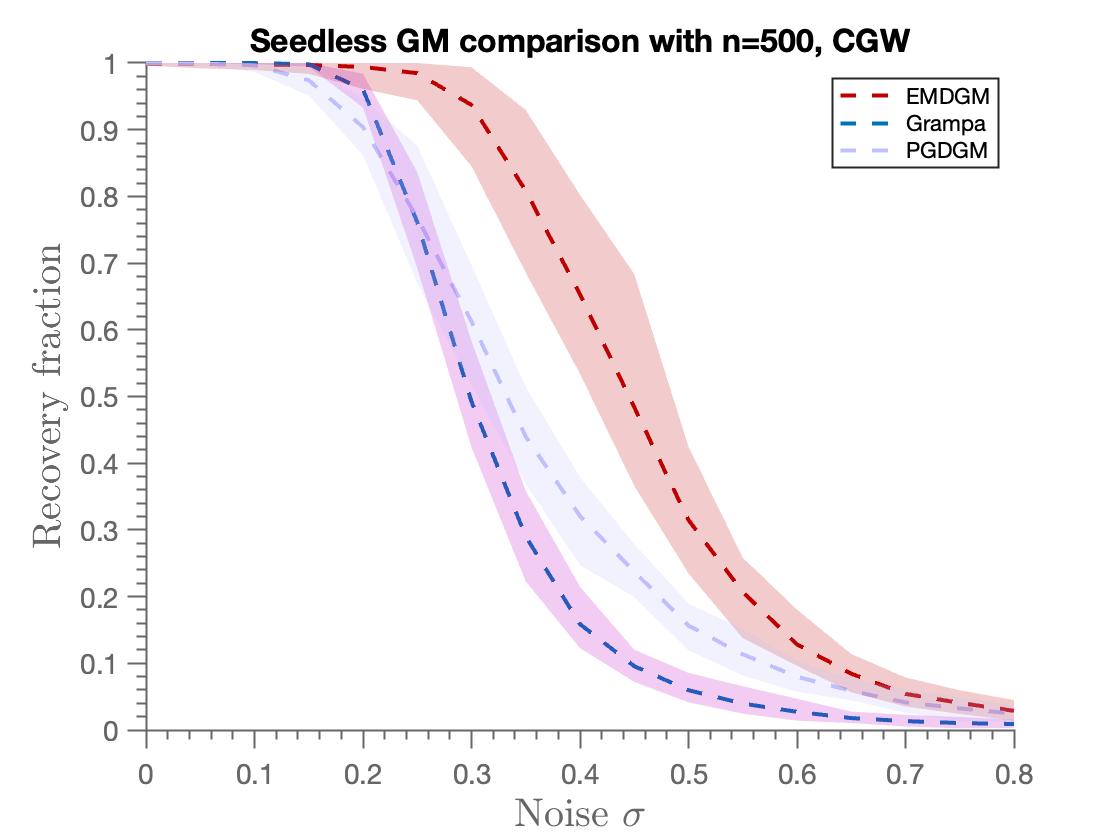}
    \caption{$N=25$ in the CGW model.}
    \label{fig:pgd_md_c}
    \end{subfigure}%
    \begin{subfigure}{.5\textwidth}
         \centering
         \includegraphics[scale=0.22]{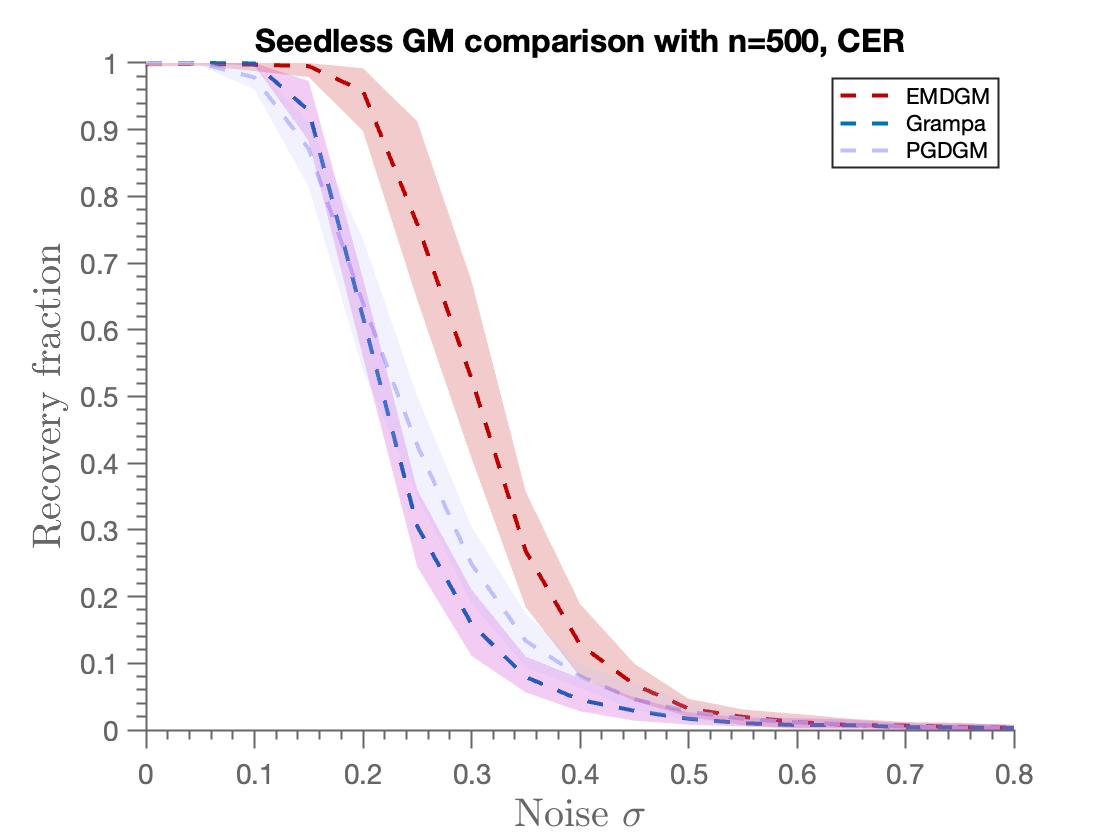}
    \caption{$N=25$ in the CER model with $p=\frac12$.}
    \label{fig:pgd_md_d}
    \end{subfigure}
    \caption{We compare $\mdgm$, $\grampa$ and PGD under the CGW and CER models for $n=500$. For $\mdgm$ we used the dynamic step-size rule \eqref{eq:dynamic_step_md} and PGD we used the heuristic rule \eqref{eq:heuristic_pgd_1} with $\theta=1$. In both cases, we fix the number of iterations to $N=25$ (Figs.\ref{fig:pgd_md_c} and \ref{fig:pgd_md_d}) and $N=125$ (Figs. \ref{fig:pgd_md_a} and \ref{fig:pgd_md_b}). The $\grampa$ regularization parameter $\gamma$ is set to $0.2$ as recommended in \cite{Grampa}. We present the average overlap (recovery fraction) for $25$ Monte Carlo runs. The shaded area indicates where $90\%$ of the data falls (excluding the top and bottom $5\%$). }
    \label{fig:comp_md_pgd}
\end{figure}
%

\paragraph{Accuracy comparison.}In Fig. \ref{fig:comp_md_pgd} we plot the average overlap between $X^*$ and the output of $\mdgm$, $\texttt{PGDGM}$ and $\grampa$. Among the step-size rules described in Section \ref{sec:exp_synthetic_data} above, the better performing ones in our experiments were the dynamic adaptive rule \eqref{eq:dynamic_step_md} for $\mdgm$ and the heuristic rule for $\texttt{PGDGM}$ with $\theta$ close to $1$ (we chose to include the default value $\theta=1$ in the plot). We see that all three algorithms perform better under the CGW model than under the CER model. Indeed, one can see in Fig. \ref{fig:pgd_md_a} that $\mdgm$ and $\texttt{PGDGM}$ can achieve perfect recovery for $\sigma\approx 0.45$, and $\grampa$ for $\sigma$ around $0.25$. Interestingly, in the case of the CER model, we see that $\mdgm$ performs better in terms of perfect recovery with a gap of at least $0.1$ in the noise level it tolerates. When we restrict the number of the iterations to $N=25$, we see that all models have an accuracy of $90\%$ or more under a noise level as high as $\sigma = 0.25$. On the other hand, as seen in Fig. \ref{fig:pgd_md_b}, for the same noise level, only the $\mdgm$ algorithm achieves $90\%$, under the setting of this experiment. Overall, $\mdgm$ outperforms both $\grampa$ and $\texttt{PGDGM}$ in this experiments. 
\paragraph{Running time comparison.} Table \ref{tab:comp_pgd_run_time} contains the  running time information for the experiments shown in Figure \ref{fig:comp_md_pgd}. We see that $\grampa$ is the fastest algorithm, running approximately five times as fast as $\mdgm$ (with $N=25$), which in turn is approximately two times faster than $\pgdgm$. Despite $\mdgm$ being slower than $\grampa$, we demonstrate here that when terminated at $N=25$, $\mdgm$ still outperforms $\grampa$, achieving an average speed five times faster than the case with $N=125$.
\begin{table}[h!]
    \centering
    \begin{tabular}{l||c|c|c|c|c}    
       &$\mdgm(N=25)$  & $\mdgm(N=125)$ &$\pgdgm(N=25)$  & $\pgdgm(N=125)$&$\grampa$ \\
       \hline
      CGW&  0.5822(0.0150) & 2.5410(0.0202) & 0.9587(0.0695)& 4.2685(0.02800)&0.1210(0.0072)\\
      CER& 0.5959(0.01621) & 2.4803(0.0157)& 0.9914(0.0723)& 4.2382(0.0215)&0.1494(0.0082)\\
    \end{tabular}
    \caption{Average (over $25$ Monte Carlos) running time in seconds (standard deviation in parentheses).}
    \label{tab:comp_pgd_run_time}
\end{table}
%
%
\subsubsection{Experimental comparison between \eqref{eq:property} and \eqref{eq:diago_dom}}\label{sec:exp_prop_vs_diagdom}
As previously noted, both $\mdgm$ and $\grampa$ adhere to the meta-strategy outlined in the Introduction: their output is the rounded version of a similarity matrix (specific for each algorithm). To prove exact recovery, a large part of the literature \cite{Grampa,YuXuLin,MaoRud,ArayaBraunTyagi} has focused on proving that the similarity matrix $\est{X}$ satisfies the diagonal dominance property \eqref{eq:diago_dom} with high probability (under different random graph model assumptions and assuming that $X^*=\id$), which we recall is
\begin{equation*}
    \est{X}_{ii}>\max_{j\neq i}\est{X}_{ij},\ \forall i\in[n].
\end{equation*}
Clearly, \eqref{eq:diago_dom} implies that $\greedy(\est{X})=\id$.
We compare \eqref{eq:diago_dom} with the property \eqref{eq:property}\rev{, which we recall requires $\est{X}_{ii}\vee \est{X}_{jj}>\est{X}_{ij}\vee \est{X}_{ji},\ \forall i\neq j\in [n]$. It is} weaker than \eqref{eq:diago_dom} yet sufficient to ensure $\greedy (\hat{X})=\id$ (as proven in Lemma \ref{lem:sufficient_prop} part $(ii)$). The objective is to see if for moderately small values of $n$, we see the diagonal dominance of the similarity matrix. The setup is as follows: we run $\mdgm$ ($N=25$) and $\grampa$ with $n=500$ using the CGW model and compute the following two metrics for the similarity matrix of each method.
\begin{itemize}
    \item \textbf{Number of non-diagonally dominant rows.} We consider the quantity 
    \begin{equation}\label{eq:metric_1}
        |\{i\in [n]: \est{X}_{ii}\leq\max_{j\neq i}\est{X}_{ij}\}|
    \end{equation}
    \item \textbf{Number of pairs not satisfying \eqref{eq:property}}. In this case, we consider 
    \begin{equation}\label{eq:metric_2}
        |\{(i,j)\in [n]^2: i\neq j\text{ and } \est{X}_{ii}\vee\est{X}_{jj}\leq \est{X}_{ij} \}|
    \end{equation}
\end{itemize}
\begin{figure}[h!]
    \centering
    \begin{subfigure}{.5\textwidth}
        \centering
        \includegraphics[scale=0.22]{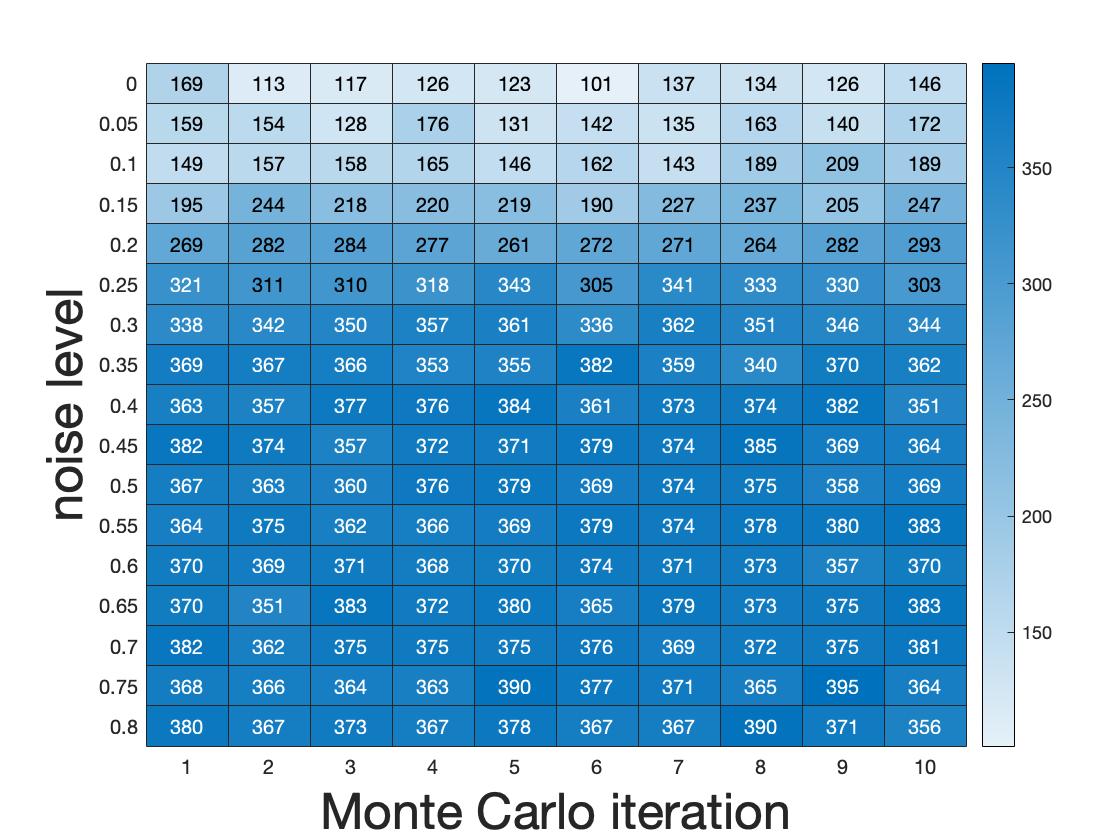}
    \caption{Metric \eqref{eq:metric_1} for $\grampa$.}
    \label{fig:comp_prop_a}
    \end{subfigure}%
    \begin{subfigure}{.5\textwidth}
         \centering
         \includegraphics[scale=0.22]{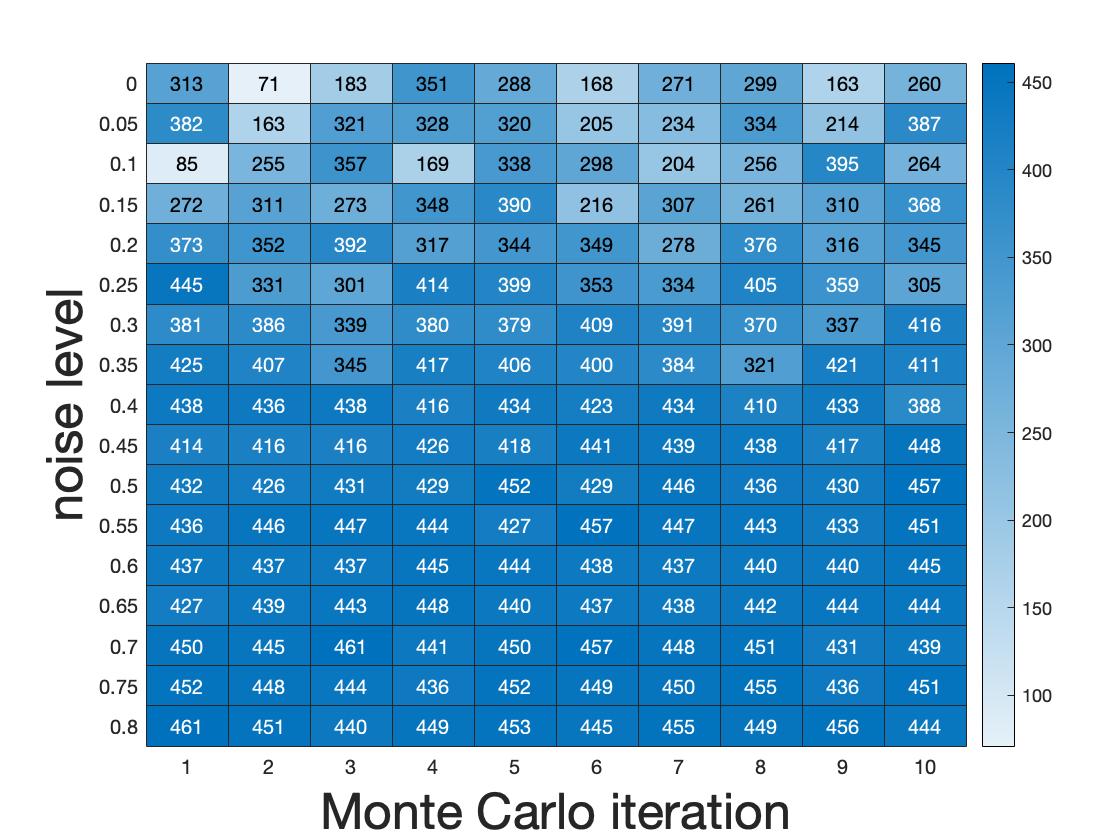}
    \caption{Metric \eqref{eq:metric_1} for $\mdgm$.}
    \label{fig:comp_prop_b}
    \end{subfigure}
    \begin{subfigure}{.5\textwidth}
        \centering
        \includegraphics[scale=0.22]{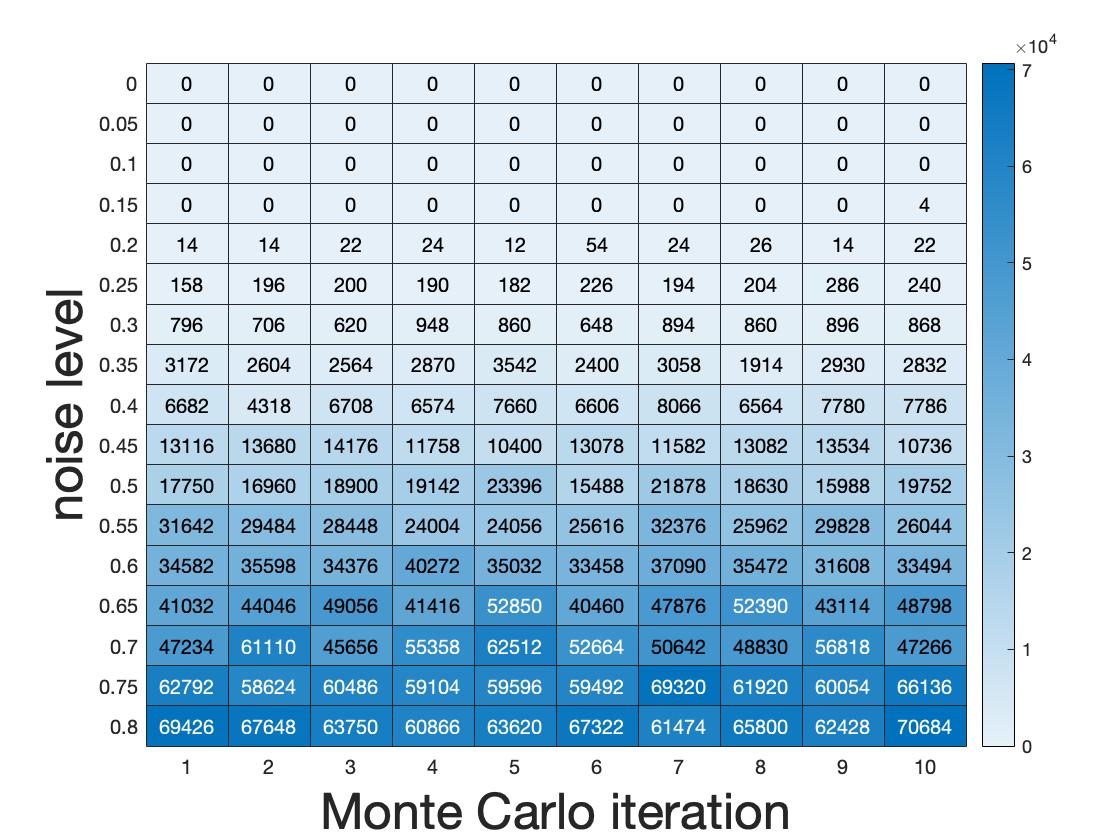}
    \caption{Metric \eqref{eq:metric_2} for $\grampa$.}
    \label{fig:comp_prop_c}
    \end{subfigure}%
    \begin{subfigure}{.5\textwidth}
         \centering
         \includegraphics[scale=0.22]{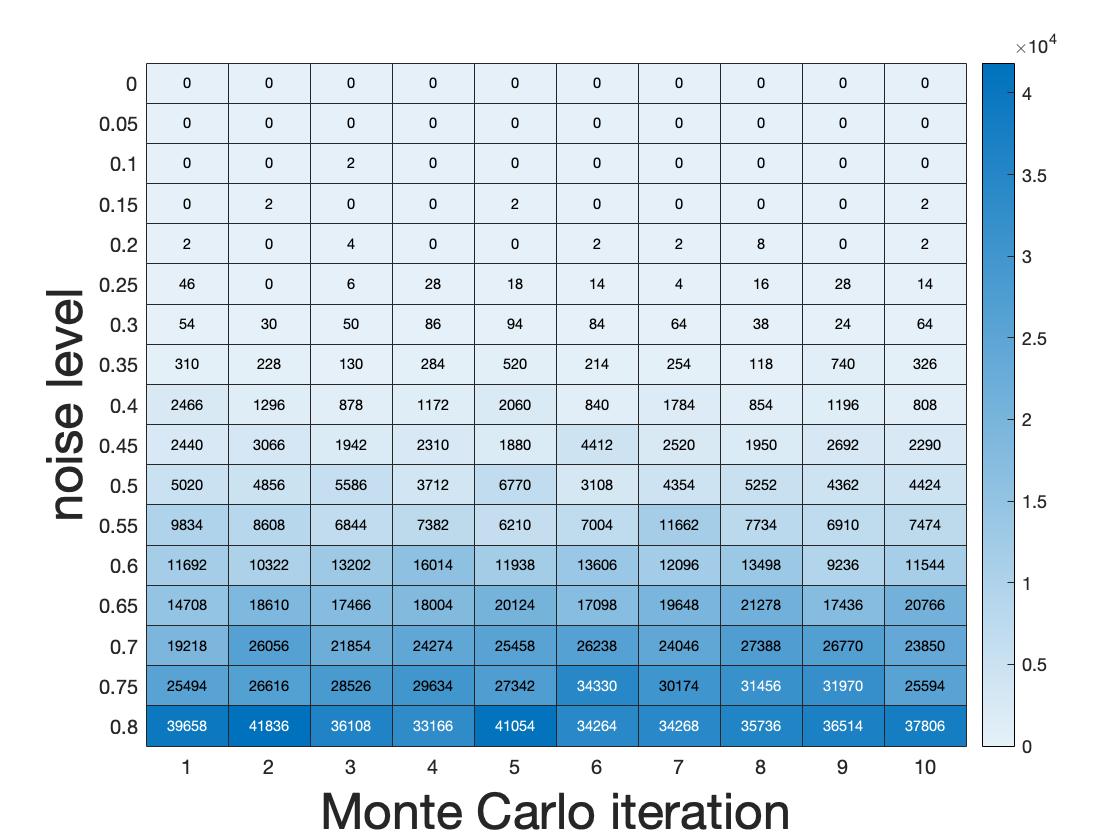}
    \caption{Metric \eqref{eq:metric_2} for $\mdgm$.}
    \label{fig:comp_prop_d}
    \end{subfigure}
    \caption{Heatmap for the metrics \eqref{eq:metric_1} and \eqref{eq:metric_2} for the similarity matrices of $\grampa$ (Figs. \ref{fig:comp_prop_a} and \ref{fig:comp_prop_c}) and $\mdgm$ with $N=125$ (Figs. \ref{fig:comp_prop_b} and \ref{fig:comp_prop_d}). The input graphs were generated using the CGW model with $n=500$ and the noise level corresponds to the parameter $\sigma$. Recall that here a value of $0$ means perfect recovery on that particular realization. }.
    \label{fig:comp_props}
\end{figure}
In Fig. \ref{fig:comp_props} we show the heatmaps for the metrics \eqref{eq:metric_1} and \eqref{eq:metric_2}, for $\grampa$ and $\mdgm$ algorithms. For both algorithms, we do not see diagonal dominance in any of the Monte Carlo samples, however in the low noise regime, both algorithms have perfect recovery (as seen in Fig. \ref{fig:pgd_md_c}). If one compares both algorithms by the metric \eqref{eq:metric_1}, the conclusion would be that $\grampa$ performs slightly better than $\mdgm$, however by the results in Fig. \ref{fig:pgd_md_c}, we know the opposite is true. On the other hand, we see in Figs. \ref{fig:comp_prop_c} and \ref{fig:comp_prop_d} that the metric \eqref{eq:metric_2} is $0$ for algorithms for values of $\sigma$ up to around $0.2$, which coincides with the results in Fig. \ref{fig:pgd_md_c}. In addition, under this metric, $\mdgm$ outperforms $\grampa$, which is also the conclusion of Section \ref{sec:exp_comparison_firstorder}. Overall, in this experiment, metric \eqref{eq:property} seems to be much better correlated with the perfect recovery cases in both methods. 
\subsection{Real data}\label{sec:exp_real_data}
This section focuses on evaluating the performance of the $\mdgm$ algorithm on real data in comparison to $\pgdgm$ and $\grampa$. Before proceeding, it is important to note a few general remarks. Firstly, since some of the datasets are relatively large (approximately ten thousand vertices per graph) and sparse, we have omitted the standardization preprocessing discussed in Section \ref{sec:exp_synthetic_data} to more effectively leverage their sparsity. Secondly, unless specified otherwise, we employ dynamic step sizes derived from \eqref{eq:dynamic_step_md} for $\mdgm$ and \eqref{eq:heuristic_pgd_1} for $\pgdgm$. This choice is based on their performance with synthetic data. The $\grampa$ regularizer $\eta$ is fixed to $0.2$. 
\subsubsection{Computer vision dataset}\label{sec:exp_computer_vision}
We utilize the SHREC'16 dataset \cite{shrec}, comprising 25 shapes (in both high and low resolution) representing a child in various poses. The goal is to match corresponding body parts when presented with two different shapes. The initial step in our pipeline involves converting the triangulated representation of an image (as stored in the database) into an adjacency matrix, a common procedure in computer vision (refer to \cite[Section 1]{Peyre}). We opt for the use of low-resolution versions of each image, as they are already represented by adjacency matrices with an average size of approximately $10,000$ vertices. It is worth noting that, on average, the number of edges in these matrices is only $0.05\%$ of the total possible edges, indicating a high degree of sparsity. To leverage this sparsity, we will omit the input standardization described in Section \ref{sec:exp_synthetic_data}. Another important aspect of the SHREC'16 dataset is that all images have varying sizes, resulting in inputs of the form $A\in\mathbb{R}^{n_A\times n_A}$ and $B\in\mathbb{R}^{n_B\times n_B}$ with $n_A\neq n_B$. It is worth noting that both $\mdgm$ and $\pgdgm$ naturally extend to accommodate this variation, and the same can be said for $\grampa$ (see \cite[Section 4]{Grampa}). We will apply and compare the $\mdgm$, $\pgdgm$, and $\grampa$ algorithms using the Princeton benchmark protocol \cite{princeton_protocol}. This benchmark protocol essentially involves a CDF comparison, defined by the following steps.
\begin{itemize}
    \item \textbf{Normalized geodesic errors. }Given inputs $A$ and $B$, and the output $\hat{\pi}$ of a graph matching algorithm, we calculate the normalized geodesic error as follows: for each node $i$ in $A$, we compute $d_{B}(\hat{\pi}(i),\pi^*(i))$, where $\pi^*$ represents the ground truth matching, and $d_{B}$ denotes the geodesic distance on $B$ (computed as the weighted shortest path distance using the triangulation representation of the image \cite{Peyre}). The normalized error is defined as \[\varepsilon(i):=d_{B}(\hat{\pi}(i),\pi^*(i))/\sqrt{\text{Area}(B)},\] where $\text{Area}(B)$ is the surface area of $B$ (computed using the triangulation representation).
    \item \textbf{CDF computation. }The error cumulative distribution function (CDF) is defined for $t\geq 0$ as
\begin{equation}\label{eq:error_cdf}
\text{CDF}(t) = \sum^{n_A}_{i=1}\mathbbm{1}_{\varepsilon(i)\leq t},
\end{equation}
where $n_A$ is the number of nodes in $A$. Notice that $\operatorname{CDF}(0)$ corresponds to the recovery fraction, or overlap as defined in Section \ref{sec:exp_synthetic_data}. 
\end{itemize} 
\begin{figure}
    \centering
    \includegraphics[scale=0.24]{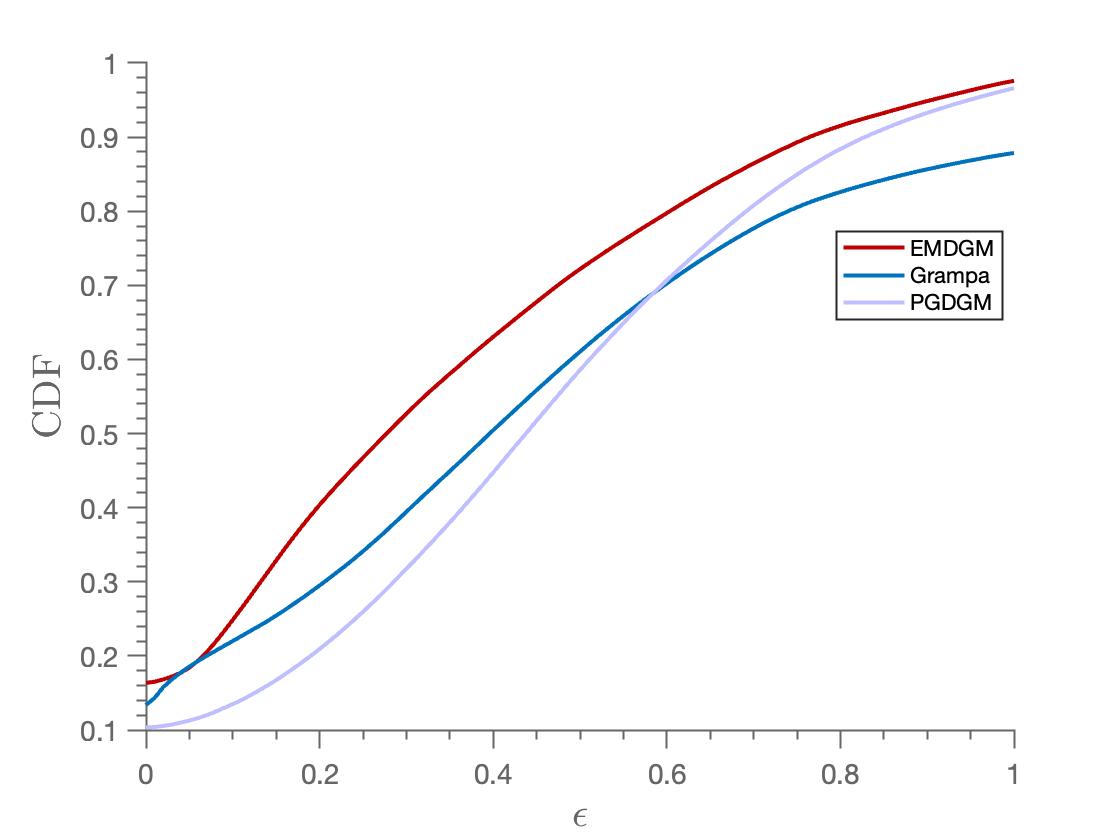}
    \caption{Performance comparison of $\mdgm$, $\pgdgm$, and $\grampa$ on the SHREC'16 dataset, based on the error cumulative distribution function (CDF) defined in \eqref{eq:error_cdf}. We conducted 100 iterations each for $\mdgm$ and $\pgdgm$.}
    \label{fig:comp_shrec16}
\end{figure}

In Fig.\ref{fig:comp_shrec16} we show the average performance of the three algorithms on the SHREC'16 dataset. We see that the performance of $\mdgm$ is the best of the three, while for a small value of $\epsilon$ (around $0.05$), $\grampa$ has a similar performance. It is interesting that for the other values of $\epsilon$ mirror descent has a big gap in performance compared to $\grampa$, because in the graph matching problem we are not trying to explicitly optimize the errors beyond $\epsilon=0$ (which by the definition \eqref{eq:error_cdf} is related to exact recovery). In this case, $\mdgm$ appears to introduce an additional level of regularity beyond what is specified in the optimization problem. We attribute the suboptimal performance of $\pgdgm$ to our difficulty in properly tuning the step-sizes.
\subsection{Autonomous systems}\label{sec:exp_aut_systems}
We use data from the Autonomous Systems dataset, which belongs to the University of Oregon Route Views Project \cite{AS_website}, and is also available at the Stanford Network Analysis Project repository \cite{AS_stanford,AS_lesko}. This data has also been used to evaluate the performance of $\grampa$ and other graph matching algorithms in \cite[Section 4.5]{Grampa}. The dataset consists of a dynamically evolving network of autonomous systems (a collection of IP networks under a common routing policy) that exchange traffic through the Internet. There are a total of nine graphs, with each one representing weekly observations of the autonomous system from March 31, 2001, to May 26, 2001. The node count varies between $10,670$ and $11,174$, while the edge count fluctuates between $22,002$ and $23,409$ over time. Since the number of nodes and edges exhibits these fluctuations over time (see \cite{AS_stanford} for more summarizing statistics), we have implemented a subsampling strategy adopted in \cite[Section 4.5]{Grampa}. We focus on a subset of $1,000$ vertices with the highest degrees, found in all nine networks. The nine resulting graphs can be seen as perturbations of each other. Fixing the first graph (from March 31), as input $A$, and comparing it to the subsequent ones, we observe that the correlation approximately decreases over time (although is not fully monotone). This provides a comparison framework similar to the synthetic data experiments, where the correlation was controlled by the parameter $\sigma$. 
\begin{figure}[h!]
    \centering
        \begin{subfigure}{.5\textwidth}
        \centering
        \includegraphics[scale=0.22]{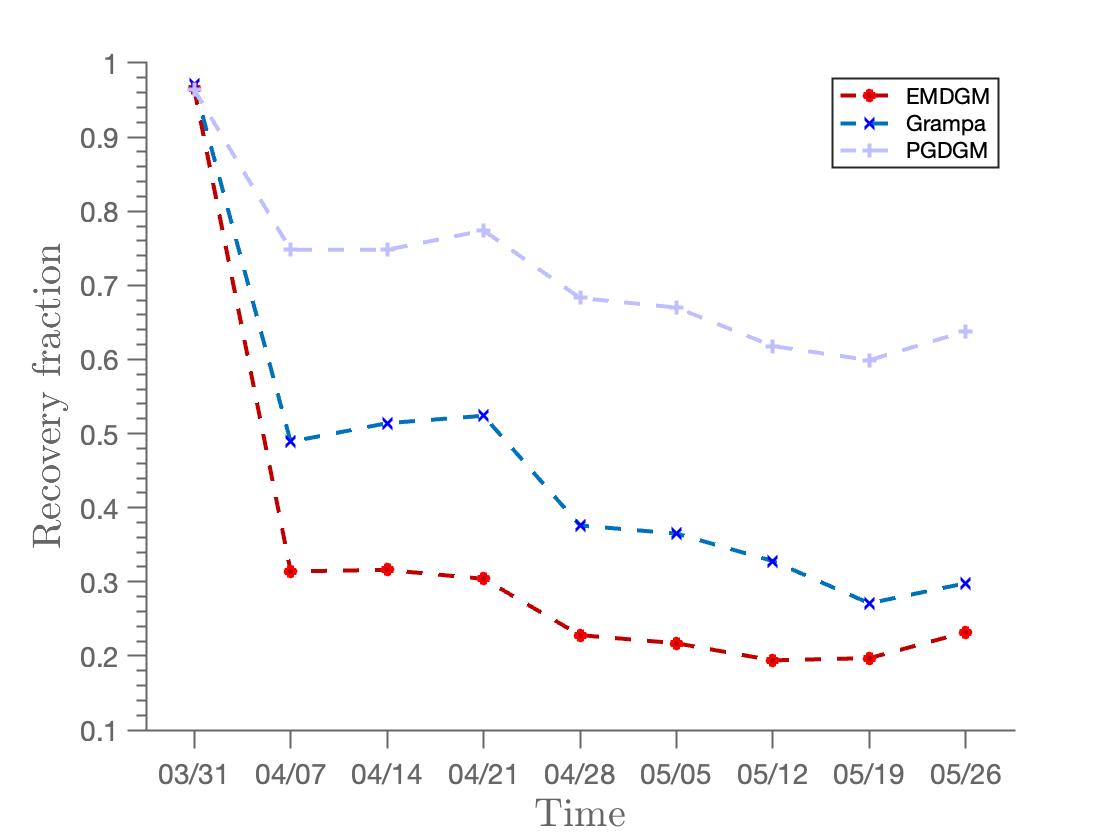}
    \caption{Recovery fraction in a high-degree subsampled graph.}
    \label{fig:comp_auto_a}
    \end{subfigure}%
    \begin{subfigure}{.5\textwidth}
         \centering
         \includegraphics[scale=0.22]{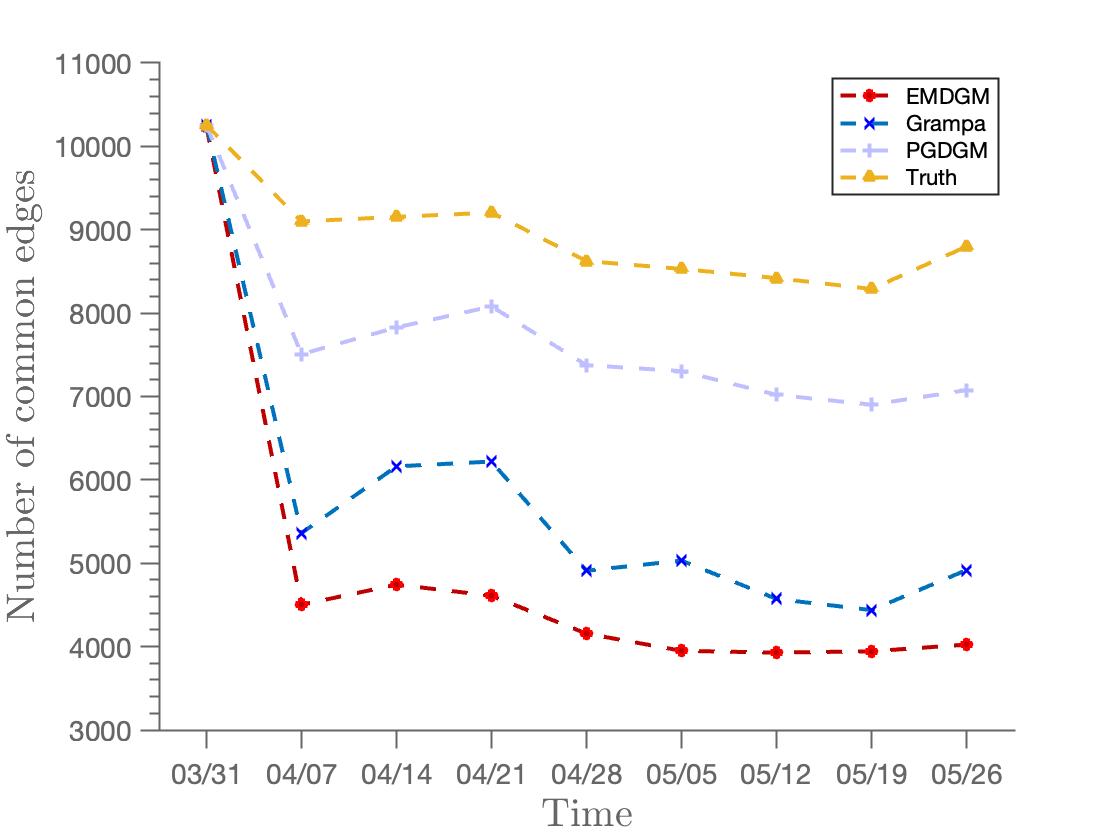}
    \caption{Common edges in a high-degree subsampled graph.}
    \label{fig:comp_auto_b}
    \end{subfigure}
    \begin{subfigure}{.5\textwidth}
        \centering
        \includegraphics[scale=0.22]{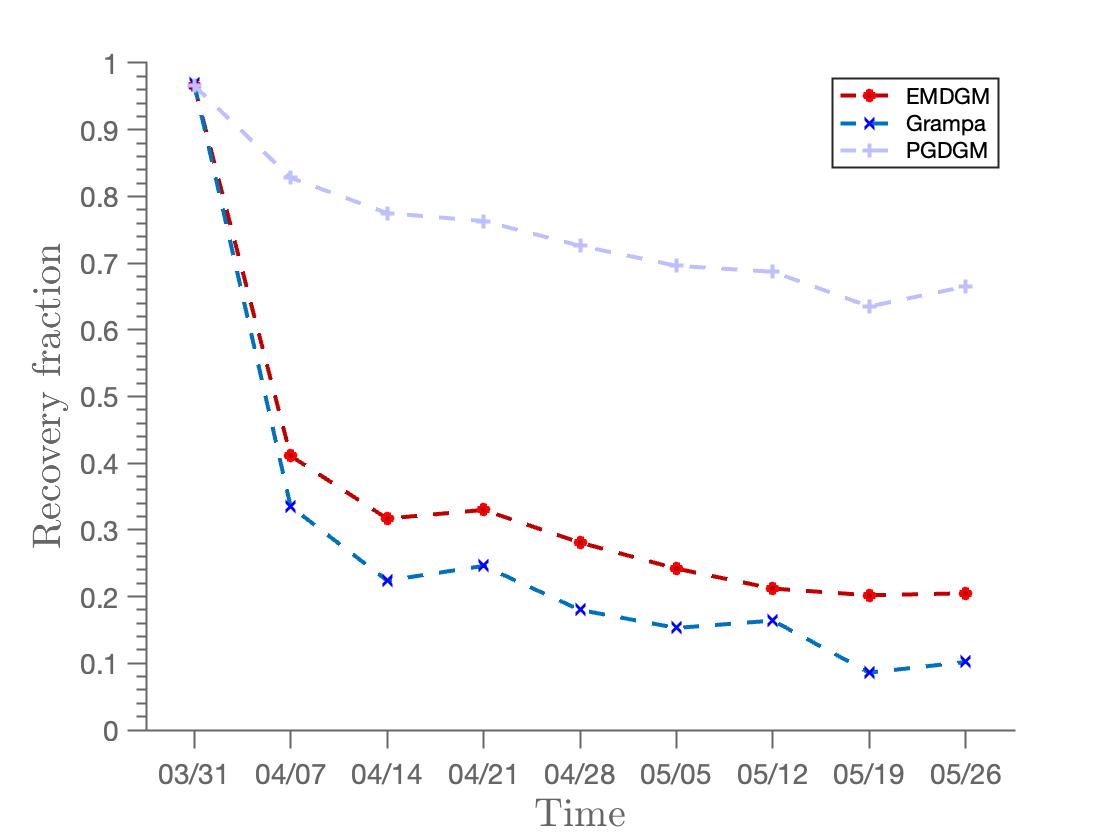}
    \caption{Recovery fraction in a high-degree standardized graph.}
    \label{fig:comp_auto_c}
    \end{subfigure}%
    \begin{subfigure}{.5\textwidth}
         \centering
         \includegraphics[scale=0.22]{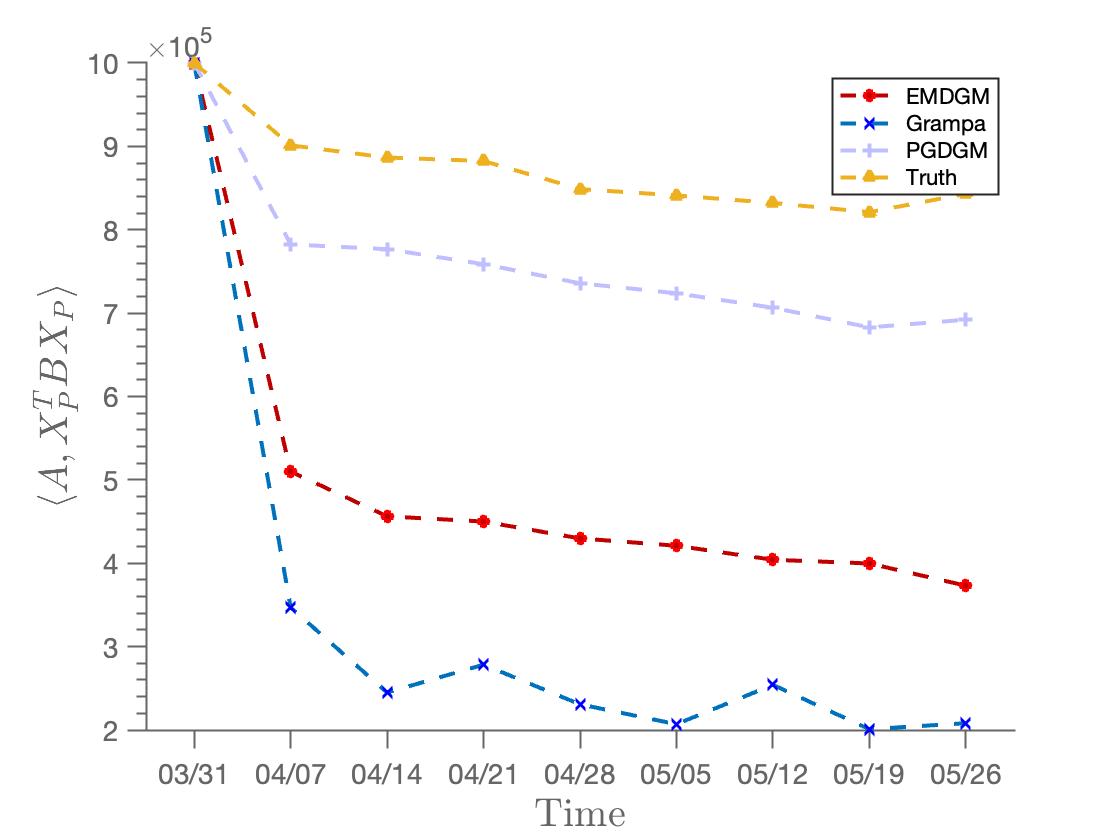}
    \caption{Rescaled objective in a high-degree standardized graph.}
    \label{fig:comp_auto_d}
    \end{subfigure}
    \caption{Comparison of $\mdgm$, $\grampa$, and $\pgdgm$ using a high-degree subsampled graph with $1,000$ vertices for each time instant. The input graph $A$ remains fixed as the graph from March 31, while the input graph $B$ varies (with random permutations of its rows and columns), as indicated on the x-axis.  We used the dynamic step rule for $\mdgm$, derived from \eqref{eq:dynamic_step_md} and the heuristic rule \eqref{eq:heuristic_pgd_1} for $\pgdgm$, with $125$ iterations in both cases.}
    \label{fig:comp_auto}
\end{figure}

In Figure \ref{fig:comp_auto} we compare the output of the three methods $\mdgm$, $\grampa$, $\pgdgm$ under two different metrics. The first is the recovery fraction, plotted in Figure \ref{fig:comp_auto_a} and the other is the rescaled objective $\langle A,{\est{X}_\calP}^TB\est{X}_\calP\rangle_F$ (here $\est{X}_\calP$ represent the output of a graph matching algorithm), plotted in Figure \ref{fig:comp_auto_b}. As noted in \cite{Grampa}, the latter metric is more instructive for this dataset due to the presence of numerous degree-one vertices connected to high-degree vertices, resulting in symmetries that render the ground truth unidentifiable. In this experiment, we apply all three graph matching algorithms to nine instances created by fixing input $A$ as the first graph. Input $B$ is obtained by random node relabeling for each of the nine graphs. In Figures \ref{fig:comp_auto_c} and \ref{fig:comp_auto_d} we plot the same metrics, but  with standardized inputs. For that, we subtract to each matrix their average value and divide them by their standard deviation. Notably, the performance of $\mdgm$ and $\pgdgm$ improves, while that of $\grampa$ declines. This standardization alters the objective function and appears to introduce a `regularization' effect for the methods on the simplex. Further investigation into this phenomenon is warranted, but we leave it for future research. 

While we do not have a full explanation for why $\pgdgm$ outperforms $\mdgm$ in this case, we can provide some partial insights. In certain scenarios, the ``regularity'' of the objective function may be more influential for achieving better results, whereas in other cases, the performance disparity may be primarily attributed to the selection of an appropriate step-size strategy. We believe that the latter plays a greater role in this case, based on the following experiment. We fix the step-size rule, using \eqref{eq:fixed_iter_md} for MD and \eqref{eq:fixed_iter_pgd}, and define the efficiency ratio of the function $E$ (as seen in \cite[Example 9.17]{First_order}) as:

\begin{equation*}
\rho_E = \frac{\sqrt{\log n}, L_{E,\infty}}{L_{E,2}},
\end{equation*}
where it is clear that $\frac{\sqrt{\log{n}}}{\sqrt{n}} \leq \rho_E \leq \sqrt{\log{n}}$. When $\rho_E$ is closer to the lower bound, MD tends to outperform PGD, while the reverse is true when $\rho_E$ is closer to the upper bound. We compute an empirical version of the unobserved quantities $L_{E,\infty}$ and $L_{E,2}$ (which can be regarded as regularity measures for $E$), by taking $10,000$ uniformly distributed samples on $\simpn2$ and computing the maximum of the $\ell_\infty$ and $\ell_2$ norms of the gradient evaluated on those points. The value of $\rho_E$ obtained varies from $0.46$ to $0.53$, while $\frac{\sqrt{\log n}}{\sqrt{n}}=0.0831$ and $\sqrt{\log n}=2.62$. Using this metric, there is not a clear advantage between the two methods. However, when applying rule \eqref{eq:fixed_iter_pgd}, we observe that $\pgdgm$ significantly underperforms $\mdgm$ (approximately ten times worse). It appears that the heuristic rule \eqref{eq:heuristic_pgd_1} plays a crucial role in accelerating the convergence of $\pgdgm$. Unfortunately, we currently lack a similar rule for $\mdgm$, but this is an avenue for future exploration. 


%
\subsection{Facebook networks}\label{sec:exp_fb}
We use Facebook's friendship network data from \cite{facebook_data}, consisting of $11,621$ individuals affiliated with Stanford University. In contrast to the previous examples, where multiple networks were available, here, we have a single large network. As a result, we need to employ a subsampling strategy to generate correlated graphs for evaluating the graph matching methods. In the context of seeded graph matching, \cite{YuXuLin} employed a specific sampling strategy to evaluate methods based on seed quality. However, since our objective is different, our sampling strategy also differs from theirs. The complete dataset contains $136,660$ (unweighted) edges, with a degree distribution closely resembling a power-law pattern (density proportional to $d^{-1}$), and an average degree of approximately $100$. We obtain two correlated graphs $A,B$ as follows. First, we create graph $H$ by selecting $1,000$ vertices uniformly at random, forming the induced subgraph. From the parent graph $H$, we retain each edge with a probability of $s$ to create graph $A$. We independently repeat this process to generate graph $B$ (as usual, we shuffle the labels of $B$ uniformly before applying the graph matching algorithms). Intuitively, a higher value of $s$ corresponds to a stronger correlation between the graphs, with $s=1$ implying isomorphism. Notice that our sampling scheme closely resembles the ``parent graph'' definition in \cite{PedGloss} for the CER model.
\begin{figure}[h!]
    \centering\includegraphics[scale=0.24]{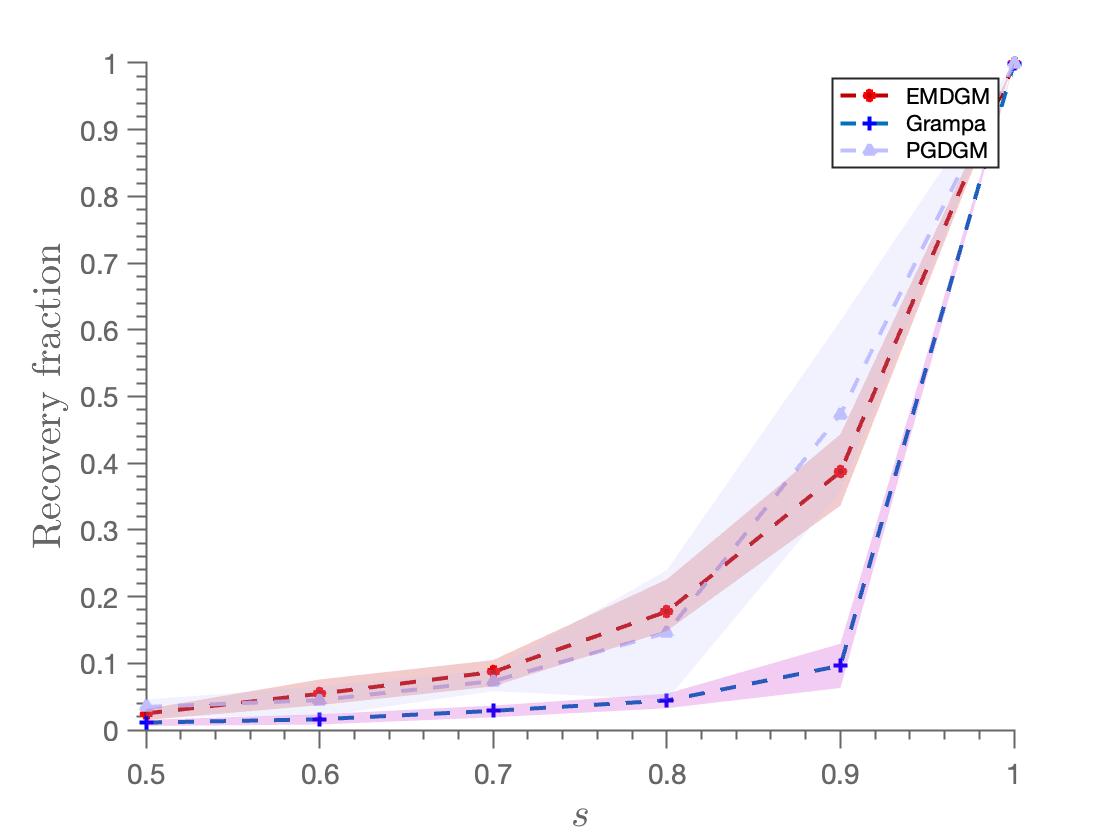}
    \caption{Comparison of graph matching algorithms on subsamples of $1,000$ edges of a Facebook friendship network. We plot the average over $15$ Monte Carlo (over the uniform subsamples). We used the dynamic step rule for $\mdgm$, derived from \eqref{eq:dynamic_step_md} and the heuristic rule \eqref{eq:heuristic_pgd_1} for $\pgdgm$, with $125$ iterations in both cases.}
    \label{fig:comp_fb}
\end{figure}

In Figure \ref{fig:comp_fb} we plot the recovery fraction for $\mdgm$, $\pgdgm$ and $\grampa$. As expected, all three methods achieve exact recovery in the isomorphic case $s=1$. However, for smaller values of $s$, there is a significant gap in performance between $\mdgm$ and $\pgdgm$ (which are comparable) and $\grampa$. The more pronounced drop in performance, compared to the synthetic data experiments presented in Section \ref{sec:exp_synthetic_data}, is also expected given the power-law behavior exhibited by this network. An interesting observation can be made from the confidence intervals in Figure \ref{fig:comp_fb}: $\mdgm$ shows less variance across the Monte Carlo subsamples. To quantify this, we computed the average standard deviation over the $6$ instances of $s$. For $\mdgm$, this value is $0.0126$, while for $\pgdgm$, it is significantly higher at $0.0604$.

\section{Concluding remarks}\label{sec:conclusion}
In this work, we present a novel graph matching algorithm based on a convex relaxation of the permutation set to the unit simplex. We introduce an iterative mirror descent scheme that offers closed-form expressions for the iterates, resulting in enhanced efficiency compared to competing projecting gradient methods. Theoretical analysis demonstrates that, when coupled with a standard greedy rounding procedure, our method can exactly recover the ground truth matching in the noiseless case of the correlated Gaussian Wigner model. Experimental results support this claim, showing that our proposed algorithm can achieve exact recovery even in the presence of significant noise. In terms of statistical performance (\rev{as captured by the recovery fraction}), it consistently outperforms other convex relaxation-based methods with similar time complexity, \rev{such as \texttt{Grampa}, \texttt{Umeyama} and several other considered in \cite[Section 5]{Grampa} (c.f. Remark \ref{rem:exp_other_algos}).} 

The following are potential avenues for future research. 
\begin{itemize}
    \item An intriguing theoretical question is to extend the guarantees to the case where $\sigma\neq 0$ in the CGW model. This presents two main challenges. First, understanding the eigenvector structure of the iterates in \eqref{eq:md_dynamic_update} is nontrivial, given its Hadamard product definition and the complexity of the gradient expression. Second, using Lemma \ref{lem:sufficient_prop} becomes less straightforward when not all pairs satisfy the condition \eqref{eq:property}. To address this, one possible approach is to demonstrate that the number of pairs not satisfying \eqref{eq:property}, as given in \eqref{eq:metric_2}, converges to zero. However, this seems to present significant technical challenges.
    \item Extending recovery guarantees to other models is also an important theoretical objective. A common approach in the literature of iterative methods involves using uniform concentration bounds, which is essentially a worst-case scenario approach. To follow this strategy, it will be crucial to gain a deeper understanding of the structure of the iterates in \eqref{eq:md_dynamic_update} beyond $\Xone$. In particular, we need to improve our ability to ``localize'' $\Xk$. Otherwise, there is a risk that, in the worst-case scenarios, the gradient may point in the wrong direction. It is not hard to provide examples of such scenarios.
    \item From an experimental perspective, it is intriguing to explore acceleration in the mirror descent dynamic. Although accelerated MD schemes have been considered in the literature, as discussed in \cite{Krichene}, they involve Bregman projections that are fast in theory but lack a closed form, making their implementation more involved. It would be interesting to assess the practical performance of these schemes. 
    \item Another experimental question involves finding a rate selection method for EMD that enhances its statistical performance, similar to the heuristic \eqref{eq:heuristic_pgd_1} in the case of PGD. In general, any other techniques used to improve the performance of first-order methods can also be tested here, including adaptive restart, early stopping, and more.
\end{itemize}

\clearpage
\bibliographystyle{plain}
\bibliography{biblio}

\newpage
\appendix
%
\section{The population dynamics}
We introduce and analyze the population (or average) version of the mirror descent dynamics. For that, notice that all randomness in the dynamics defined by \eqref{eq:md_dynamic_update} is given by the input matrices $A,B$. In addition, the dependence on these matrices in the iterative process is captured by the term $\nabla E(X)=A^2X+XB^2-2AXB$ (as it is clear from line $3$ of Algorithm \ref{alg:mirror_simplex}). We define the \emph{population dynamics}, by replacing the (random) loss function $E(X)$ by the (deterministic) \emph{population loss}, defined as $\expec_{A,B} E(X)$. This is equivalent to replace the term $\nabla E(X)$ in \eqref{eq:md_dynamic_update} by its expectation $\popgradient(X):=\expec_{A,B} \nabla E(X)$, which is proven as part of Lemma \ref{lem:population_gradmatrix}. More formally, this dynamic, which is deterministic, is defined given a sequence  $(\gamma_k)^\infty_{k=1}$ of positive rates, by the recursion
\begin{align}\label{eq:pop_dyn_init}
\Xzero&=J/n^2,\\ \label{eq:pop_dyn_update}
\Xkone&=n\cdot\frac{\Xk\odot \hadexp\left(-\gamma_k\popgradient(\Xk)\right)}{\|\Xk\odot \hadexp\left(-\gamma_k\popgradient(\Xk)\right)\|_{1,1}}.
\end{align}
For each $k$, we define the normalization factor by
\[\overline{N}_k:=\frac1n \|\Xk\odot \hadexp\left(-\gamma_k\popgradient(\Xk)\right)\|_{1,1},\]
which is a positive real number.
%
\subsection{One iteration ($N=1$)}
The following result shows that Algorithm \ref{alg:mirror_simplex} succeeds in finding the ground truth $X^*$ in one step (i.e., when $N=1$). We will assume without loss of generality that $X^*=\id$, and consequently in the definition of the population gradient, we have$\popgradient(X)=\expec_{A,B} \nabla E(X)$, where the joint law of $A,B$ is $\calW(n,\sigma,X^*=\id)$. The following result shows that in the case of the population dynamic, one step is sufficient for recovering the ground truth if we applied Algorithm \ref{alg:gmwm} directly after the first iteration.
\begin{proposition}\label{prop:pop_dyn_onestep}
Let $\Xone\in \matR^{n\times n}$ be the first iterate in the dynamic defined by \eqref{eq:pop_dyn_init} and \eqref{eq:pop_dyn_update}, defined by $\gamma_0\in \matR_+$. Then $\greedy(\Xone)=\id$, for any $\gamma_0>0$. 
\end{proposition}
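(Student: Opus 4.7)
The plan is to mirror the strategy used in the proof of Theorem \ref{prop:noiseless_dyn_onestep}, but with $\gradE(\Xzero)$ replaced by its expectation $\popgradient(\Xzero)$. Since $\Xzero=J/n^2$ has all entries equal, the explicit update \eqref{eq:pop_dyn_update} gives
\[
\frac{\sqrt{\Xone_{ii}\Xone_{jj}}}{\Xone_{ij}}
\;=\;\exp\!\left(-\tfrac{\gamma_0}{2}\bigl[\popgradient(\Xzero)_{ii}+\popgradient(\Xzero)_{jj}-2\popgradient(\Xzero)_{ij}\bigr]\right).
\]
Using linearity of $\popgradient$ in $X$, the problem reduces to showing that
\begin{equation}\label{eq:plan_key}
\Delta_{ij}\;:=\;\popgradient(J)_{ii}+\popgradient(J)_{jj}-2\popgradient(J)_{ij}\;<\;0\qquad \forall\, i\neq j,
\end{equation}
since then $\Xone_{ij}<\sqrt{\Xone_{ii}\Xone_{jj}}\leq\frac12(\Xone_{ii}+\Xone_{jj})$ by AM--GM, so $\Xone$ satisfies \eqref{eq:property_sym}; as $\Xone$ is symmetric (being the Hadamard product of the symmetric matrices $\Xzero$ and $\hadexp(-\gamma_0 \popgradient(\Xzero))$), Lemma \ref{lem:sufficient_prop}$(ii)$ yields $\greedy(\Xone)=\id$ for any $\gamma_0>0$.

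The core of the argument is then a direct computation of the three terms composing $\popgradient(J)=\expec[A^2J]+\expec[JB^2]-2\expec[AJB]$ under $A,B\sim\calW(n,\sigma,\id)$, i.e.\ $B=A+\sigma Z$ with $A,Z$ independent $\operatorname{GOE}(n)$. Using the GOE covariance structure ($\expec[A_{ij}A_{kl}]=\operatorname{Var}(A_{ij})\mathbbm{1}_{\{i,j\}=\{k,l\}}$ with variance $1/n$ off-diagonal and $2/n$ on-diagonal), one finds $\expec[(A^2)_{ik}]=\frac{n+1}{n}\mathbbm{1}_{i=k}$, hence $\expec[A^2J]_{ij}=\frac{n+1}{n}$ for all $i,j$. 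Since $B$ is distributionally $\sqrt{1+\sigma^2}$ times a $\operatorname{GOE}(n)$ matrix, the same computation gives $\expec[JB^2]_{ij}=(1+\sigma^2)\frac{n+1}{n}$ for all $i,j$. Finally, $\expec[AJB]_{ij}=\expec[AJA]_{ij}$ (the $\sigma Z$ contribution vanishes by independence), and enumerating the pairings $\{i,k\}=\{l,j\}$ gives
\[
\expec[AJB]_{ij}\;=\;
\begin{cases}
\frac{n+1}{n} & \text{if } i=j,\\
\frac{1}{n} & \text{if } i\neq j.
\end{cases}
\]

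Assembling these pieces, the key observation is that the contributions of $\expec[A^2J]$ and $\expec[JB^2]$ are constant in $(i,j)$, hence cancel identically in $\Delta_{ij}$; in particular the noise level $\sigma$ plays no role in the relevant difference. Only the $-2\expec[AJB]$ term contributes, giving
\[
\Delta_{ij}\;=\;-2\cdot\tfrac{n+1}{n}-2\cdot\tfrac{n+1}{n}+2\cdot 2\cdot\tfrac{1}{n}\;=\;-4\qquad \forall\, i\neq j,
\]
which verifies \eqref{eq:plan_key} and completes the proof. The main (mild) obstacle is the bookkeeping of Gaussian second moments, particularly for $\expec[AJA]$, where one must track which index pairings give rise to nonzero covariances; the pleasant surprise is that the $\sigma$-dependent contribution has the same value on diagonal and off-diagonal entries of $\popgradient(J)$, so it exits the scene automatically and the result holds uniformly over $\sigma>0$.
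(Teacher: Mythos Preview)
Your proof is correct and follows essentially the same approach as the paper's. The paper computes $\popgradient(J/n^2)$ by invoking Lemma~\ref{lem:population_gradmatrix} (rather than redoing the Gaussian moment computations from scratch) and obtains the explicit form $\Xone=\overline{N}'_0\,\hadexp\!\left(\frac{2\gamma_0}{n^2}\id\right)$, from which diagonal dominance (and hence $\greedy(\Xone)=\id$) is immediate; since here all diagonal entries of $\popgradient(J)$ are equal and all off-diagonal entries are equal, your use of \eqref{eq:property_sym} via AM--GM is equivalent to the paper's diagonal dominance argument.
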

\begin{proof}
From \eqref{eq:pop_dyn_init} and \eqref{eq:pop_dyn_update}, we have 
\[\Xone=\overline{N}^{-1}_0\Xzero\odot \hadexp\left(-\gamma_0\popgradient(\Xzero)\right),\]
where $\overline{N}_0>0$ is the normalization term. Using the definition for the population gradient, Lemma \ref{lem:population_gradmatrix} and by a simple calculation, we get 
\begin{align*}
\Xone &= \oN^{-1}_0J\odot \hadexp\left(\frac{-\gamma_0}{n^2}\big((2+\frac{n+1}{n}\sigma^2)J-2\id\big)\right)\\
&= \oN^{-1}_0 J\odot \hadexp\left(\frac{-\gamma_0}{n^2}(2+\frac{n+1}{n}\sigma^2)J\right)\odot \hadexp\left(\frac{2\gamma_0}{n^2}\id\right)\\
&=\oN'_0 \hadexp\left(\frac{2\gamma_0}{n^2}\id\right),
\end{align*}
where we define $\oN_0':=\frac{\oN^{-1}_0}ne^{-\frac{\gamma_0}n(2+\frac{n+1}n\sigma^2)}$. 
Given that $e^{\frac{2\gamma_0}{n^2}}>1$ for $\gamma_0>0$, we deduce that $\Xone$ satisfies \eqref{eq:diago_dom}, which implies that $\greedy(\Xone)=\id$.
\end{proof}
%
\subsection{Multiple iterations ($N>1$)}
In practice, we do not have a priori information on how many iterations we need to perform before the rounding step, but we expect Algorithm \ref{alg:mirror_simplex} to be relatively robust with respect to the choice of the number of iterations $N$ (at least in some range), for some choice of the step size sequence $(\gamma_k)^N_{k=1}$. The next result proves that this is the case for the population dynamic, in the sense that all its iterates can be rounded to the ground truth, provided that a certain condition on the rate sequence $(\gamma_k)_{k\geq 0}$ holds. 
%
\begin{theorem}
\label{thm:dyn_multistep}
Consider $\Xzero,\Xone,\cdots,X^{(N)}\in\matR^{n\times n}$ the iterates of the dynamic defined by \eqref{eq:pop_dyn_init} and \eqref{eq:pop_dyn_update}, defined by the rate sequence $(\gamma_k)^{N-1}_{k=0}\in\matR_+$. If $\sigma \leq 1$ and it holds
\begin{equation}\label{eq:condition_rates}
\sum^{N-1}_{k=0}\gamma_k<\frac{n-1}{4}\log{2},
\end{equation}
then $\greedy(\Xk)=\id$, for all $1\leq k\leq N$.
\end{theorem}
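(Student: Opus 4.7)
The key structural observation is that the population recursion preserves the two-dimensional subspace $\spn\{J, \id\}$: starting from $\Xzero = J/n^2$, every iterate has the form $\Xk = a_k J + b_k \id$ with $a_k, b_k \geq 0$. Indeed, since $\nabla E(X) = A^2 X + XB^2 - 2AXB$ is linear in $X$, a direct calculation under the CGW model (analogous to Lemma \ref{lem:population_gradmatrix}) gives
\begin{equation*}
\popgradient(J) = (2+\mu)J - 2\id, \qquad \popgradient(\id) = \mu\, \id, \qquad \mu := \frac{(n+1)\sigma^2}{n},
\end{equation*}
so by linearity $\popgradient(\Xk) \in \spn\{J,\id\}$. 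Since entrywise exponentiation, Hadamard multiplication by $\Xk$, and $\|\cdot\|_{1,1}$-normalization all preserve the property ``all off-diagonal entries equal, all diagonal entries equal,'' an easy induction closes the invariance.

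Second, I would reduce the matrix dynamics to a scalar recursion on the ratio $\rho_k := (a_k + b_k)/a_k$ of the diagonal to the off-diagonal entry of $\Xk$, which measures how strictly diagonally dominant $\Xk$ is. Plugging $\Xk = a_k J + b_k\id$ into \eqref{eq:pop_dyn_update} and using the constraint $\|\Xkone\|_{1,1} = n$, one finds $a_k = 1/(n-1+\rho_k)$ for $k\geq 1$ and the update
\begin{equation*}
\rho_{k+1} = \rho_k \exp\Bigl[\frac{\gamma_k\bigl(2-(\rho_k-1)\mu\bigr)}{n-1+\rho_k}\Bigr], \quad k\geq 1, \qquad \rho_0 = 1, \quad \rho_1 = e^{2\gamma_0/n^2}.
\end{equation*}
This scalar recursion has fixed point $\rho^\star := 1 + 2/\mu$, and the exponent is nonnegative precisely when $\rho_k \leq \rho^\star$; in that regime $(\rho_k)_{k\geq 0}$ is non-decreasing.

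The heart of the argument is then an induction showing $1 \leq \rho_k \leq \sqrt 2$ for all $k\geq 0$ (and $\rho_k > 1$ for $k\geq 1$ since $\gamma_0 > 0$). Assuming the bound up to step $k$, one has $(\rho_k - 1)\mu \leq (\sqrt 2 - 1)\mu \leq 2(\sqrt 2 - 1) < 2$ using $\sigma\leq 1 \Rightarrow \mu \leq (n+1)/n \leq 2$, so the exponent is nonnegative. Bounding the numerator by $2\gamma_k$ and the denominator by $n$ (using $\rho_k \geq 1$) and telescoping gives
\begin{equation*}
\log \rho_K \;\leq\; \frac{2\gamma_0}{n^2} + \frac{2}{n}\sum_{k=1}^{K-1}\gamma_k \;\leq\; \frac{2}{n}\sum_{k=0}^{K-1}\gamma_k \;<\; \frac{2}{n}\cdot\frac{(n-1)\log 2}{4} \;<\; \frac{\log 2}{2},
\end{equation*}
so $\rho_K < \sqrt 2$. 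A final sanity check: $\sigma\leq 1$ guarantees $\rho^\star \geq (3n+1)/(n+1) > \sqrt 2$ for $n\geq 2$, so the inductive ceiling $\sqrt 2$ is comfortably inside the attractor basin $[1,\rho^\star]$.

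Finally, $\rho_k > 1$ means every diagonal entry of $\Xk$ strictly exceeds every off-diagonal entry, so $\Xk$ trivially satisfies \eqref{eq:property_sym}, and Lemma \ref{lem:sufficient_prop}$(ii)$ yields $\greedy(\Xk) = \id$. The main obstacle is extracting the two-dimensional invariance into the clean scalar recursion on $\rho_k$; once this is done, keeping $\rho_k$ inside $[1,\rho^\star]$ is a direct telescoping calibrated by the step-size budget $(n-1)\log 2/4$, and the constraint $\sigma \leq 1$ is used only to make this budget compatible with the ceiling $\rho^\star$.
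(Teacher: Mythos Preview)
Your proposal is correct and follows essentially the same route as the paper: both exploit the two-value structure of the population iterates (constant on the diagonal, constant off it), reduce to a scalar recursion on the ratio of these two values, and control that ratio via the step-size budget \eqref{eq:condition_rates}. The only cosmetic differences are that you track $\rho_k = \Xk_{11}/\Xk_{12}$ and run an explicit induction with ceiling $\sqrt{2}$, whereas the paper tracks the reciprocal $r_k = 1/\rho_k$, first shows $r_k > 1/2$ by a direct bound (valid for any positive $r_j$, so no induction is needed there), and then deduces $r_k < 1$.
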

Before proving this theorem, we will prove two lemmas that characterize the structure of iterates in the population dynamic.
\begin{lemma}\label{lem:Xk_pop_dyn}
    Let $(\Xk)^\infty_{k=0}$ be a trajectory of the population dynamic given by some step size sequence $(\gamma_k)^\infty_{k=0}\in \matR_+$. We have that $\Xk$ is symmetric for all $k$ and that \begin{equation}\label{eq:Xk_pop_dyn}
        \Xk_{ij}=\begin{cases}
        \Xk_{11}\text{ if }i=j\\
        \Xk_{12}\text{ if }i\neq j
        \end{cases}
    \end{equation}
\end{lemma}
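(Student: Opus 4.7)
The plan is to proceed by induction on $k$, showing the stronger statement that each iterate can be written as $\Xk = a_k \id + b_k J$ for some scalars $a_k, b_k \in \matR$. This is equivalent to the stated structure (with $\Xk_{11} = a_k + b_k$ and $\Xk_{12} = b_k$) and immediately implies symmetry. The base case $k=0$ is trivial, since $\Xzero = J/n^2 = 0\cdot\id + \tfrac{1}{n^2} J$.

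For the inductive step, assume $\Xk = a_k \id + b_k J$. The key observation is that such a matrix is invariant under conjugation by any permutation, $\Pi \Xk \Pi^T = \Xk$ for all $\Pi \in \calP_n$, since both $\id$ and $J$ are. Next I would verify that the population gradient is equivariant under this conjugation, namely
\begin{equation*}
\popgradient(\Pi X \Pi^T) = \Pi \popgradient(X) \Pi^T \quad \text{for all } \Pi \in \calP_n.
\end{equation*}
This follows from the formula $\nabla E(X) = A^2 X + X B^2 - 2 AXB$ together with the fact that, under the CGW model with $X^* = \id$, the joint distribution of $(A, B)$ is invariant under $(A, B) \mapsto (\Pi A \Pi^T, \Pi B \Pi^T)$: indeed $A \sim$ GOE$(n)$ is orthogonally invariant, and $B = A + \sigma Z$ with $Z$ an independent GOE. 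Applying this invariance to $\Xk$ yields $\popgradient(\Xk) = \Pi \popgradient(\Xk) \Pi^T$ for all $\Pi$. A matrix fixed by all such conjugations must have all diagonal entries equal (use transpositions on indices $(i,i)$ and $(j,j)$) and all off-diagonal entries equal (same trick on $(i,j)$ and $(k,l)$), so $\popgradient(\Xk) = c_k \id + d_k J$ for some scalars.

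Finally, I would check that the update preserves the form $a\id + b J$. The scaled gradient $-\gamma_k \popgradient(\Xk)$ is of this form, and the Hadamard exponential of such a matrix is again of the same form (since its diagonal entries are all $e^{-\gamma_k(c_k+d_k)}$ and off-diagonal entries are all $e^{-\gamma_k d_k}$). The Hadamard product of two $a\id + bJ$-type matrices clearly stays in this class, since diagonal entries multiply with diagonal entries and off-diagonal with off-diagonal. Finally, division by the positive scalar $\oN_k/n$ preserves the structure. Hence $\Xkone = a_{k+1} \id + b_{k+1} J$ and the induction closes.

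I do not anticipate any real obstacle here; the only substantive ingredient is the permutation equivariance of $\popgradient$, which is a direct consequence of the symmetry of the CGW model under the assumption $X^* = \id$. Everything else is a routine verification that the two elementary operations appearing in the EMD update (entrywise exponentiation and Hadamard product with $\Xk$, followed by scalar normalization) are closed on the two-parameter family $\{a\id + bJ : a,b \in \matR\}$.
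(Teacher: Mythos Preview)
Your proof is correct, and it takes a somewhat different route from the paper's. Both argue by induction on $k$ and both check closure of the class $\{a\id+bJ\}$ under the Hadamard operations in the EMD update; the difference is in how one shows that $\popgradient$ preserves this class. The paper simply invokes the explicit formula for the population gradient (computed in Lemma~\ref{lem:population_gradmatrix}, equation~\eqref{eq:lemma_gradmat3}),
\[
\popgradient(X)=(2+\sigma^2)\tfrac{n+1}{n}X-\tfrac{2}{n}\bigl(\Tr(X)\id+X^T\bigr),
\]
from which symmetry and the constant-diagonal/constant-off-diagonal structure are immediate. You instead argue symmetry-wise: the CGW law with $X^*=\id$ is invariant under $(A,B)\mapsto(\Pi A\Pi^T,\Pi B\Pi^T)$, hence $\popgradient$ is equivariant under conjugation by permutations, so it maps the fixed-point set $\{a\id+bJ\}$ to itself.

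Your argument is a bit more conceptual and has the virtue of generalizing verbatim to any model whose joint law is invariant under simultaneous permutation conjugation, without computing a single moment. The paper's approach is more direct because the formula~\eqref{eq:lemma_gradmat3} is already in hand and is needed anyway for the quantitative recursion in Lemma~\ref{lem:recursion}. Either way the proof is routine.
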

\begin{proof}
    The proof is by induction. Clearly $\Xzero$ is symmetric. Assume that $\Xk$ is symmetric. Notice that 
    \begin{equation}\label{eq:Xkone_Xk}
        \Xkone=\oN^{-1}_k\Xk\odot \hadexp\left(-\gamma_k\popgradient(\Xk)\right),
    \end{equation}
    for a constant $\oN^{-1}_k>0$. The proof follows by the inductive hypothesis, by the fact that $\popgradient(X)$ is symmetric if $X$ is symmetric (see \eqref{eq:lemma_gradmat3}) and the fact that the Hadamard product of two symmetric matrices is symmetric. We also prove \eqref{eq:Xk_pop_dyn} by induction. The base case $k=0$ is evident. We assume that $\Xk$ satisfies the property \eqref{eq:Xk_pop_dyn}. Using \eqref{eq:Xkone_Xk}, the fact that $\popgradient(X)$ satisfies \eqref{eq:Xk_pop_dyn} if $X$ does, and the fact that \eqref{eq:Xk_pop_dyn} is closed under Hadamard products, the conclusion follows.
\end{proof}
The previous lemma asserts that each entry of $\Xk$ can take only two values, depending on its location inside the matrix (diagonal or off-diagonal). In this case, it is easy to see that the greedy rounding of $\Xk$, for any $k$, will be equal to the identity (the ground truth) if and only if $\Xk_{11}>\Xk_{12}$, which is in fact equivalent to say that $\Xk$ is diagonally dominant, i.e. for all $i\in [n]$ we have $\Xk_{ii}>\Xk_{ij}$, for all $j\neq i$.
Since the iterates of the population dynamics $\Xk$, up to the step $K$ (for a given $K\in\matN$), are completely determined by the sequence of learning rates $(\gamma_{k})^{K}_{k=0}$, we can characterize them as a function of them. To prove the diagonal dominance, it will be convenient to define the ratio between the off-diagonal and diagonal entries, for $k\geq 1$, as follows,
\[r_k(\gamma_0,\cdots,\gamma_{k-1}):=\frac{\Xk_{12}}{\Xk_{11}}.\]
Note that, in the previous definition, $r_k$ is a function from $\matR^k$ to $\matR$. The following lemma provides a recursive characterization for these ratios. 
%
\begin{lemma}\label{lem:recursion}
    For any given sequence $(\gamma_k)^\infty_{k=0}$, the ratio between off-diagonal and diagonal entries $r_k$ defined above, satisfy the following 
    \begin{align*}
        r_k(\gamma_0,\cdots,\gamma_{k-1})&=r_{k-1}(\gamma_0,\cdots,\gamma_{k-2})e^{-\gamma_{k-1}\frac{a_\sigma r_{k-1}(\gamma_0,\cdots,\gamma_{k-2})-(a_\sigma-2)}{n(n-1)r_{k-1}(\gamma_0,\cdots,\gamma_{k-2})+n}},\text{ for }k\geq 2\\
        r_1(\gamma_0)&=e^{-2\gamma_0/n^2},
    \end{align*}
    where $a_\sigma:=2+\frac{n+1}n\sigma^2$.
\end{lemma}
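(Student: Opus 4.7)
The plan is to apply Lemma \ref{lem:Xk_pop_dyn}, which guarantees that every iterate has the two-value structure $\Xk = x_d^{(k)} \id + x_o^{(k)}(J - \id)$ with $r_k = x_o^{(k)}/x_d^{(k)}$, and then to derive the recursion directly from the entrywise multiplicative update. Taking the $(1,1)$ and $(1,2)$ entries of \eqref{eq:pop_dyn_update} and dividing, the common normalization $\oN_{k-1}$ cancels and one obtains
\[
r_k = r_{k-1}\exp\!\Big(-\gamma_{k-1}\big[\popgradient(X^{(k-1)})_{12}-\popgradient(X^{(k-1)})_{11}\big]\Big).
\]
The task therefore reduces to computing the off-diagonal/diagonal gap of $\popgradient(X^{(k-1)})$.

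Since $\gradE(X)=A^2X+XB^2-2AXB$ is linear in $X$, so is $\popgradient$. Using $X^{(k-1)}=x_d^{(k-1)}\id+x_o^{(k-1)}(J-\id)$, linearity gives $\popgradient(X^{(k-1)})=x_d^{(k-1)}\popgradient(\id)+x_o^{(k-1)}\popgradient(J-\id)$. The quantity $\popgradient(J)=a_\sigma J-2\id$ is already extracted in the proof of Proposition \ref{prop:pop_dyn_onestep}. For $\popgradient(\id)$, I would use that under the CGW model with $X^*=\id$ we have $B = A + \sigma Z$ with $A,Z$ i.i.d.\ $\mathrm{GOE}(n)$, so $\gradE(\id)=(A-B)^2=\sigma^2 Z^2$, and standard GOE second-moment identities ($Z_{ii}$ has variance $2/n$, $Z_{ij}$ variance $1/n$) give $\expec[Z^2] = \frac{n+1}{n}\id$. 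Hence $\popgradient(\id)=(a_\sigma-2)\id$. Collecting the $\id$ and $J$ pieces and using $a_\sigma-(a_\sigma-2)=2$ for the cancellation of the $\id$-contribution at the off-diagonal, a short bookkeeping calculation yields
\[
\popgradient(X^{(k-1)})_{12}-\popgradient(X^{(k-1)})_{11}=x_d^{(k-1)}\big(a_\sigma r_{k-1}-(a_\sigma-2)\big).
\]

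Finally, the normalization of $X^{(k-1)}$ combined with the two-value structure forces $n\, x_d^{(k-1)}+n(n-1)\,x_o^{(k-1)}=1$, which rearranges to $x_d^{(k-1)}=\frac{1}{n(n-1)r_{k-1}+n}$. Substituting this into the expression above gives exactly the claimed recursion for $k\geq 2$. The base case $r_1=e^{-2\gamma_0/n^2}$ is already essentially done inside the proof of Proposition \ref{prop:pop_dyn_onestep}: that computation shows $\Xone\propto \hadexp(2\gamma_0\id/n^2)$, so $\Xone_{12}/\Xone_{11}=e^{-2\gamma_0/n^2}$.

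No real obstacle is anticipated; the argument is essentially linear-algebraic bookkeeping built on top of Lemma \ref{lem:Xk_pop_dyn} and the linearity of $\popgradient$ in $X$. The one mildly subtle point is the identity $\popgradient(\id)=(a_\sigma-2)\id$, which relies on the specific GOE variance structure and must be handled carefully so that the resulting constant lines up with the $a_\sigma=2+\frac{n+1}{n}\sigma^2$ appearing in the statement; all other steps are routine.
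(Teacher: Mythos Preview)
Your proof is correct and follows essentially the same route as the paper's: both compute the gap $\popgradient(\Xkon)_{12}-\popgradient(\Xkon)_{11}=a_\sigma\Xkon_{12}-(a_\sigma-2)\Xkon_{11}$ and then use the simplex constraint $n\Xkon_{11}+n(n-1)\Xkon_{12}=1$ to express $\Xkon_{11}$ in terms of $r_{k-1}$; the paper invokes the closed form \eqref{eq:lemma_gradmat3} directly, while you recover the same thing via linearity from $\popgradient(\id)$ and $\popgradient(J)$.

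One small slip worth fixing: the pointwise identity $\gradE(\id)=(A-B)^2$ is false since $A$ and $B$ need not commute (in fact $A^2+B^2-2AB=(A-B)^2+[A,B]$). This does not affect your conclusion, because $[A,B]=\sigma(AZ-ZA)$ has zero expectation by independence and mean zero of $A,Z$, so $\popgradient(\id)=\sigma^2\expec[Z^2]=(a_\sigma-2)\id$ still holds; just replace the claimed matrix equality by this expectation argument.
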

\begin{proof}
    To save notation, we will write $r_k$ instead of $r_k(\gamma_0,\cdots,\gamma_{k-1})$. Using Lemma \ref{lem:Xk_pop_dyn} and \eqref{eq:lemma_gradmat3} we get that 
    \[\popgradient(\Xk)_{12}-\popgradient(\Xk)_{11}=a_\sigma \Xk_{12}-(a_\sigma-2)\Xk_{11}.\]From this and the definition of the population dynamics, we see that 
   \begin{align}
       r_k&=r_{k-1}e^{-\gamma_{k-1}(\popgradient(\Xk)_{12}-\popgradient(\Xk)_{11})}\nonumber\\ \label{eq:rec_proof_1}
       &=r_{k-1}e^{-\gamma_{k-1}(a_\sigma \Xkon_{12}-(a_\sigma-2)\Xkon_{11})},
   \end{align}
   for $k\geq 2$. Similarly, 
   \begin{align*}
       r_1&=\frac{\Xzero_{12}}{\Xzero_{11}}e^{-\gamma_0(a_\sigma\Xzero_{12}-(a_\sigma-2)\Xzero_{11})}\\
       &=e^{-\frac{\gamma_0}{n^2}(a_\sigma-(a_\sigma-2))}\\
       &=e^{-2\frac{\gamma_0}{n^2}}.
   \end{align*}
   On the other hand, given the fact that $\Xl\in \simpn2$  we have, for any $l\in \matN$, 
   \[n\Xl_{11}+n(n-1)\Xl_{12}=1,\]
   which together with the fact $\Xkon_{12}=r_{k-1}\Xkon_{11}$, imply the following,
\begin{align*}
    \Xkon_{12}=\frac{r_{k-1}}{n(n-1)r_{k-1}+n}\\
    \Xkon_{11}=\frac{1}{n(n-1)r_{k-1}+n}.
\end{align*}
Plugging this on \eqref{eq:rec_proof_1} finishes the proof.
\end{proof}
%
\begin{proof}[Proof of Thm.\ref{thm:dyn_multistep}]
Given Lemma \ref{lem:Xk_pop_dyn}, it suffices to compare $\Xk_{11}$ and $\Xk_{12}$ for each $k>1$. Indeed, if we prove for a given $k$ that $\Xk_{12}<\Xk_{11}$ or, equivalently, that $r_k<1$, then $\Xk$ will satisfy the diagonal dominance property on all of its rows, thus we will get $\greedy(\Xk)=\id$. We now fix a $k\in [N-1]$. From Lemma \ref{lem:recursion}, it is easy to see that
\begin{equation}\label{eq:proof_recursion2}
    r_{k}=r_1\prod^{k-1}_{j=1} e^{-\gamma_j\frac{a_\sigma r_j-a_\sigma+2}{n(n-1)r_j+n}}
\end{equation}
On the other hand, notice that $a_\sigma<4$ (because $\sigma\leq 1$), from which we deduce that
\[\frac{a_\sigma r_j-a_\sigma+2}{n(n-1)r_j+n}\leq\frac{a_\sigma r_j}{n(n-1)r_j+n}<\frac4{n(n-1)},\]
for all $j\leq k-1$. This in turn implies that
\[r_k\geq e^{-2\gamma_0/n^2}\prod^{k-1}_{j=1}e^{-4\gamma_j/n(n-1)}\geq e^{-\frac4{n(n-1)}\sum^{k-1}_{j=0}\gamma_j}.\]
Given that the sequence of rates satisfies
\[\frac{n(n-1)}{4}\log{2}>\sum^{N-1}_{j=0}\gamma_j\geq \sum^{k-1}_{j=0}\gamma_j,\]
we have that 
\[r_k>\frac12\geq\frac{a_\sigma-2}{a_\sigma},\]
where the last inequality follows from the fact that $\sigma\leq 1$. 
Noticing that $k$ was arbitrary in $[N-1]$, we conclude that $r_j>\frac{a_\sigma-2}{a_\sigma}$, for all $j\in[N-1]$.
From this, and from the fact that $\gamma_j>0$, we deduce that for all $2\leq j\leq N-1$ we have that 
\[e^{-\gamma_j\frac{a_\sigma r_j-a_\sigma+2}{n(n-1)r_j+n}}<1,\]
which combined with \eqref{eq:proof_recursion2} gives that $r_k<1$, for $2\leq k\leq N$ (the fact that $r_1<1$ is evident).
\end{proof}
%
\section{Proof of Lemma \ref{lem:equivariance}}\label{sec:proofs}
    Notice that for any $\Pi\in\calP_n$ and $X\in \matR^{n\times n}$ we have
    \[\|AX-X\Pi B\Pi^T\|^2_F=\|AX\Pi-X\Pi B\|^2_F,\]
    by the unitary invariance of the Frobenius norm. Then, given the permutation invariance of the set $\calK$, if $\hat{X}\in \calS(A,B)$, then $\hat{X}\Pi^T\in \calS(A,\Pi B\Pi^T)$. It remains to show that if $\hat{X}_\calP$ is the output of the algorithm described in the lemma with input $A,B$, then the $\hat{X}_\calP\Pi^T$ will its output for the inputs $A,\Pi B\Pi^T$. For this is sufficient to notice that (given the mechanism of Algorithm \ref{alg:gmwm}) that if the largest element of $\hat{X}$ is the entry $(i,j)$ then the largest of $\hat{X}\Pi^T$ is the entry $(i,\pi(j))$, where $\pi$ is the permutation map associated to $\Pi$. 
\rev{
\section{Proof of Lemma \ref{lem:strong_convex}}\label{app:proof_strong_convex}
Assuming that $X^*=\id$, we have $B=A+\sigma Z$, where $A,Z$ are i.i.d $\operatorname{GOE}(n)$ distributed. It is a well-known fact that the $\operatorname{GOE}(n)$ measure is absolutely continuous (i.e., possesses a continuous density) w.r.t the Lebesgue measure in $\matR^{n(n+1)/2}$ (see \cite[Eq. 2.5.1]{anderson_guionnet_zeitouni_2009}). Note that, since a symmetric matrix is defined by its lower triangular part (including the main diagonal), we use the obvious identification between the space of $n \times n$ symmetric matrices and $\matR^{n(n+1)/2}$. In the following, whenever we refer to the distribution of $A$, $B$, or $Z$, we will be referring to their lower triangular part.}

\rev{Defining the event \[\mathcal{E}:=\{A,B \text{ have a common eigenvalue}\},\] we would like to establish $\prob(\mathcal{E})=0$, as this directly implies the desired conclusion, i.e., $\|AX-XB\|^2_F$ is strongly convex. Note first that, conditioning on $A$, $B$ can be regarded a an affine map of $ Z$. Then it is easy to see that if $\sigma>0$, and given that $A,Z$ are independent, the distribution of $B$ given $A$ is absolutely continuous w.r.t the Lebesgue measure in $\matR^{n(n+1)/2}$ (this follows by a simple linear change of variables). Denote $\lambda_1,\ldots,\lambda_n$ the eigenvalues of $A$, and for $i\in [n]$, define the event 
\[
\mathcal{E}'_i:=\{\lambda_i \text{ is an eigenvalue of }B \}.
\]
Clearly, $\mathcal{E}'_i=\{\operatorname{det}(B-\lambda_i \id)=0\}$. Our objective is to prove that $\prob (\mathcal{E}'_i|A)=0$. Since the distribution of $B$ given $A$ is continuous w.r.t to the Lebesgue measure, it suffices to show that $\mathcal{E}'_i$ has zero Lebesgue measure. Now, notice that $\operatorname{det}(B-\lambda_i\id)$ it is a non-constant (since $\sigma>0$) multivarite polynomial in the (lower triangular) entries of $B$. It is a known fact that the zeros of a non-constant multivariate polynomial (say in $m$ variables) has Lebesgue measure zero in $\mathbb{R}^m$. For a proof, see e.g., \cite{proof_zeros_poly}. This proves that $\prob (\mathcal{E}'_i|A)=0$. From this we deduce that $\prob (\cup_{i\in[n]}\mathcal{E}'_i|A)=0$. It is easy to see that 
\[
\prob(\mathcal{E})=\int_{\matR^{n(n+1)/2}}\prob(\cup_{i\in[n]}\mathcal{E}'_i|A)d\mu\big((A_{ij})_{i\leq j}\big)=0,
\]
where $\mu$ represents the joint distribution of the lower triangular part of $A$. From this the result follows.
}

\section{Proof of Lemma \ref{lem:proba_pos_eigenvectors}}\label{app:proof_prob_unif_sphere}
Note that for all $i\in[n]$, we have $v_i$ is marginally distributed uniformly on $\mathbb{S}^{n-1}$. This is equivalent to $v_i\stackrel{dist}{=}{\frac{g}{\|g\|_2}}$, where $g$ is a standard normal vector in $\matR^n$. It is easy to see that $\operatorname{sign}(v_i(k))\stackrel{dist}{=}\operatorname{sign}(g(k))$, and the variables $\operatorname{sign}(g(1)),\ldots,\operatorname{sign}(g(n))$ are i.i.d Rademacher distributed. Then, for any given $i\in [n]$, we have 
\begin{equation*}
    \prob(\forall k,\enskip v_i(k)>0)=\frac1{2^n}.
\end{equation*}
The result follows by applying the union bound.
%
\section{Useful calculations}\label{app:useful_results}
We gather here some useful calculations, used at different parts of the analysis.  
\begin{lemma}\label{lem:expec_GOE_sq}
Let $A$ be a $GOE(n)$ distributed random matrix, then 
\begin{align}\label{eq:expec_A2}
    \expec(A^2)&=\frac{n+1}n\id\\ \label{eq:expec_AkronA}
    \expec(A\otimes A)&= \frac1n\vec(\id)\vec(\id)^T+\frac1nT ,  
\end{align}
where $T$ is a $n^2\times n^2$ matrix that represents the transposition operator defined on $\matR^{n\times n}$.
\end{lemma}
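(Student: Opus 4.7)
The plan is to compute both expectations entrywise, using the distributional properties of $\operatorname{GOE}(n)$. Recall that for $A\sim\operatorname{GOE}(n)$, the entries $A_{ij}$ with $i\le j$ are mutually independent with $A_{ij}\sim\calN(0,1/n)$ for $i<j$ and $A_{ii}\sim\calN(0,2/n)$, together with $A_{ji}=A_{ij}$. The key consequence I will use is that two entries $A_{ik}$ and $A_{jl}$ coincide as random variables iff $\{i,k\}=\{j,l\}$ as multisets, and are independent otherwise.

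For \eqref{eq:expec_A2}, I would expand $(A^2)_{ij}=\sum_k A_{ik}A_{kj}$ and take expectations term by term. When $i\neq j$, every summand pairs two entries whose index multisets $\{i,k\}$ and $\{k,j\}$ differ, so the factors are independent and mean zero, giving $\expec[(A^2)_{ij}]=0$. When $i=j$, $\expec[(A^2)_{ii}]=\sum_k\expec[A_{ik}^2]=(n-1)\cdot\tfrac{1}{n}+\tfrac{2}{n}=\tfrac{n+1}{n}$, which yields \eqref{eq:expec_A2}.

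For \eqref{eq:expec_AkronA}, I adopt the usual column-stacking $\vec$ convention, under which $(A\otimes A)_{(i-1)n+j,\,(k-1)n+l}=A_{ik}A_{jl}$ and $\vec(\id)_{(i-1)n+j}=\delta_{ij}$. This yields $\bigl(\vec(\id)\vec(\id)^T\bigr)_{(i-1)n+j,\,(k-1)n+l}=\delta_{ij}\delta_{kl}$, while the transposition operator $T$, characterized by $T\vec(M)=\vec(M^T)$ for every $M\in\matR^{n\times n}$, has entries $T_{(i-1)n+j,\,(k-1)n+l}=\delta_{il}\delta_{jk}$. By the independence principle above, $\expec[A_{ik}A_{jl}]$ vanishes unless $\{i,k\}=\{j,l\}$, which leaves three subcases: (a) $i=j\neq k=l$, giving $\expec[A_{ik}^2]=1/n$; (b) $i=l\neq k=j$, giving $\expec[A_{ij}^2]=1/n$; and (c) $i=j=k=l$, giving $\expec[A_{ii}^2]=2/n$. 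A direct check shows that $\tfrac{1}{n}\bigl(\delta_{ij}\delta_{kl}+\delta_{il}\delta_{jk}\bigr)$ reproduces each subcase: in (a) only the first Kronecker product is $1$, in (b) only the second, and in (c) both are simultaneously $1$, which is precisely what produces the doubled diagonal variance $2/n$.

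The only real obstacle is notational: keeping the Kronecker indexing and the $\vec$-convention consistent when identifying the entries of $T$ and $\vec(\id)\vec(\id)^T$. Once these conventions are pinned down, the proof is a short case analysis and the formulas \eqref{eq:expec_A2} and \eqref{eq:expec_AkronA} follow without further computation.
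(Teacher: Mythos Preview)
Your proposal is correct and follows essentially the same approach as the paper: an entrywise computation using the covariance structure of $\operatorname{GOE}(n)$. The only difference is presentational---for \eqref{eq:expec_AkronA} the paper computes $\expec[AXA]$ for an arbitrary test matrix $X$ and then identifies the resulting operator, whereas you match entries of $A\otimes A$ directly against those of $\vec(\id)\vec(\id)^T$ and $T$; the underlying case analysis on $\expec[A_{ik}A_{jl}]$ is identical.
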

\begin{proof}
Take $i,j\in [n]$. We have 
\begin{align*}
    \expec(A^2)_{ij}&=\sum^n_{k=1}\expec(A_{ik}A_{kj})\\
    &=\sum^n_{k=1}\frac1n\delta_{ij}(1+\delta_{ki}),
\end{align*}
from which we get \eqref{eq:expec_A2}. To prove \eqref{eq:expec_AkronA}, we take any matrix $X\in \matR^{n\times n}$ and compute
\begin{align*}
    \expec(AXA)_{ij}&=\sum^n_{k,k'=1}\expec(A_{ik}X_{kk'}A_{k'j})\\
    &= \sum^n_{k=1}\frac1nX_{kk}\delta_{ij}(1+\delta_{ki})+\frac1nX_{ji}\\
    &= \frac1n\delta_{ij}(\Tr(X)+X_{ii})+\frac1n(1-\delta_{ij})X_{ji},
\end{align*}
and from this we deduce that 
\begin{align*}
    \expec(AXA)&= \frac1n(\Tr(X)\id+X^T),\\
    &= \frac1n(\langle X,\id\rangle_F\id+X^T).
\end{align*}
Given that $(A\otimes A) \vec(X)=AXA$, it is clear that the previous implies the expression \eqref{eq:expec_AkronA}.
\end{proof}
\begin{lemma}\label{lem:population_gradmatrix}
Let $A,B\sim \calW(n,\sigma,X^*=\id)$. Recall the definition $E(X)=\|AX-XB\|^2_F$ or, in vector notation, $E(\vec(X))=\vec(X)^TH\vec(X)$, where $H=(\id\otimes A-B\otimes \id)^2$. We have 
\begin{equation}\label{eq:lemma_gradmat1}
    \nabla \expec_{A,B}E(\vec(X))=\expec_{A,B} \nabla E(\vec(X))=\expec (H)\vec(X)
\end{equation}
and
\begin{equation}\label{eq:lemma_gradmat2}
    \expec(H) = (2+\sigma^2)\big(\frac{n+1}n\big)(\id \otimes \id)-\frac2n\big(\vec(\id)\vec(\id)^T+T\big),
\end{equation}
where $T\in \matR^{n^2\times n^2}$ is the matrix representing the transpose transformation in $\matR^{n\times n}$. In particular, we have (in matrix language)
\begin{equation}\label{eq:lemma_gradmat3}
    \expec_{A,B}\nabla E(X)=(2+\sigma^2)\big(\frac{n+1}n\big)X-\frac2n\big(\Tr(X)\id+X^T\big)
\end{equation}
\end{lemma}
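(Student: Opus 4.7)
}
The plan is to first dispatch the interchange of gradient and expectation, then reduce everything to the three moments $\expec(A^2)$, $\expec(B^2)$, and $\expec(B\otimes A)$, each of which is either given by Lemma \ref{lem:expec_GOE_sq} or reduces to it via the model assumption $B = A + \sigma Z$ with $A,Z$ i.i.d.\ $\operatorname{GOE}(n)$.

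First I would establish \eqref{eq:lemma_gradmat1}. Since $E(\vec(X)) = \vec(X)^T H \vec(X)$ is a quadratic form in $\vec(X)$ whose coefficients are polynomials of degree at most two in the Gaussian entries of $A$ and $B$, the integrand and its derivative in $X$ both have polynomial growth uniformly in bounded neighbourhoods of any fixed $X$; dominated convergence then justifies swapping $\nabla$ and $\expec$. Using the Kronecker identities $\vec(AXB) = (B\otimes A)\vec(X)$ (valid since $A,B$ are symmetric) together with the gradient formula $\nabla E(X) = A^2 X + XB^2 - 2AXB$ stated in the paper, one obtains
\begin{equation*}
    \nabla E(\vec(X)) = \bigl(\id\otimes A^2 + B^2\otimes\id - 2\,B\otimes A\bigr)\vec(X) = H\vec(X),
\end{equation*}
where the final equality is the direct expansion of $H = (\id\otimes A - B\otimes\id)^2$ using $(X\otimes Y)(U\otimes V) = XU\otimes YV$. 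Taking expectations then yields the claim.

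Next I would compute $\expec(H)$ by linearity, handling each of the three summands. For $\expec(\id\otimes A^2) = \id\otimes\expec(A^2)$, Lemma \ref{lem:expec_GOE_sq} gives $\expec(A^2) = \tfrac{n+1}{n}\id$. For $\expec(B^2\otimes \id)$, writing $B = A + \sigma Z$ and expanding, the cross terms $\expec(AZ + ZA)$ vanish by independence and zero mean of $Z$, so $\expec(B^2) = (1+\sigma^2)\tfrac{n+1}{n}\id$. Similarly, $\expec(B\otimes A) = \expec(A\otimes A) + \sigma\,\expec(Z)\otimes\expec(A) = \tfrac{1}{n}\vec(\id)\vec(\id)^T + \tfrac{1}{n}T$, again by Lemma \ref{lem:expec_GOE_sq} and independence. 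Collecting terms gives
\begin{equation*}
    \expec(H) = (2+\sigma^2)\tfrac{n+1}{n}(\id\otimes\id) - \tfrac{2}{n}\bigl(\vec(\id)\vec(\id)^T + T\bigr),
\end{equation*}
which is \eqref{eq:lemma_gradmat2}.

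Finally, to obtain \eqref{eq:lemma_gradmat3} I would simply unpack the action of $\expec(H)$ on $\vec(X)$ and translate back to matrix notation. The first term acts as multiplication by $(2+\sigma^2)\tfrac{n+1}{n}$ on $\vec(X)$; the rank-one term satisfies $\vec(\id)\vec(\id)^T\vec(X) = \langle\vec(\id),\vec(X)\rangle\vec(\id) = \Tr(X)\vec(\id)$; and by definition of the transposition matrix, $T\vec(X) = \vec(X^T)$. Reassembling and applying $\vec^{-1}$ yields \eqref{eq:lemma_gradmat3}. There is no genuine obstacle here; the whole argument is a careful bookkeeping exercise whose only subtlety is the clean vanishing of the cross terms involving $Z$, which rests on the independence of $A$ and $Z$ together with $\expec(Z)=0$.
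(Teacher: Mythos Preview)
Your proposal is correct and follows essentially the same route as the paper: expand $H$ into $\id\otimes A^2 + B^2\otimes\id - 2\,B\otimes A$, use $B = A + \sigma Z$ with independence to kill the cross terms, invoke Lemma~\ref{lem:expec_GOE_sq} for the surviving second moments, and then translate back via $\vec^{-1}$. The only cosmetic difference is that you justify the gradient--expectation swap via dominated convergence, whereas the paper simply appeals to linearity of expectation applied to the quadratic form $\vec(X)^T H\vec(X)$; both are fine here.
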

\begin{proof}
Both equalities in \eqref{eq:lemma_gradmat1} follow from $\expec_{A,B}(E(\vec(X)))=\vec(X)^T\expec_{A,B}(H)\vec(X)$ and linearity of expectation. On the other hand, 
a simple calculation shows that 
\begin{align*}
    \expec(H)&=\expec(\id\otimes A^2)+\expec(B^2\otimes \id)-2\expec(B\otimes A)\\
    &=\expec(\id\otimes A^2)+\expec(A^2\otimes \id)+\sigma^2\expec(Z^2\otimes \id)-2\expec(A\otimes A). 
\end{align*}
From this and Lemma \ref{lem:expec_GOE_sq}, the expression \eqref{eq:lemma_gradmat2} follows. Equality \eqref{eq:lemma_gradmat3} follows by translation of \eqref{eq:lemma_gradmat1} and \eqref{eq:lemma_gradmat2} to the matrix language.
\end{proof}
\section{Useful facts about positive semidefinite matrices}\label{app:psd_facts}
The following result proves that the entrywise exponential of a conditionally positive definite matrix is positive definite. Recall that a $n\times n$ symmetric matrix $M$ is said to be conditionally p.s.d (resp. conditionally p.d) if for all $v\in\matR^{n\times n}$ such that $v^T\ones=0$ and $v\neq 0$ we have $v^TMv\geq0$ (resp. $v^TMv>0$). 
\begin{theorem}\label{thm:condp_to_exppd}
Let $M\in \matR^{n\times n}$ be  conditionally p.s.d. Then the following statements hold.
\begin{enumerate}[label=(\roman*)]
    \item $\hadexp(tM)$ is p.s.d. for all $t\geq 0$.
    \item If, in addition, $M$ is conditionally p.d, then $\hadexp(tM)$ is p.d. for all $t>0 $.
    \item $\hadexp(M)$ is p.d. if and only if $M_{ii}+M_{jj}>2M_{ij}$ for all $i\neq j$.
\end{enumerate}
\end{theorem}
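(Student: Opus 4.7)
The plan is to reduce the conditionally positive semidefinite case to the genuinely p.s.d. case via a rank-two correction, then handle (i) by the Schur product theorem and (ii)--(iii) via the classical strict positive-definiteness of the Gaussian kernel on distinct points.

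First I would prove a decomposition lemma: any symmetric conditionally p.s.d. matrix $M$ splits as
\begin{equation*}
    M = P + b\mathbf{1}^T + \mathbf{1}b^T + c\,\mathbf{1}\mathbf{1}^T,\quad b := \tfrac{1}{n}M\mathbf{1},\ c := -\tfrac{1}{n^2}\mathbf{1}^T M\mathbf{1},
\end{equation*}
with $P \succeq 0$. The p.s.d. factor is $P := J_\perp M J_\perp$, where $J_\perp := I - \tfrac{1}{n}\mathbf{1}\mathbf{1}^T$ is the orthogonal projector onto $\{v : v^T\mathbf{1}=0\}$; conditional p.s.d. of $M$ is precisely equivalent to $P \succeq 0$, and a direct expansion of $J_\perp M J_\perp$ (using the symmetry of $M$) confirms the additive formula. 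Exponentiating entrywise yields the multiplicative factorization
\begin{equation*}
    \hadexp(tM) = e^{tc}\, D_t\, \hadexp(tP)\, D_t, \qquad D_t := \operatorname{diag}(e^{tb_1},\ldots, e^{tb_n}),
\end{equation*}
so positive (semi-)definiteness of $\hadexp(tM)$ and $\hadexp(tP)$ coincide, since $D_t$ is invertible and $e^{tc}>0$.

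Statement (i) then follows by writing $\hadexp(tP) = \sum_{k=0}^{\infty}\tfrac{t^k}{k!} P^{\odot k}$: each Hadamard power $P^{\odot k}$ is p.s.d.\ by the Schur product theorem (with the base case $P^{\odot 0} = \mathbf{1}\mathbf{1}^T \succeq 0$), and a limit of p.s.d.\ matrices is p.s.d. For (ii) and (iii) I would factor $P = Q^T Q$ with $Q \in \mathbb{R}^{r\times n}$ ($r = \operatorname{rank} P$) and let $q_1,\ldots,q_n$ be the columns of $Q$. The identity
\begin{equation*}
e^{t\langle q_i,q_j\rangle} = e^{t\|q_i\|^2/2}\,e^{t\|q_j\|^2/2}\,e^{-t\|q_i-q_j\|^2/2}
\end{equation*}
rewrites $\hadexp(tP)$ as a positive diagonal congruence of the Gaussian kernel matrix $(K_t)_{ij} := e^{-t\|q_i-q_j\|^2/2}$. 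Invoking the classical fact that $K_t$ is strictly p.d.\ for $t>0$ whenever the $q_i$ are pairwise distinct, one obtains: $\hadexp(tM)$ is p.d.\ iff the $q_i$ are pairwise distinct, iff $P_{ii}+P_{jj}-2P_{ij}>0$ for all $i\neq j$, iff $M_{ii}+M_{jj}-2M_{ij}>0$ for all $i\neq j$, where the last equivalence uses that $b\mathbf{1}^T+\mathbf{1}b^T+c\mathbf{1}\mathbf{1}^T$ is annihilated when sandwiched between $e_i-e_j$ on both sides. This gives (iii) directly. For (ii), conditional p.d.\ of $M$ applied to $v = e_i - e_j \in \{\mathbf{1}\}^\perp$ yields the same pairwise strict inequality, hence $\hadexp(tM)$ is p.d.\ for all $t>0$.

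The only non-routine ingredient is the strict positive definiteness of the Gaussian kernel on distinct points, which I would cite as classical (provable via Bochner's theorem and the strict positivity of the Fourier transform of a Gaussian, or via the linear independence of the features $x \mapsto e^{i\langle \omega, x\rangle}$). The one point that requires a moment of care is checking that the strict pairwise condition on $M$ and on $P$ really coincide through the decomposition; this is transparent because the rank-two perturbation $b\mathbf{1}^T + \mathbf{1}b^T + c\mathbf{1}\mathbf{1}^T$ vanishes against $e_i - e_j$ on both sides.
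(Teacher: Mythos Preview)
Your proof is correct. The paper does not actually prove this theorem: it simply cites \cite[Lemma 2.5]{REAMS1999} for parts (i) and (iii) and remarks that (ii) follows from (iii) (by testing conditional p.d.\ against $v=e_i-e_j$, exactly as you do). So you have supplied a self-contained argument where the paper only supplies a reference.

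Your route---the rank-two correction $M=P+b\mathbf{1}^T+\mathbf{1}b^T+c\mathbf{1}\mathbf{1}^T$ with $P=J_\perp M J_\perp\succeq 0$, the multiplicative factorization $\hadexp(tM)=e^{tc}D_t\,\hadexp(tP)\,D_t$, the Schur product theorem for (i), and the reduction to the Gaussian kernel on the columns of a square root of $P$ for (ii)--(iii)---is essentially the classical proof behind the cited result, and every step checks out. The transfer of the strict pairwise inequality between $M$ and $P$ via $(e_i-e_j)\perp\mathbf{1}$ is clean, and for the ``only if'' direction of (iii) your argument correctly shows that $q_i=q_j$ forces two rows of $\hadexp(tP)$ to coincide, hence singularity. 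The one black box you invoke, strict positive definiteness of the Gaussian kernel on distinct points, is indeed standard and appropriate to cite.
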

 Theorem \ref{thm:condp_to_exppd} parts $(i)$ and $(iii)$ are part of \cite[Lemma 2.5]{REAMS1999}. Notice that part $(ii)$ follows from part $(iii)$. 
\begin{lemma}\label{lem:hadprod_pd_psd}
    Let $A,B\in \matR^{n\times n}$ be symmetric matrices. If $A$ is p.s.d with nonzero diagonal entries, and $B$ is p.d., then $A\odot B$ is p.d.
\end{lemma}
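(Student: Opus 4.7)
The plan is to use the spectral decomposition of $A$ to reduce the quadratic form of $A\odot B$ to a weighted sum of quadratic forms of $B$ evaluated at scaled copies of the test vector, and then exploit the positive definiteness of $B$ together with the nonzero diagonal of $A$ to rule out degeneracies.

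First, since $A$ is symmetric and p.s.d., write the spectral decomposition $A=\sum_{k=1}^{n}\lambda_k u_k u_k^T$ with $\lambda_k\geq 0$ and orthonormal eigenvectors $u_1,\ldots,u_n$. Then for any $x\in\matR^n$, a direct expansion gives
\begin{equation*}
x^T(A\odot B)x=\sum_{i,j}x_i x_j A_{ij}B_{ij}=\sum_{k=1}^n\lambda_k\sum_{i,j}(u_k)_i x_i\, B_{ij}\, (u_k)_j x_j=\sum_{k=1}^n\lambda_k (D_{u_k}x)^T B(D_{u_k}x),
\end{equation*}
where $D_{u_k}:=\operatorname{diag}(u_k)$. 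Since $B$ is p.d.\ and each $\lambda_k\geq 0$, every term on the right is nonnegative, proving that $A\odot B$ is at least p.s.d.

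Next I would handle the case of equality. Suppose $x\neq 0$ but $x^T(A\odot B)x=0$. Then for each $k$ with $\lambda_k>0$ one must have $(D_{u_k}x)^T B(D_{u_k}x)=0$, and since $B$ is p.d.\ this forces $D_{u_k}x=0$, i.e.\ $(u_k)_i x_i=0$ for every $i$. Pick any index $i_0$ with $x_{i_0}\neq 0$; then $(u_k)_{i_0}=0$ for all $k$ with $\lambda_k>0$. But this contradicts
\begin{equation*}
A_{i_0 i_0}=\sum_{k=1}^n \lambda_k (u_k)_{i_0}^2>0,
\end{equation*}
which holds by the hypothesis that the diagonal entries of $A$ are nonzero (and hence strictly positive, since $A$ is p.s.d.). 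This contradiction shows $x^T(A\odot B)x>0$ for all $x\neq 0$, completing the proof.

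The main (mild) obstacle is simply making sure one cleanly uses both hypotheses on $A$: p.s.d.\ to obtain the spectral decomposition with nonnegative weights, and nonzero diagonal to ensure that at each coordinate some eigenvector with positive weight is nonvanishing. Everything else is a straightforward rewriting via the identity displayed above, which is the standard workhorse behind the Schur product theorem.
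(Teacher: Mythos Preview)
Your proof is correct. It differs from the paper's argument: the paper splits $B=(B-\lambda_{\min}(B)\id)+\lambda_{\min}(B)\id$, applies the Schur product theorem to the p.s.d.\ pair $A$ and $B-\lambda_{\min}(B)\id$, and concludes $A\odot B\succcurlyeq \lambda_{\min}(B)\,\diag(A_{11},\ldots,A_{nn})\succ 0$. Your approach instead spectrally decomposes $A$ and expands the quadratic form directly, essentially rederiving the Schur identity and then arguing strict positivity by contradiction via the nonzero-diagonal hypothesis. The paper's route is shorter and quotes Schur as a black box; yours is more self-contained and makes explicit where each hypothesis enters, at the cost of a few more lines. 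Both are clean; neither has a gap.
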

\begin{proof}
    Denote $\lambda_{\operatorname{min}}(B)$ to be the smallest eigenvalue of $B$ (which is strictly positive by assumption). Then by the Schur product theorem we have that 
    \begin{align*}
        A\odot(B-\lambda_{\operatorname{min}}(B)\id)\succcurlyeq 0
      \iff  A\odot B \succcurlyeq\lambda_{\operatorname{min}}(B)\diag(A_{11},A_{22}.\ldots,A_{nn})\id.
    \end{align*}
    Since $\lambda_{\operatorname{min}}(B)>0$ and the entries $A_{11},\ldots,A_{nn}$ are nonzero, we conclude that $A\odot B\succ 0$.
\end{proof}
\begin{lemma}\label{lem:integral_psd}
    Let $M(t)\in \matR^{n\times n}$ be a symmetric matrix which is also integrable w.r.t the Lebesgue measure over the positive reals. 
    If $M(t)$ is p.s.d (resp. p.d) for $t>0$, then the matrix $\int^\infty_0M(t)dt$ is p.s.d (resp. p.d).
\end{lemma}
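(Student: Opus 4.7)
The plan is to reduce the statement to a one-dimensional fact about integrals of nonnegative (resp. positive) scalar functions, via testing against an arbitrary vector $v \in \matR^n$. Concretely, for any fixed $v$, I would define the scalar function $f_v(t) := v^T M(t) v$ for $t > 0$. Since $M(t)$ is integrable entrywise, $f_v$ is integrable on $(0,\infty)$, and by linearity of the integral (applied entrywise, then contracted with $v$) one obtains the key identity
\begin{equation*}
v^T \left(\int_0^\infty M(t)\, dt\right) v \;=\; \int_0^\infty f_v(t)\, dt.
\end{equation*}
Note also that $\int_0^\infty M(t)\, dt$ is symmetric, because the symmetry property $M(t) = M(t)^T$ is preserved under entrywise Lebesgue integration.

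For the p.s.d.\ case, the hypothesis gives $f_v(t) \geq 0$ for every $t > 0$ and every $v$. Hence the right-hand side above is the integral of a nonnegative measurable function, which is nonnegative, establishing that $\int_0^\infty M(t)\, dt$ is p.s.d.

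For the p.d.\ case, fix $v \neq 0$. Then $f_v(t) > 0$ for every $t > 0$, so the task is to pass from pointwise positivity to strict positivity of the integral. This is the only mildly delicate step: one writes $(0,\infty) = \bigcup_{k \geq 1}\{t > 0 : f_v(t) > 1/k\} \cap (0,k)$, so by countable subadditivity at least one of these sets has positive Lebesgue measure, which gives $\int_0^\infty f_v(t)\, dt > 0$ by monotonicity of the integral. Combining with the identity above concludes $v^T \left(\int_0^\infty M(t)\, dt\right) v > 0$ for all $v \neq 0$, so $\int_0^\infty M(t)\, dt$ is p.d.

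I do not expect any serious obstacle here; the only point requiring minor care is the strict-positivity argument in the p.d.\ case, where one must avoid the trap of concluding strict positivity of an integral solely from pointwise strict positivity of a non-measurable or non-integrable integrand. In our setting measurability and integrability are guaranteed by hypothesis, so the standard measure-theoretic argument above applies directly.
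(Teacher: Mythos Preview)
Your proposal is correct and follows essentially the same approach as the paper: test against an arbitrary vector $v$, use the identity $v^T\left(\int_0^\infty M(t)\,dt\right)v = \int_0^\infty v^T M(t) v\,dt$, and conclude from nonnegativity (resp.\ positivity) of the integrand. The paper simply asserts that the conclusion ``follows directly'' from this identity, whereas you supply the explicit measure-theoretic argument for strict positivity in the p.d.\ case, which is a welcome addition but not a different route.
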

\begin{proof}
    For any $v\in\matR^n$ we have from the integrability of $M(t)$ that
    \begin{equation*}
        v^T\left(\int^\infty_0 M(t)dt\right)v=\int^\infty_0 v^TM(t)vdt
    \end{equation*}
    from which the statement follows directly.
\end{proof}
%
%
\end{document}